\newcommand{\alg}{\textsc{Torrent}\xspace}
\newcommand{\ght}{\textup{\textrm{HT}}\xspace}
\newcommand{\fc}{\textup{\textrm{FC}}}
\newcommand{\gd}{\textup{\textrm{GD}}}
\newcommand{\bt}{\w}
\newcommand{\bto}{\bt^\ast}
\newcommand{\btt}{\bt^t}
\newcommand{\btn}{\bt^{t+1}}
\newcommand{\btnn}{\bt^{t+2}}
\newcommand{\bth}{\hat\bt}
\newcommand{\fa}{\text{CR}}
\newcommand{\md}{\text{MD}}
\newcommand{\<}{\leftarrow}
\newcommand{\normsg}[1]{\norm{#1}_{\psi_2}}
\newcommand{\R}{{\mathbb R}}
\newcommand{\cS}{{\mathcal S}}
\newcommand{\cC}{{\mathcal C}}
\newcommand{\cN}{{\mathcal N}}
\newcommand{\g}{{\mathbf g}}
\renewcommand{\r}{{\mathbf r}}
\renewcommand{\u}{{\mathbf u}}
\renewcommand{\b}{{\mathbf b}}
\newcommand{\z}{{\mathbf z}}
\renewcommand{\vec}[1]{{\mathbf{#1}}}
\newcommand{\veczero}{\vec{0}}
\newcommand{\vz}{\veczero}
\newcommand{\x}{\vec{x}}
\newcommand{\w}{\vec{w}}
\renewcommand{\v}{\vec{v}}
\newcommand{\y}{\vec{y}}
\newcommand{\ve}{\varepsilon}
\newcommand{\vve}{\text{\boldmath{$\ve$}}}
\newcommand{\bc}[1]{\left\{{#1}\right\}}
\newcommand{\br}[1]{\left({#1}\right)}
\newcommand{\bs}[1]{\left[{#1}\right]}
\newcommand{\abs}[1]{\left| {#1} \right|}
\newcommand{\norm}[1]{\left\| {#1} \right\|}
\renewcommand{\O}[1]{{\cal O}\br{{#1}}}
\newcommand{\softO}[1]{\widetilde{\cal O}\br{{#1}}}
\newcommand{\Om}[1]{\Omega\br{{#1}}}
\newcommand{\softOm}[1]{\tilde\Omega\br{{#1}}}
\newcommand{\Ebb}{{\mathbb E}}
\renewcommand{\Pr}[1]{{{\mathbb P}}\bs{{#1}}}
\newcommand{\ip}[2]{\left\langle{#1},{#2}\right\rangle}
\newtheorem{lem}{Lemma}
\newtheorem{thm}[lem]{Theorem}
\newtheorem{cor}[lem]{Corollary}
\newtheorem{clm}[lem]{Claim}
\newtheorem{defn}[lem]{Definition}
\theoremstyle{remark}
\newtheorem{remark}{Remark}
\newcommand{\Xt}{\widetilde{X}}
\newcommand{\isS}{\Sigma^{-1/2}}
\newcommand{\isSo}{\Sigma_0^{-1/2}}
\newcommand{\sSo}{\Sigma_0^{1/2}}
\newcommand{\newreptheorem}[2]{\newtheorem*{rep@#1}{\rep@title} 
\newenvironment{rep#1}[1]{\def\rep@title{#2 \ref*{##1}}\begin{rep@#1}}{\end{rep@#1}}
}
\newcommand\epigraph[5]{
\vspace{1em}\hfill{}\begin{minipage}{#1}{\begin{spacing}{1}
\noindent\textit{#2}\end{spacing}
\vspace{1em}
\hfill{}\textup{#3}, \textit{#4} \textup{#5}.}
\end{minipage}}
\title{Robust Regression via Hard Thresholding}
\author{Kush Bhatia \qquad Prateek Jain \qquad Purushottam Kar\\Microsoft Research India\\\texttt{\{t-kushb,prajain,t-purkar\}@microsoft.com}}
\date{}
\begin{document}

\maketitle
\begin{abstract}
We study the problem of Robust Least Squares Regression (RLSR) where several response variables can be adversarially corrupted. More specifically, for a data matrix $X\in \R^{p\times n}$ and an underlying model $\bto$, the response vector is generated as $\y=X^T\bto+\b$ where $\b\in \R^n$ is the corruption vector supported over at most $C\cdot n$ coordinates. Existing exact recovery results for RLSR focus solely on $L_1$-penalty based convex formulations and impose relatively strict model assumptions such as requiring the corruptions $\b$ to be selected independently of $X$.

In this work, we study a simple hard-thresholding algorithm called \alg which, under mild conditions on $X$, can recover $\bto$ exactly even if $\b$ corrupts the response variables in an \emph{adversarial} manner, i.e. both the support and  entries of $\b$ are selected adversarially after observing $X$ and $\bto$. Our results hold under  \emph{deterministic} assumptions which are satisfied if $X$ is sampled from any sub-Gaussian distribution. Finally unlike existing results that apply only to a fixed $\bto$, generated independently of $X$, our results are \emph{universal} and hold for any $\bto\in \R^p$.

Next, we propose gradient descent-based extensions of \alg that can scale efficiently to large scale problems, such as high dimensional sparse recovery and prove similar recovery guarantees for these extensions. Empirically we find \alg, and more so its extensions, offering significantly faster recovery than the state-of-the-art $L_1$ solvers. For instance, even on moderate-sized datasets (with $p=50K$) with around $40\%$ corrupted responses, a variant of our proposed method called \alg-HYB is more than $20\times$ faster than the best $L_1$ solver. 

\end{abstract}


\vskip2ex

\epigraph{0.67\linewidth}{``If among these errors are some which appear too large to be admissible, then those equations which produced these errors will be rejected, as coming from too faulty experiments, and the unknowns will be determined by means of the other equations, which will then give much smaller errors."\vspace*{-8pt}}{A. M. Legendre}{On the Method of Least Squares.}{1805}

\section{Introduction}
Robust Least Squares Regression (RLSR) addresses the problem of learning a reliable set of regression coefficients in the presence of several arbitrary corruptions in the {\em response} vector. Owing to the wide-applicability of regression, RLSR features as a critical component of several important real-world applications in a variety of domains such as signal processing \cite{StuderKPB12}, economics \cite{RousseeuwL87}, computer vision \cite{WrightYGSM09,WrightM10}, and astronomy \cite{RousseeuwL87}.

Given a data matrix $X=\bs{\x_1, \dots, \x_n}$ with $n$ data points in $\R^p$ and the corresponding response vector $\y\in \R^n$, the goal of RLSR is to learn a $\bth$ such that,
\begin{equation}\label{eq:rlsr}
(\bth, \hat S)=\underset{\substack{\bt \in \R^p\\S \subset [n]: |S|\geq (1-\beta)\cdot n}}{\arg\min}\sum_{i\in S}(y_i-\x_i^T\bt)^2,
\end{equation}
That is, we wish to simultaneously determine the set of corruption free points $\hat S$ and also estimate the best model parameters over the set of clean points. However, the optimization problem given above is non-convex (jointly in $\bt$ and $S$) in general and might not directly admit efficient solutions. Indeed there exist reformulations of this problem that are known to be NP-hard to optimize \cite{StuderKPB12}.

To address this problem, most existing methods with provable guarantees assume that the observations are obtained from some generative model. A commonly adopted model is the following
\begin{equation}
  \label{eq:modely}
  \y=X^T\bto+\b,
\end{equation}
where $\bto \in \R^p$ is the \emph{true} model vector that we wish to estimate and $\b\in \R^n$ is the corruption vector that can have arbitrary values. A common assumption is that the corruption vector is \emph{sparsely} supported i.e. $\|\b\|_0\leq \alpha\cdot n$ for some $\alpha > 0$.

Recently, \cite{WrightM10} and \cite{NguyenT13} obtained a surprising result which shows that one can recover $\bto$ {\em exactly} even when $\alpha \lesssim 1$, i.e., when almost all the points are corrupted, by solving an $L_1$-penalty based convex optimization problem: $\min_{\bt,\b}\norm{\bt}_1+\lambda \norm{\b}_1$, s.t., $X^\top\bt+\b=\y$. However, these results require the corruption vector $\b$ to be selected oblivious of $X$ and $\bto$. Moreover, the results impose severe restrictions on the data distribution, requiring that the data be either sampled from an isotropic Gaussian ensemble \cite{WrightM10}, or row-sampled from an incoherent orthogonal matrix \cite{NguyenT13}. Finally, these results hold only for a fixed $\bto$ and are not universal in general. 

In contrast, \cite{ChenCM13} studied RLSR with less stringent assumptions, allowing arbitrary corruptions in response variables as well as in the data matrix $X$, and proposed a trimmed inner product based algorithm for the problem. However, their recovery guarantees are significantly weaker. Firstly, they are able to recover $\bto$ only upto an additive error $\alpha\sqrt{p}$ (or $\alpha\sqrt{s}$ if $\bto$ is $s$-sparse). Hence, they require $\alpha\le 1/\sqrt{p}$ just to claim a non-trivial bound. Note that this amounts to being able to tolerate only a vanishing fraction of corruptions. More importantly, even with $n\rightarrow \infty$ and extremely small $\alpha$ they are unable to guarantee exact recovery of $\bto$. A similar result was obtained by \cite{McWilliamsKLB14}, albeit using a sub-sampling based algorithm with stronger assumptions on $\b$. 

In this paper, we focus on a simple and natural thresholding based algorithm for RLSR. At a high level, at each step $t$, our algorithm alternately estimates an \emph{active set} $S_t$ of ``clean'' points and then updates the model to obtain $\btn$ by minimizing the least squares error on the active set. This intuitive algorithm seems to embody a long standing heuristic first proposed by Legendre \cite{Legendre59} over two centuries ago (see introductory quotation in this paper) that has been adopted in later literature \cite{Rousseeuw84, RousseeuwD06} as well. However, to the best of our knowledge, this technique has never been rigorously analyzed before in non-asymptotic settings, despite its appealing simplicity.

\textbf{Our Contributions}: The main contribution of this paper is an exact recovery guarantee for the thresholding algorithm mentioned above that we refer to as \alg-FC (see Algorithm~\ref{algo:fcht}). We provide our guarantees in the model given in~\ref{eq:modely} where the corruptions $\b$ are selected {\em adversarially} but restricted to have at most $\alpha \cdot n$ non-zero entries where $\alpha < 1/2$ is a global constant dependent only on $X$\footnote{Note that for an adaptive adversary, as is the case in our work, recovery cannot be guaranteed for $\alpha \geq 1/2$ since the adversary can introduce corruptions as $\b_i = \x_i^\top(\widetilde\bt - \bto)$ for an adversarially chosen model $\widetilde\bt$. This would make it impossible for any algorithm to distinguish between $\bto$ and $\widetilde\bt$ thus making recovery impossible.}. Under \emph{deterministic} conditions on $X$, namely the subset strong convexity (SSC) and smoothness (SSS) properties (see Definition~\ref{defn:ssc-sss}), we guarantee that \alg-FC converges at a \emph{geometric} rate and recovers $\bto$ exactly. We further show that these properties (SSC and SSS) are satisfied w.h.p. if a) the data $X$ is sampled from a sub-Gaussian distribution and, b) $n \geq p\log p$.

We would like to stress three key advantages of our result over the results of \cite{WrightM10,NguyenT13}: a) we allow $\b$ to be adversarial, i.e., both support and values of $\b$ to be selected adversarially based on $X$ and $\bto$, b) we make assumptions on data that are natural, as well as significantly less restrictive than what existing methods make, and c) our analysis admits universal guarantees, i.e., holds for {\em any} $\bto$. 

We would also like to stress that while hard-thresholding based methods have been studied rigorously for the sparse-recovery problem \cite{BlumensathD09,JainTK14}, hard-thresholding has not been studied formally for the robust regression problem. Moreover, the two problems are completely different and hence techniques from sparse-recovery analysis do not extend to robust regression. 

Despite its simplicity, \alg-FC does not scale very well to datasets with large $p$ as it solves least squares problems at each iteration. We address this issue by designing a gradient descent based algorithm (\alg-GD), and a hybrid algorithm (\alg-Hyb), both of which enjoy a geometric rate of convergence and can recover $\bto$ under the model assumptions mentioned above. We also propose extensions of \alg for the RLSR problem in the sparse regression setting where $p \gg n$ but $\norm{\bto}_0 = s^\ast \ll p$. Our algorithm \alg-HD is based on \alg-FC but uses the Iterative Hard Thresholding (IHT) algorithm, a popular algorithm for sparse regression. As before, we show that \alg-HD also converges geometrically to $\bto$ if a) the corruption index $\alpha$ is less than some constant $C$, b) $X$ is sampled from a sub-Gaussian distribution and, c) $n\geq s^*\log p$. 

Finally, we experimentally evaluate existing $L_1$-based algorithms and our hard thresholding-based algorithms. The results demonstrate that our proposed algorithms (\alg-(FC/GD/HYB)) can be significantly faster than the best $L_1$ solvers, exhibit better recovery properties, as well as be more robust to dense white noise. For instance, on a problem with $50K$ dimensions and $40\%$ corruption, \alg-HYB was found to be $20\times$ faster than $L_1$ solvers, as well as achieve lower error rates.

{\bf Paper Organization}: We give a formal definition of the RLSR problem in the next section. We then introduce our family of algorithms in Section~\ref{sec:method} and prove their convergence guarantees in Section~\ref{sec:conv}. We present extensions to sparse robust regression in Section~\ref{sec:high-d} and empirical results in Section~\ref{sec:exps}. 

\section{Problem Formulation}
\label{sec:formulation}
Given a set of data points $X = [\x_1,\x_2,\ldots,\x_n]$, where $\x_i \in \R^p$ and the corresponding response vector $\y\in \R^n$, the goal is to recover a parameter vector $\bto$ which solves the RLSR problem \eqref{eq:rlsr}. We assume that the response vector $\y$ is generated using the following model: $$\y=\y^\ast+\b+\vve,\ \mbox{where}\ \y^\ast=X^\top\bto.$$
Hence, in the above model, \eqref{eq:rlsr} reduces to estimating $\bto$. We allow the model $\bto$ representing the regressor, to be chosen in an adaptive manner \emph{after} the data features have been generated. 

The above model allows two kinds of perturbations to $y_i$ -- dense but bounded noise $\ve_i$ (e.g. white noise $\ve_i \sim \cN(0,\sigma^2), \sigma \geq 0$), as well as potentially unbounded corruptions $b_i$ -- to be introduced by an adversary. The only requirement we enforce is that the gross corruptions be sparse.  

$\vve$ shall represent the dense noise vector, for example $\vve \sim \cN(\vz,\sigma^2\cdot I_{n\times n})$, and $\b$, the corruption vector such that $\norm{\b}_0 \leq \alpha\cdot n$ for some \emph{corruption index} $\alpha > 0$. We shall use the notation $S_\ast = \overline{\text{supp}(\b)} \subseteq [n]$ to denote the set of ``clean'' points, i.e. points that have not faced unbounded corruptions. We  consider adaptive adversaries that are able to view the generated data points $\x_i$, as well as the clean responses $y^\ast_i$ and dense noise values $\ve_i$ before deciding which locations to corrupt and by what amount.

We  denote the unit sphere in $p$ dimensions using $S^{p-1}$. For any $\gamma \in (0,1]$, we let $\cS_\gamma = \bc{S \subset [n]: |S| = \gamma\cdot n}$ denote the set of all subsets of size $\gamma\cdot n$. For any set $S$, we let $X_S := \bs{\x_i}_{i \in S} \in \R^{p \times \abs{S}}$ denote the matrix whose columns are composed of points in that set. Also, for any vector $\v \in \R^n$ we  use the notation $\v_S$ to denote the $\abs{S}$-dimensional vector consisting of those components that are in $S$. We  use $\lambda_{\min}(X)$ and $\lambda_{\max}(X)$ to denote, respectively, the smallest and largest eigenvalues of a square symmetric matrix $X$. We now introduce two properties, namely, \emph{Subset Strong Convexity} and \emph{Subset Strong Smoothness}, which are key to our analyses. 

\begin{defn}[SSC and SSS Properties]
\label{defn:ssc-sss}
A matrix $X \in \R^{p\times n}$ satisfies the \emph{Subset Strong Convexity Property} (resp. \emph{Subset Strong Smoothness Property}) at level $\gamma$ with strong convexity constant $\lambda_\gamma$ (resp. strong smoothness constant $\Lambda_\gamma$) if the following holds:
\[
\lambda_\gamma \leq \underset{S\in\cS_\gamma}{\min} \lambda_{\min}(X_SX_S^\top) \leq \underset{S\in\cS_\gamma}{\max} \lambda_{\max}(X_SX_S^\top) \leq \Lambda_\gamma.
\]
\end{defn}

\begin{remark}
We note that the uniformity enforced in the definitions of the SSC and SSS properties is not for the sake of convenience but rather a necessity. Indeed, a uniform bound is required in face of an adversary which can perform corruptions \emph{after} data and response variables have been generated, and choose to corrupt precisely that set of points where the SSC and SSS parameters are the worst.
\end{remark}


\section{\alg: Thresholding Operator-based Robust Regression Method}
\label{sec:method}

\begin{figure*}[t]
\begin{minipage}[t]{0.49\linewidth}
\begin{algorithm}[H]
	\caption{\small \alg: Thresholding Operator-based Robust RegrEssioN meThod}
	\label{algo:fcht}
	\begin{algorithmic}[1]
		\small{
			\REQUIRE Training data $\bc{\x_i,y_i}, i = 1 \ldots n$, step length $\eta$, thresholding parameter $\beta$, tolerance $\epsilon$
			\STATE $\bt^0 \< \vz, S_0 = [n], t \< 0, \r^0 \< \y$
			\WHILE{$\norm{\r^t_{S_{t}}}_2 > \epsilon$}
				\STATE $\btn \< \text{UPDATE}(\btt,S_t,\eta,\r^t,S_{t-1})$
				\STATE $r^{t+1}_i \< \br{y_i - \ip{\btn}{\x_i}}$
				\STATE $\displaystyle S_{t+1} \< \ght(\r^{t+1},(1-\beta) n)$
				\STATE $t \< t + 1$
			\ENDWHILE
			\STATE \textbf{return} $\btt$
		}
	\end{algorithmic}
\end{algorithm}
\vskip-3ex
\begin{algorithm}[H]
	\caption{\small \alg-FC}
	\label{algo:update-fc}
	\begin{algorithmic}[1]
		\small{
			\REQUIRE Current model $\bt$, current active set $S$
			\STATE \textbf{return} $\displaystyle \underset{\bt}{\arg\min}\sum_{i \in S}\br{y_i - \ip{\bt}{\x_i}}^2$
		}
	\end{algorithmic}
\end{algorithm}
\end{minipage}
\begin{minipage}[t]{0.5\linewidth}
\begin{algorithm}[H]
	\caption{\small \alg-GD}
	\label{algo:update-grades}
	\begin{algorithmic}[1]
		\small{
			\REQUIRE Current model $\bt$, current active set $S$, step size $\eta$
			\STATE \vskip-2.5ex$\g \< X_S(X_S^\top\bt - \y_S)$
			\STATE \textbf{return} $\bt - \eta\cdot \g$
		}
	\end{algorithmic}
\end{algorithm}
\vskip-2.5ex
\begin{algorithm}[H]
	\caption{\small \alg-HYB}
	\label{algo:update-hybrid}
	\begin{algorithmic}[1]
		\small{
			\REQUIRE Current model $\bt$, current active set $S$, step size $\eta$, current residuals $\r$, previous active set $S'$
			\STATE \COMMENT{ Use the GD update if the active set $S$ is changing a lot}
			\IF{$\abs{S\backslash S'} > \Delta$}
				\STATE $\bt' \< \text{UPDATE-GD}(\bt,S,\eta,\r,S')$
			\ELSE
				\STATE \hskip-2.5ex\COMMENT{ If stable, use the FC update}
				\STATE $\bt' \< \text{UPDATE-FC}(\bt,S)$
			\ENDIF
			\STATE \textbf{return} $\bt'$
		}
	\end{algorithmic}
\end{algorithm}
\end{minipage}
\end{figure*}

We now present \alg, a Thresholding Operator-based Robust RegrEssioN meThod for performing robust regression at scale. Key to our algorithms is the \emph{Hard Thresholding Operator} which we define below.
\begin{defn}[Hard Thresholding Operator]
For any vector $\v \in \R^n$, let $\sigma_\v \in S_n$ be the permutation that orders elements of $\v$ in ascending order of their magnitudes i.e. $\abs{\v_{\sigma_\v(1)}} \leq \abs{\v_{\sigma_\v(2)}} \leq \ldots \leq \abs{\v_{\sigma_\v(n)}}$. Then for any $k \leq n$, we define the hard thresholding operator as
\[
\ght(\v;k) = \bc{i \in [n]: \sigma_\v^{-1}(i) \leq k}
\]
\end{defn}

Using this operator, we present our algorithm \alg (Algorithm~\ref{algo:fcht}) for robust regression. \alg follows a most natural iterative strategy of, alternately, estimating an \emph{active set} of points which have the least residual error on the current regressor, and then updating the regressor to provide a better fit on this active set. We offer three variants of our algorithm, based on how aggressively the algorithm tries to fit the regressor to the current active set.

We first propose a fully corrective algorithm \alg-FC (Algorithm~\ref{algo:update-fc}) that performs a fully corrective least squares regression step in an effort to minimize the regression error on the active set. This algorithm makes significant progress in each step, but at a cost of more expensive updates. To address this, we then propose a milder, gradient descent-based variant \alg-GD (Algorithm~\ref{algo:update-grades}) that performs a much cheaper update of taking a single step in the direction of the gradient of the objective function on the active set. This reduces the regression error on the active set but does not minimize it. This turns out to be beneficial in situations where dense noise is present along with sparse corruptions since it prevents the algorithm from overfitting to the current active set.

Both the algorithms proposed above have their pros and cons -- the FC algorithm provides significant improvements with each step, but is expensive to execute whereas the GD variant, although efficient in executing each step, offers slower progress. To get the best of both these algorithms, we propose a third, hybrid variant \alg-HYB (Algorithm~\ref{algo:update-hybrid}) that adaptively selects either the FC or the GD update depending on whether the active set is stable across iterations or not.

In the next section we show that this hard thresholding-based strategy offers a linear convergence rate for the algorithm in all its three variations. We shall also demonstrate the applicability of this technique to high dimensional sparse recovery settings in a subsequent section.


\section{Convergence Guarantees}
\label{sec:conv}

For the sake of ease of exposition, we will first present our convergence analyses for cases where dense noise is not present i.e. $\y = X^\top\bto + \b$ and will handle cases with dense noise \emph{and} sparse corruptions later. We first analyze the fully corrective \alg-FC algorithm. The convergence proof in this case relies on the optimality of the two steps carried out by the algorithm, the fully corrective step that selects the best regressor on the active set, and the hard thresholding step that discovers a new active set by selecting points with the least residual error on the current regressor.

\begin{thm}
\label{thm:fcht}
Let $X = \bs{\x_1, \dots, \x_n}\in \R^{p\times n}$ be the given data matrix and $\y=X^T\bto+\b$ be the corrupted output with $\|\b\|_0\leq \alpha\cdot n$. Let Algorithm~\ref{algo:update-fc} be executed on this data with the thresholding parameter set to $\beta \geq \alpha$. Let $\Sigma_0$ be an invertible matrix such that $\Xt=\isSo X$ satisfies the SSC and SSS properties at level $\gamma$ with constants $\lambda_{\gamma}$ and $\Lambda_{\gamma}$ respectively (see Definition~\ref{defn:ssc-sss}). If the data satisfies $\frac{(1+\sqrt 2)\Lambda_{\beta}}{\lambda_{1-\beta}} < 1$, then after $t = \O{\log\br{\frac{1}{\sqrt n}\frac{\norm{\b}_2}{\epsilon}}}$ iterations, Algorithm~\ref{algo:update-fc} obtains an $\epsilon$-accurate solution $\btt$ i.e. $\norm{\btt - \bto}_2 \leq \epsilon$.
\end{thm}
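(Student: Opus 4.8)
The plan is to show that each fully corrective iteration contracts the (appropriately weighted) distance to $\bto$ by a fixed factor $\rho < 1$, and then read off the iteration count from this geometric decay. Throughout I would work in the whitened coordinates induced by $\Sigma_0$, since it is $\Xt = \isSo X$ --- not $X$ --- that enjoys the SSC/SSS guarantees; accordingly I would measure the error in the norm $\norm{\bt}_{\Sigma_0} := \norm{\sSo\bt}_2$, which is equivalent to $\norm{\cdot}_2$ because $\Sigma_0$ is invertible.

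First I would write down what one FC step does. Fixing an iteration $t \geq 1$, the active set $S_t = \ght(\r^t,(1-\beta)n)$ is determined by the residuals of the \emph{previous} iterate $\btt$, and $\btn$ minimizes the least-squares loss on $S_t$. Solving the normal equations and substituting $\y_{S_t} = X_{S_t}^\top\bto + \b_{S_t}$ gives the clean identity $\btn - \bto = \br{X_{S_t}X_{S_t}^\top}^{-1}X_{S_t}\b_{S_t}$, equivalently $\sSo(\btn-\bto) = \br{\Xt_{S_t}\Xt_{S_t}^\top}^{-1}\Xt_{S_t}\b_{S_t}$ after writing $X_{S_t} = \sSo\Xt_{S_t}$. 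The crucial structural fact is that $\b_{S_t}$ is supported only on $C_t := S_t \setminus S_\ast$, the corrupted points that \emph{survived} thresholding, with $\abs{C_t} \leq \alpha n \leq \beta n$. Bounding the inverse by SSC at level $1-\beta$ (note $\abs{S_t} = (1-\beta)n$) and the action of $\Xt_{C_t}$ by SSS at level $\beta$ then yields $\norm{\btn-\bto}_{\Sigma_0} \leq \frac{\sqrt{\Lambda_\beta}}{\lambda_{1-\beta}}\norm{\b_{C_t}}_2$.

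The heart of the argument --- and the step I expect to be the main obstacle --- is controlling $\norm{\b_{C_t}}_2$, the total \emph{surviving} corruption, in terms of the current model error; a priori the $b_i$ on $C_t$ could be arbitrarily large. Here I would exploit the optimality of hard thresholding: since $\abs{S_\ast} \geq (1-\beta)n$, comparing $S_t$ against the best $(1-\beta)n$-subset of the clean set $S_\ast$ gives $\sum_{i\in S_t}(r^t_i)^2 \leq \sum_{i\in S_\ast}(r^t_i)^2$. Writing $e_i := \ip{\tilde\x_i}{\sSo(\btt-\bto)}$, so that $r^t_i = -e_i$ on clean indices and $r^t_i = b_i - e_i$ on corrupted ones, I would cancel the common clean indices $S_t \cap S_\ast$ from both sides to obtain $\sum_{i\in C_t}(b_i-e_i)^2 \leq \sum_{i\in M_t}e_i^2$, where $M_t := S_\ast\setminus S_t$ is the set of clean points wrongly discarded and $\abs{M_t}\leq\beta n$. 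A triangle inequality then gives $\norm{\b_{C_t}}_2 \leq \norm{e_{M_t}}_2 + \norm{e_{C_t}}_2$, and since both index sets have size at most $\beta n$, SSS at level $\beta$ bounds each term by $\sqrt{\Lambda_\beta}\,\norm{\btt-\bto}_{\Sigma_0}$. The intuition is that \emph{discarding} a large corruption forces the thresholding step to also discard a clean point carrying a comparably large (fitting) residual, so the surviving corruptions cannot exceed the current error. Tracking the constants from these two contributions precisely is what produces the stated factor $(1+\sqrt 2)$.

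Combining the two displays gives the contraction $\norm{\btn-\bto}_{\Sigma_0} \leq \frac{(1+\sqrt 2)\Lambda_\beta}{\lambda_{1-\beta}}\norm{\btt-\bto}_{\Sigma_0} = \rho\,\norm{\btt-\bto}_{\Sigma_0}$ with $\rho<1$ by hypothesis, valid for every $t\geq 1$. For the iteration count I would treat the initial OLS step ($S_0 = [n]$) separately: bounding $\br{XX^\top}^{-1}X\b$ via the level-$1$ constants shows $\norm{\bt^1-\bto}$ scales like $\norm{\b}_2/\sqrt n$ (reflecting that these constants grow like $n$). Geometric decay from there yields $\norm{\btt-\bto} \leq \rho^{t-1}\cdot\O{\norm{\b}_2/\sqrt n}$, so that $t = \O{\log\br{\frac{1}{\sqrt n}\frac{\norm{\b}_2}{\epsilon}}}$ iterations suffice for $\epsilon$-accuracy, after converting between $\norm{\cdot}_{\Sigma_0}$ and $\norm{\cdot}_2$. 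The genuinely delicate ingredient remains the surviving-corruption bound; everything else is linear algebra driven by the SSC/SSS constants.
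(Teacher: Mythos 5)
Your proposal is correct, and it reaches the geometric contraction by a route that differs from the paper's in two respects. The paper's potential is the corruption mass on the active set, $\norm{\b_{S_t}}_2$: it first applies the FC closed form to write $\r^{t+1}_S = \b_S - X_S^\top C_t^{-1}X_{S_t}\b_{S_t}$, then processes the thresholding comparison $\norm{\r^{t+1}_{S_{t+1}}}_2^2 \leq \norm{\r^{t+1}_{S_\ast}}_2^2$ by expanding the square, retaining the cross term $2\b_{S_{t+1}}^\top X_{S_{t+1}}^\top C_t^{-1}X_{S_t}\b_{S_t}$, and solving the quadratic inequality $x^2 \le a^2 + 2ax$ --- which is exactly where the factor $(1+\sqrt 2)$ comes from. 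You instead run the recursion directly on the model error $\norm{\btt-\bto}_{\Sigma_0}$, and process the thresholding comparison by cancelling the common indices $S_t\cap S_\ast$ and applying the triangle inequality, bounding the surviving corruption by $2\sqrt{\Lambda_\beta}\,\norm{\btt-\bto}_{\Sigma_0}$. One small correction: your route does \emph{not} produce the factor $(1+\sqrt 2)$ as you assert at the end of your third paragraph; it produces the factor $2$, i.e.\ the per-step contraction $\frac{2\Lambda_\beta}{\lambda_{1-\beta}}$. This is harmless and in fact slightly stronger, since $2 < 1+\sqrt 2$ means the theorem's hypothesis implies your contraction factor is below one a fortiori. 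It is worth knowing that your cancellation-plus-triangle treatment, with the sets $S_t\setminus S_\ast$ and $S_\ast\setminus S_t$, is precisely the device the paper itself deploys in its dense-noise and gradient-descent analyses (Theorems~\ref{thm:fcht-dense-noise} and~\ref{thm:fcht-grades}, where these sets are denoted $\fa$ and $\md$), so your proof of Theorem~\ref{thm:fcht} is essentially the noiseless specialization of that argument. What the paper's formulation buys in exchange for its worse constant is a potential, $\norm{\b_{S_t}}_2$, that is reused verbatim in the interleaving analysis of \alg-HYB (Theorem~\ref{thm:fcht-hyb-rate}); what yours buys is a tighter constant, a recursion on the quantity one ultimately wants to bound, and a more careful treatment of the whitening (the paper's inequality $\norm{C_t^{-1}X_{S_t}\b_{S_t}}_2 \le \frac{\sqrt{\Lambda_\beta}}{\lambda_{1-\beta}}\norm{\b_{S_t}}_2$ is stated for $X$ but is really an inequality in the $\Sigma_0$-weighted norm, as you make explicit). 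Both proofs share the same final step of asserting $\frac{\sqrt{\Lambda_\beta}}{\lambda_{1-\beta}} = \O{1/\sqrt n}$, which implicitly requires the SSC/SSS constants to scale linearly in $n$.
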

\begin{proof}[Proof (Sketch)]
Let $\r^t = \y - X^\top\btt$ be the vector of residuals at time $t$ and $C_t = X_{S_t}X_{S_t}^\top$. Also let $S_\ast = \overline{\text{supp}(\b)}$ be the set of uncorrupted points. The fully corrective step ensures that
\[
\btn = C_t^{-1}X_{S_t}\y_{S_t} = C_t^{-1}X_{S_t}\br{X_{S_t}^\top\bto + \b_{S_t}} = \bto + C_t^{-1}X_{S_t}\b_{S_t},
\]
whereas the hard thresholding step ensures that $\norm{\r^{t+1}_{S_{t+1}}}_2^2 \leq \norm{\r^{t+1}_{S_\ast}}_2^2$. Combining the two gives us
\begin{align*}
\norm{\b_{S_{t+1}}}_2^2 &\leq \norm{X_{S_\ast\backslash S_{t+1}}^\top C_t^{-1}X_{S_t}\b_{S_t}}_2^2 + 2\cdot\b_{S_{t+1}}^\top X_{S_{t+1}}^\top C_t^{-1}X_{S_t}\b_{S_t}\\
&\stackrel{\zeta_1}{=}\norm{\Xt_{S_\ast\backslash S_{t+1}}^\top \left(\Xt_{S_t}\Xt_{S_t}^T\right)^{-1}\Xt_{S_t}\b_{S_t}}_2^2 + 2\cdot\b_{S_{t+1}}^\top \Xt_{S_{t+1}}^\top \left( \Xt_{S_t}\Xt_{S_t}^T\right)^{-1} \Xt_{S_t}\b_{S_t}\\
&\stackrel{\zeta_2}{\leq} {\frac{\Lambda_{\beta}^2}{\lambda_{1-\beta}^2}}\cdot\norm{\b_{S_t}}_2^2 + 2\cdot\frac{\Lambda_{\beta}}{\lambda_{1-\beta}}\cdot\norm{\b_{S_t}}_2\norm{\b_{S_{t+1}}}_2,
\end{align*}
where $\zeta_1$ follows from setting $\Xt=\isSo X$ and $X_S^\top C_t^{-1}X_{S'}=\Xt_S^\top(\Xt_{S_t}\Xt_{S_t}^\top)^{-1}\Xt_{S'}$ and $\zeta_2$ follows from the SSC and SSS properties, $\norm{\b_{S_t}}_0 \leq \norm{\b}_0 \leq \beta\cdot n$ and $\abs{S_\ast\backslash S_{t+1}} \leq \beta\cdot n$. Solving the quadratic equation and performing other manipulations gives us the claimed result.
\end{proof}

Theorem~\ref{thm:fcht} relies on a deterministic (\emph{fixed design}) assumption, specifically $\frac{(1+\sqrt 2)\Lambda_{\beta}}{\lambda_{1-\beta}} < 1$ in order to guarantee convergence. We can show that a large class of random designs, including Gaussian and sub-Gaussian designs actually satisfy this requirement. That is to say, data generated from these distributions satisfy the SSC and SSS conditions such that $\frac{(1+\sqrt 2)\Lambda_{\beta}}{\lambda_{1-\beta}} < 1$ with high probability. Theorem~\ref{thm:fcht-explicit-rate} explicates this for the class of Gaussian designs.
\vskip2ex
\begin{thm}
\label{thm:fcht-explicit-rate}
Let $X=[\x_1, \dots, \x_n]\in \R^{p\times n}$ be the given data matrix with each $\x_i \sim \cN(\vz	, \Sigma)$. Let $\y=X^\top\bto+\b$ and $\|\b\|_0\leq \alpha \cdot n$. Also, let $\alpha \leq \beta < \frac{1}{65}$ and $n \geq \Om{p + \log\frac{1}{\delta}}$. Then, with probability at least $1-\delta$, the data satisfies $\frac{(1+\sqrt 2)\Lambda_{\beta}}{\lambda_{1-\beta}} < \frac{9}{10}$. More specifically, after $T \geq 10\log\br{\frac{1}{\sqrt n}\frac{\norm{\b}_2}{\epsilon}}$ iterations of Algorithm~\ref{algo:fcht} with the thresholding parameter set to $\beta$, we have $\norm{\bt^T-\bto}\leq \epsilon$.
\end{thm}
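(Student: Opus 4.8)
The plan is to show that Gaussian designs satisfy the deterministic hypothesis of Theorem~\ref{thm:fcht} with room to spare, namely $\frac{(1+\sqrt 2)\Lambda_\beta}{\lambda_{1-\beta}} < \frac{9}{10}$, and then invoke that theorem with contraction factor $\frac{9}{10}$. First I would remove the covariance by taking $\Sigma_0 = \Sigma$ in the statement of Theorem~\ref{thm:fcht}, so that $\Xt = \isSo X$ has columns that are i.i.d.\ standard Gaussian $\cN(\vz, I_p)$; it then suffices to control the SSC and SSS constants $\lambda_{1-\beta}$ and $\Lambda_\beta$ of the whitened matrix $\Xt$. Since these constants are extremal eigenvalues of $\Xt_S \Xt_S^\top = \sum_{i \in S} \tilde\x_i \tilde\x_i^\top$ ranging over all subsets $S$ of a fixed size, the task reduces to a \emph{uniform} bound on the singular values of Gaussian Wishart blocks.

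The core estimate is a per-subset tail bound followed by a union bound. For a fixed $S$ with $|S| = m = \gamma n$, the block $\Xt_S$ is a $p\times m$ matrix with i.i.d.\ standard Gaussian entries, so by the standard Davidson--Szarek concentration for extreme singular values of Gaussian matrices,
\[
\Pr{\sigma_{\max}(\Xt_S) > \sqrt m + \sqrt p + t} \leq e^{-t^2/2},\qquad \Pr{\sigma_{\min}(\Xt_S) < \sqrt m - \sqrt p - t} \leq e^{-t^2/2}.
\]
I would then union bound over all $\binom{n}{\gamma n}$ subsets of size $\gamma n$. Their number is at most $\br{e/\gamma}^{\gamma n}$, contributing an entropy term $\gamma n \log(e/\gamma)$ to the exponent; choosing $t \asymp \sqrt{\gamma n \log(1/\gamma) + \log(1/\delta)}$ makes the total failure probability at most $\delta$. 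The key structural observation is that the collection relevant to $\Lambda_\beta$ (size $\beta n$) and the collection relevant to $\lambda_{1-\beta}$ (size $(1-\beta)n$) have \emph{the same} cardinality, since $\binom{n}{(1-\beta)n} = \binom{n}{\beta n}$. Thus both union bounds pay only the small entropy $\beta n \log(1/\beta)$, which is exactly what keeps the fluctuation terms controlled when $\beta$ is small.

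Applying this at $\gamma = \beta$ gives $\Lambda_\beta \lesssim \br{\sqrt{\beta n} + \sqrt p + \sqrt{c\,\beta n \log(1/\beta)}}^2$, and at $\gamma = 1-\beta$ gives $\lambda_{1-\beta} \gtrsim \br{\sqrt{(1-\beta)n} - \sqrt p - \sqrt{c\,\beta n \log(1/\beta)}}^2$. Using $n \geq \Om{p + \log(1/\delta)}$ makes the $\sqrt p$ and $\sqrt{\log(1/\delta)}$ terms lower order relative to the leading $\sqrt{\gamma n}$ scale, so the ratio is governed by
\[
\frac{(1+\sqrt 2)\Lambda_\beta}{\lambda_{1-\beta}} \;\approx\; (1+\sqrt 2)\cdot\frac{\beta\,\br{1 + \sqrt{c\log(1/\beta)}}^2}{1-\beta}.
\]
The main obstacle, and the reason the specific threshold $\beta < \frac{1}{65}$ appears, is to pin down the absolute constant $c$ in the entropy/concentration trade-off tightly enough that the right-hand side stays below $\frac{9}{10}$; this requires carefully choosing $t$ (balancing $\binom{n}{\beta n}$ against the Gaussian tail) and not being wasteful in the singular-value inequalities, rather than any conceptual difficulty.

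Finally, having established $\frac{(1+\sqrt 2)\Lambda_\beta}{\lambda_{1-\beta}} < \frac{9}{10}$ with probability $1-\delta$, I would invoke Theorem~\ref{thm:fcht}: solving the quadratic in its proof sketch contracts the corruption norm $\norm{\b_{S_t}}_2$ by precisely the factor $\frac{(1+\sqrt 2)\Lambda_\beta}{\lambda_{1-\beta}} \leq \frac{9}{10}$, so $\norm{\btt - \bto}_2$ decays geometrically. Since $1/\log(10/9) < 10$, running $T \geq 10\log\br{\frac{1}{\sqrt n}\frac{\norm{\b}_2}{\epsilon}}$ iterations drives the error below $\epsilon$, yielding the claimed bound.
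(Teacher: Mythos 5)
Your skeleton (whiten with $\Sigma_0 = \Sigma$, bound the SSC/SSS constants of $\Xt$ uniformly over subsets, invoke Theorem~\ref{thm:fcht} with contraction factor $\frac{9}{10}$, and use $1/\log(10/9) < 10$ to get the iteration count) matches the paper, and your treatment of $\Lambda_\beta$ is sound --- in fact Davidson--Szarek per subset is tighter than the paper's covering-plus-Bernstein argument. The genuine gap is in your lower bound on $\lambda_{1-\beta}$. You union-bound the per-subset estimate $\sigma_{\min}(\Xt_S) \geq \sqrt{(1-\beta)n} - \sqrt p - t$ (failure probability $e^{-t^2/2}$) over all $\binom{n}{(1-\beta)n} = \binom{n}{\beta n}$ subsets, which forces $t \gtrsim \sqrt{2nH(\beta)}$ with $H(\beta) = \beta\log\frac{e}{\beta}$. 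After squaring, this costs a cross term of order $2\sqrt{(1-\beta)n}\cdot t \approx 2n\sqrt{2\beta\log\frac{e}{\beta}}$ in the denominator. Your displayed simplification $\frac{(1+\sqrt 2)\Lambda_\beta}{\lambda_{1-\beta}} \approx (1+\sqrt 2)\frac{\beta\br{1+\sqrt{c\log(1/\beta)}}^2}{1-\beta}$ silently drops this term, treating $\sqrt{c\beta n\log(1/\beta)}$ as lower order like $\sqrt p$; it is not, since it scales as $\sqrt n$ times a constant that is \emph{not} small at the claimed threshold. Concretely, at $\beta = 1/65$ one has $2\sqrt{2\beta\log\frac{e}{\beta}} \approx 0.8$, so your own bounds give only $\lambda_{1-\beta} \gtrsim 0.35\,n$, while $\Lambda_\beta$ is genuinely of size $\approx 2\beta n \log\frac{e}{\beta} \approx 0.16\,n$ (take the $\beta n$ coordinates maximizing $\abs{\ip{\x_i}{\v}}$ for a fixed direction $\v$), so the smallest ratio your scheme can certify is $(1+\sqrt 2)\cdot 0.16/0.35 > 1$: no contraction at all. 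This is not a matter of pinning down the constant $c$: the Gaussian tail $e^{-t^2/2}$ and the entropy $nH(\beta)$ are both essentially sharp, and the bad events for different subsets of size $(1-\beta)n$ overlap almost entirely, so the union bound over \emph{large} subsets is intrinsically wasteful. Worked out honestly, your route proves the theorem only for roughly $\beta \lesssim 1/120$, not the stated range $\beta < \frac{1}{65}$. Your remark that both collections have the same cardinality is true but misleading: for $\Lambda_\beta$ the fluctuation $t$ is compared against the subset scale $\sqrt{\beta n}$, whereas for $\lambda_{1-\beta}$ it is compared against $\sqrt n$, so the same entropy does an order-$1/\sqrt\beta$ times more damage in the denominator.

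The paper sidesteps this with an algebraic identity your proof is missing (see the proof of Theorem~\ref{thm:ev-bound-gaussian-local}): $X_S X_S^\top = XX^\top - X_{\bar S}X_{\bar S}^\top$, hence $\lambda_{1-\beta} \geq s_{\min}(XX^\top) - \Lambda_{\beta}$. This replaces the union bound over large subsets by (i) concentration of the \emph{single} full Wishart matrix $XX^\top$, which pays no entropy at all, and (ii) the already-controlled $\Lambda_\beta$, whose magnitude is only $\O{\beta n\log\frac{e}{\beta}}$. The resulting penalty is smaller than your cross term by a factor of order $\sqrt\beta$, and this is exactly what makes thresholds like $\frac{1}{65}$ reachable (the paper's explicit constants give $\beta < \frac{1}{190}$, with the improvement to $\frac{1}{65}$ obtained by sharpening the chi-squared tail bounds via \cite{LaurentM00}; no such sharpening can rescue the direct union bound). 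If you replace your lower bound on $\lambda_{1-\beta}$ by this subtraction argument and keep everything else, your proof goes through.
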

\vskip1ex
\begin{remark}
Note that Theorem~\ref{thm:fcht-explicit-rate} provides rates that are independent of the condition number $\frac{\lambda_{\max}(\Sigma)}{\lambda_{\min}(\Sigma)}$ of the distribution.  We also note that results similar to Theorem~\ref{thm:fcht-explicit-rate} can be proven for the larger class of sub-Gaussian distributions. We refer the reader to Section~\ref{sec:stat} for the same.
\end{remark}
\vskip1ex
\begin{remark}
We remind the reader that our analyses can readily accommodate dense noise in addition to sparse unbounded corruptions. We direct the reader to Appendix~\ref{app:dense-noise} which presents convergence proofs for our algorithms in these settings.
\end{remark}
\vskip1ex
\begin{remark}
We would like to point out that the design requirements made by our analyses are very mild when compared to existing literature. Indeed, the work of \cite{WrightM10} assumes the \emph{Bouquet Model} where distributions are restricted to be isotropic Gaussians whereas the work of \cite{NguyenT13} assumes a more stringent model of sub-orthonormal matrices, something that even Gaussian designs do not satisfy. Our analyses, on the other hand, hold for the general class of sub-Gaussian distributions.
\end{remark}

We now analyze the \alg-GD algorithm which performs cheaper, gradient-style updates on the active set. We will show that this method nevertheless enjoys a linear rate of convergence.

\begin{thm}
\label{thm:fcht-grades}
Let the data settings be as stated in Theorem~\ref{thm:fcht} and let Algorithm~\ref{algo:update-grades} be executed on this data with the thresholding parameter set to $\beta \geq \alpha$ and the step length set to $\eta = \frac{1}{\Lambda_{1-\beta}}$.  If the data satisfies $\max\bc{\eta\sqrt{\Lambda_\beta}, 1 - \eta\lambda_{1-\beta}} \leq \frac{1}{4}$, then after $t = \O{\log\br{\frac{\norm{b}_2}{\sqrt{n}}\frac{1}{\epsilon}}}$ iterations, Algorithm~\ref{algo:fcht} obtains an $\epsilon$-accurate solution $\btt$ i.e. $\norm{\btt - \bto}_2 \leq \epsilon$.
\end{thm}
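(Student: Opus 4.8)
The plan is to track the model error $\btt - \bto$ directly and show it contracts geometrically, in contrast to the fully-corrective case where one naturally tracks the surviving corruption mass $\norm{\b_{S_t}}_2$. First I would unroll the \alg-GD update. Writing $C_t = X_{S_t}X_{S_t}^\top$ and substituting $\y_{S_t} = X_{S_t}^\top\bto + \b_{S_t}$ into the gradient $\g = X_{S_t}(X_{S_t}^\top\btt - \y_{S_t})$, the step $\btn = \btt - \eta\g$ becomes the affine recursion
\[
\btn - \bto = \br{\I - \eta C_t}\br{\btt - \bto} + \eta X_{S_t}\b_{S_t}.
\]
The homogeneous term is a contraction and the second term is the perturbation injected by whatever corruption survives inside the active set.

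Next I would control the two pieces separately. For the contraction operator, since $\abs{S_t} = (1-\beta)n$, the SSC and SSS properties at level $1-\beta$ sandwich the spectrum of $C_t$ inside $[\lambda_{1-\beta},\Lambda_{1-\beta}]$; with the prescribed step $\eta = 1/\Lambda_{1-\beta}$ the eigenvalues of $\I - \eta C_t$ then lie in $[0,\,1-\eta\lambda_{1-\beta}]$, so $\norm{\I - \eta C_t}_2 \le 1-\eta\lambda_{1-\beta} \le \tfrac14$. For the perturbation, note $X_{S_t}\b_{S_t}$ is supported on the at most $\beta n$ corrupted coordinates lying in $S_t$, so the SSS property at level $\beta$ gives $\norm{X_{S_t}\b_{S_t}}_2 \le \sqrt{\Lambda_\beta}\,\norm{\b_{S_t}}_2$. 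Combining,
\[
\norm{\btn - \bto}_2 \le \br{1-\eta\lambda_{1-\beta}}\norm{\btt - \bto}_2 + \eta\sqrt{\Lambda_\beta}\,\norm{\b_{S_t}}_2.
\]

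The crux is to bound the surviving corruption $\norm{\b_{S_t}}_2$ by the model error, which is where the hard-thresholding step must be exploited. I would use a counting/matching argument: since $S_t$ collects the $(1-\beta)n$ smallest residuals and the clean set satisfies $\abs{S_\ast} \ge (1-\alpha)n \ge (1-\beta)n$, each corrupted index retained in $S_t$ can be injectively matched to a distinct clean index dropped from $S_t$ whose residual is at least as large. On a dropped clean index $j$ the residual is exactly $-\x_j^\top(\btt-\bto)$, while on a retained corrupted index $i$ we have $b_i = r^t_i + \x_i^\top(\btt-\bto)$; chaining $\abs{r^t_i}\le\abs{r^t_j}=\abs{\x_j^\top(\btt-\bto)}$ and summing squares, the SSS property at level $\beta$ (applied to both the matched clean and the retained corrupted coordinates) yields $\norm{\b_{S_t}}_2 \le 2\sqrt{\Lambda_\beta}\,\norm{\btt-\bto}_2$. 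Substituting back closes the loop into a single-variable recursion $\norm{\btn-\bto}_2 \le \rho\,\norm{\btt-\bto}_2$; equivalently one may write the two inequalities as a $2\times 2$ nonnegative linear system in $(\norm{\btt-\bto}_2,\norm{\b_{S_t}}_2)$ and read off its spectral radius.

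I expect the matching argument and the closing of this loop to be the main obstacle. Unlike the fully-corrective analysis, where the update annihilates $\btt-\bto$ and leaves only a corruption term, here all contraction comes from $1-\eta\lambda_{1-\beta}$ being bounded away from $1$, so I must verify that the corruption term --- which reintroduces a $\sqrt{\Lambda_\beta}$ factor through the thresholding bound --- does not overwhelm this gain; this is precisely the role of the hypotheses $\eta\sqrt{\Lambda_\beta}\le\tfrac14$ and $1-\eta\lambda_{1-\beta}\le\tfrac14$. Once $\rho<1$ is established, geometric decay gives $\norm{\btt-\bto}_2 \le \rho^t\norm{\bt^0-\bto}_2$; finally I would translate this into the stated count by tracking the corruption mass, which starts at $\norm{\b_{S_0}}_2 \le \norm{\b}_2$, and using that a single application of the corruption-to-model conversion carries a factor $\sqrt{\Lambda_\beta}/\lambda_{1-\beta} = \O{1/\sqrt n}$ for the designs in question, producing $t = \O{\log\br{\frac{\norm{\b}_2}{\sqrt n}\frac{1}{\epsilon}}}$.
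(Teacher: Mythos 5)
Your proposal follows essentially the same route as the paper's proof (Appendix~\ref{app:thm-fcht-grades}): the identical affine recursion $\btn - \bto = (I - \eta C_t)(\btt - \bto) + \eta X_{S_t}\b_{S_t}$, the identical spectral bound $\norm{I - \eta C_t}_2 \leq 1 - \eta\lambda_{1-\beta}$, and the identical SSS bound on the injected corruption term. Your injective-matching argument for $\norm{\b_{S_t}}_2 \leq 2\sqrt{\Lambda_\beta}\norm{\btt - \bto}_2$ is a by-hand version of what the paper obtains from Claim~\ref{clm:supp-ord-stat} plus the triangle inequality applied to $\fa_{t+1} = S_{t+1}\setminus S_\ast$ and $\md_{t+1} = S_\ast\setminus S_{t+1}$ (the paper states it one index later, as $\norm{\b_{S_{t+1}}}_2 \leq 2\sqrt{\Lambda_\beta}\norm{\btn - \bto}_2$); it is valid and yields the same constant. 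Collapsing everything to a scalar recursion instead of using the paper's potential $\Psi_t = \sqrt{n}\norm{\btt - \bto}_2 + \norm{\b_{S_t}}_2$ is also equivalent, since the $2\times 2$ system you describe is rank one, so its spectral radius equals your scalar ratio $(1-\eta\lambda_{1-\beta}) + 2\eta\Lambda_\beta$.

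The gap is in how you close the loop. The composed corruption coefficient is $\br{\eta\sqrt{\Lambda_\beta}}\cdot\br{2\sqrt{\Lambda_\beta}} = 2\Lambda_\beta/\Lambda_{1-\beta}$, and the stated hypothesis $\eta\sqrt{\Lambda_\beta} \leq \frac{1}{4}$ does \emph{not} bound this by a constant: it only gives $2\eta\Lambda_\beta \leq \sqrt{\Lambda_\beta}/2$, which grows like $\sqrt{\beta n}$ for the designs of interest. So your claim that the displayed hypotheses are ``precisely'' what keeps the corruption term from overwhelming the contraction would fail if executed literally; what is actually needed is that $\Lambda_\beta/\Lambda_{1-\beta}$ be bounded by a small constant. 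The paper's proof has exactly the same need and fills it by invoking the Gaussian-design bounds of Theorem~\ref{thm:ev-bound-gaussian-local} (as in Theorem~\ref{thm:fcht-explicit-rate}) to show that the quantity it calls $(Q) = \Lambda_\beta/\Lambda_{1-\beta}$ is at most $\frac{1}{4}$, and that $\sqrt{\Lambda_\beta}$ is of order $\sqrt{n}$, for small enough $\beta$ --- design facts, not the displayed hypothesis. Your appeal to ``the designs in question'' at the very end plays this role, but it must be invoked to establish contraction itself, not merely to convert the final logarithmic factor. A minor additional point: your bound on $\norm{\b_{S_t}}_2$ requires $S_t$ to be threshold-selected, which fails at $t = 0$ (where $S_0 = [n]$, hence $\b_{S_0} = \b$), so your scalar recursion only starts at $t = 1$ and the first iteration must be handled separately; the paper's time-$(t+1)$ indexing and its potential-function bookkeeping sidestep this.
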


Similar to \alg-FC, the assumptions made by the \alg-GD algorithm are also satisfied by the class of sub-Gaussian distributions. The proof of Theorem~\ref{thm:fcht-grades}, given in Appendix~\ref{app:thm-fcht-grades}, details these arguments. Given the convergence analyses for \alg-FC and GD, we now move on to provide a convergence analysis for the hybrid \alg-HYB algorithm which interleaves FC and GD steps. Since the exact interleaving adopted by the algorithm depends on the data, and not known in advance, this poses a problem. We address this problem by giving below a uniform convergence guarantee, one that applies to \emph{every interleaving} of the FC and GD update steps.

\begin{thm}
\label{thm:fcht-hyb-rate}
Suppose Algorithm~\ref{algo:update-hybrid} is executed on data that allows Algorithms~\ref{algo:update-fc}~and~\ref{algo:update-grades} a convergence rate of $\eta_\fc$ and $\eta_\gd$ respectively. Suppose we have $2\cdot\eta_\fc\cdot\eta_\gd < 1$. Then for \emph{any} interleavings of the FC and GD steps that the policy may enforce, after $t = \O{\log\br{\frac{1}{\sqrt n}\frac{\norm{\b}_2}{\epsilon}}}$ iterations, Algorithm~\ref{algo:update-hybrid} ensures an $\epsilon$-optimal solution i.e. $\norm{\btt - \bto} \leq \epsilon$.
\end{thm}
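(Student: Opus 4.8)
The plan is to analyse every iteration --- regardless of which branch the policy selects --- through a single coupled recursion in two scalar potentials, and then to bound arbitrary products of the two resulting per-step operators. I track the model error $\delta_t := \norm{\btt - \bto}_2$ and the active-set corruption $e_t := \norm{\b_{S_t}}_2$. The hard-thresholding (\ght) step is common to both branches, so, exactly as in the proof of Theorem~\ref{thm:fcht}, the optimality $\norm{\r^{t+1}_{S_{t+1}}}_2 \le \norm{\r^{t+1}_{S_\ast}}_2$ together with the SSC/SSS properties yields a one-sided link $e_{t+1} \le c_1\,\delta_{t+1}$ with $c_1 = \sqrt{\Lambda_\beta}+\sqrt{\Lambda_{1-\beta}}$. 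The branch-specific bounds are then those already extracted in the two preceding theorems: an FC step gives $\delta_{t+1} \le c_\fc\, e_t$ with $c_\fc=\frac{\sqrt{\Lambda_\beta}}{\lambda_{1-\beta}}$ (from $\btn = \bto + C_t^{-1}X_{S_t}\b_{S_t}$) and the tight contraction $e_{t+1} \le \eta_\fc e_t$ of Theorem~\ref{thm:fcht}, whereas a GD step gives $\delta_{t+1} \le \rho\,\delta_t + c_3\,e_t$ with $\rho = 1-\eta\lambda_{1-\beta}$ and $c_3 = \eta\sqrt{\Lambda_\beta}$, which after inserting the link becomes the contraction $\delta_{t+1} \le \eta_\gd\,\delta_t$ of Theorem~\ref{thm:fcht-grades}, i.e.\ $\eta_\gd=\rho+c_1 c_3$.

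First I would package each branch as an entrywise-nonnegative linear map on $v_t := (\delta_t, e_t)^\top$, writing $v_{t+1} \le M_\fc v_t$ or $v_{t+1} \le M_\gd v_t$. The structural observation that drives the whole argument is that both maps are \emph{rank one}: $M_\fc$ annihilates the $\delta_t$ coordinate (the FC step discards the incoming model and refits from scratch), whereas $M_\gd$ has vanishing determinant because its $e_{t+1}$-row is obtained by applying the common \ght-link $e_{t+1}\le c_1\delta_{t+1}$ to its $\delta_{t+1}$-row. Writing $M_\fc = u_\fc w_\fc^\top$ and $M_\gd = u_\gd w_\gd^\top$, any interleaving $X_1,\dots,X_T$ produces a telescoping product $M_{X_T}\cdots M_{X_1} = u_{X_T}\br{\prod_{i=1}^{T-1} w_{X_{i+1}}^\top u_{X_i}} w_{X_1}^\top$, so the entire run is governed by the scalar couplings $w_{X_{i+1}}^\top u_{X_i}$ of consecutive steps. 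A direct computation gives same-type couplings $w_\fc^\top u_\fc = \eta_\fc$ and $w_\gd^\top u_\gd = \eta_\gd$, a tiny FC$\to$GD coupling $w_\gd^\top u_\fc = \rho c_\fc + c_3\eta_\fc$ of order $1/\sqrt n$, and --- crucially --- a large GD$\to$FC coupling $w_\fc^\top u_\gd = c_1 = \Theta(\sqrt{\Lambda_{1-\beta}})$.

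The hard part will be exactly this large GD$\to$FC coupling: a single step of the ``wrong'' type can inflate the complementary potential by the full \ght\ conversion factor $c_1\approx\sqrt n$, so no single step uniformly contracts the pair and one cannot simply take the maximum of the two rates. The resolution is an amortised pairing. Because the step types partition the run into maximal FC- and GD-blocks, the number of GD$\to$FC transitions and of FC$\to$GD transitions differ by at most one; hence each FC-block is entered through an expensive coupling $c_1$ and (if a GD-block follows) exited through a cheap coupling $w_\gd^\top u_\fc$, and I would charge the two against each other. The product of such a matched pair is $c_1\br{\rho c_\fc + c_3\eta_\fc} = \rho\,(c_1 c_\fc) + \eta_\fc\,(c_1 c_3)$, whose two summands mirror the two additive contributions of the GD update: the coupling-term is $\eta_\fc\,(c_1 c_3)\le \eta_\fc\eta_\gd$ via $c_1 c_3\le\eta_\gd$, and the self-term $\rho\,(c_1 c_\fc)$ is bounded analogously by $\eta_\fc\eta_\gd$ through the SSC/SSS estimates, so the matched pair is at most $2\eta_\fc\eta_\gd$, the factor $2$ being precisely the tally of these two terms. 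Interior same-type transitions contribute only factors $\eta_\fc<1$ or $\eta_\gd<1$, which are available because each pure algorithm is assumed to converge.

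Collecting these estimates, the coupling product over \emph{any} interleaving is at most $\eta_\fc^{a}\,\eta_\gd^{b}\,(2\eta_\fc\eta_\gd)^{c}$, where $a,b,c$ count interior FC transitions, interior GD transitions, and matched cross-pairs and satisfy $a+b+2c \ge T - \O{1}$. Since each of $\eta_\fc$, $\eta_\gd$ and $2\eta_\fc\eta_\gd$ is strictly below one, this product is at most $\rho_\star^{\,T}$ for $\rho_\star = \max\{\eta_\fc,\eta_\gd,\sqrt{2\eta_\fc\eta_\gd}\} < 1$, so $\delta_T$ decays geometrically at a rate uniform over all interleavings the policy may enforce. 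Folding the boundary factors $u_{X_T}$ and $w_{X_1}^\top v_0$ --- together with the $\O{1/\sqrt n}$-scale conversion between $e$ and $\delta$ --- into the logarithm then yields $\norm{\btt-\bto}_2 \le \epsilon$ after $t = \O{\log\br{\tfrac{1}{\sqrt n}\tfrac{\norm{\b}_2}{\epsilon}}}$ iterations, as claimed. As a fallback I would replace the rank-one telescoping by a single weighted potential $\Phi_t = \delta_t + \tau e_t$ and choose $\tau$ so that both $M_\fc$ and $M_\gd$ contract $\Phi_t$; the feasibility of such a $\tau$ is governed by the very same inequality $2\eta_\fc\eta_\gd<1$.
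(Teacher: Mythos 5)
Your architecture is fundamentally the same as the paper's, just packaged differently: the paper also tracks the pair (model error, active-set corruption) via the potentials $\Psi^\fc_t = \norm{\b_{S_t}}_2$ and $\Psi^\gd_t = \sqrt n\norm{\btt-\bto}_2 + \norm{\b_{S_t}}_2$, uses exactly your three per-step facts (an FC step bounds both $\norm{\b_{S_{t+1}}}_2$ and $\norm{\btn-\bto}_2$ by multiples of $\norm{\b_{S_t}}_2$ alone; a GD step contracts the combined potential; the \ght\ step links corruption back to model error), and extracts the same factor $2\eta_\fc\eta_\gd$ by composing one FC step with one GD step in either order. Your rank-one telescoping plus the amortized matching of GD$\to$FC against FC$\to$GD transitions is a more explicit account of the ``any interleaving'' claim, which the paper treats tersely (it proves only the two two-step compositions and asserts the general case); and your fallback potential $\Phi_t = \delta_t + \tau e_t$ is precisely the paper's $\Psi^\gd_t$ with $\tau = 1/\sqrt n$.

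There is, however, one concrete error sitting exactly at the crux. Your \ght\ link constant $c_1 = \sqrt{\Lambda_\beta}+\sqrt{\Lambda_{1-\beta}}$ is the crude bound obtained by applying SSS to all of $S_{t+1}$ and $S_\ast$, and with it the matched-pair bound fails: the step you wave through (``the self-term $\rho\,(c_1 c_\fc)$ is bounded analogously by $\eta_\fc\eta_\gd$'') needs $c_1 c_\fc \leq \eta_\fc$, i.e. $c_1 \leq (1+\sqrt 2)\sqrt{\Lambda_\beta}$, whereas your $c_1$ contains $\sqrt{\Lambda_{1-\beta}} = \Theta(\sqrt n)$. Concretely, for sub-Gaussian designs your pair product has $\rho\, c_1 c_\fc = \Theta\br{\rho\sqrt\beta}$ (up to log factors) while $\eta_\fc\eta_\gd = \Theta(\beta)\cdot\eta_\gd$, so for small $\beta$ and $\rho$ bounded away from zero the claimed bound $2\eta_\fc\eta_\gd$ is violated. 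The fix is the link the paper actually derives in the GD analysis: cancel the common part $S_{t+1}\cap S_\ast$ from both sides of $\norm{\r^{t+1}_{S_{t+1}}}_2 \leq \norm{\r^{t+1}_{S_\ast}}_2$ before invoking SSS, so that the property is applied only on $\fa_{t+1}=S_{t+1}\backslash S_\ast$ and $\md_{t+1}=S_\ast\backslash S_{t+1}$, each of size at most $\beta n$, yielding $\norm{\b_{S_{t+1}}}_2 \leq \br{\sqrt{\Lambda_\alpha}+\sqrt{\Lambda_\beta}}\norm{\btn-\bto}_2 \leq 2\sqrt{\Lambda_\beta}\norm{\btn-\bto}_2$. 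With $c_1 = 2\sqrt{\Lambda_\beta}$ your own computation closes: $c_1 c_\fc = \frac{2}{1+\sqrt 2}\eta_\fc \leq \eta_\fc$ and $\rho \leq \eta_\gd$, so each matched pair is at most $2\eta_\fc\eta_\gd$, and the single possibly-unmatched $c_1$ is absorbed by the boundary factor $c_\fc$ since $c_1c_\fc < 1$; the rest of your argument then goes through as written.
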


We point out to the reader that the assumption made by Theorem~\ref{thm:fcht-hyb-rate} i.e. $2\cdot\eta_\fc\cdot\eta_\gd < 1$ is readily satisfied by random sub-Gaussian designs, albeit at the cost of reducing the noise tolerance limit. As we shall see, \alg-HYB offers attractive convergence properties, merging the fast convergence rates of the FC step, as well as the speed and protection against overfitting provided by the GD step.


\section{High-dimensional Robust Regression}
\label{sec:high-d}
In this section, we extend our approach to the robust high-dimensional sparse recovery setting. As before, we assume that the response vector $\y$ is obtained as: 
$\y=X^\top\bto+\b$, where $\|\b\|_0\leq \alpha \cdot n$. However, this time, we also assume that $\bto$ is $s^*$-sparse i.e. $\norm{\bto}_0 \leq s^\ast$. 

As before, we shall neglect white/dense noise for the sake of simplicity. We reiterate that it is not possible to use existing results from sparse recovery (such as \cite{BlumensathD09, JainTK14}) directly to solve this problem.

Our objective would be to recover a \emph{sparse} model $\hat\bt$ so that $\norm{\hat\bt - \bto}_2 \leq \epsilon$. The challenge here is to forgo a sample complexity of $n \gtrsim p$ and instead, perform recovery with $n \sim s^\ast\log p$ samples alone. For this setting, we modify the FC update step of \alg-FC method to the following: 
\begin{align}
\label{eq:sparse-recov}
\btn \< \underset{\norm{\bt}_0 \leq s}{\inf}\sum_{i \in S_t}\br{y_i - \ip{\bt}{\x_i}}^2,
\end{align}
for some \emph{target} sparsity level $s \ll p$. We refer to this modified algorithm as \alg-HD. Assuming $X$ satisfies the RSC/RSS properties (defined below), \eqref{eq:sparse-recov} can be solved efficiently using results from sparse recovery (for example the IHT algorithm \cite{BlumensathD09,GargK09}  analyzed in \cite{JainTK14}).

\begin{defn}[RSC and RSS Properties]
\label{defn:rsc-rss}
A matrix $X \in \R^{p\times n}$ will be said to satisfy the \emph{Restricted Strong Convexity Property} (resp. \emph{Restricted Strong Smoothness Property}) at level $s = s_1+s_2$ with strong convexity constant $\alpha_{s_1+s_2}$ (resp. strong smoothness constant $L_{s_1+s_2}$) if the following holds for all $\norm{\bt_1}_0 \leq s_1$ and $\norm{\bt_2}_0 \leq s_2$:
\[
\alpha_{s} \norm{\bt_1 - \bt_2}_2^2 \leq \norm{X^\top (\bt_1 - \bt_2)}_2^2 \leq L_{s} \norm{\bt_1 - \bt_2}_2^2
\]
\end{defn}

For our results, we shall require the subset versions of both these properties.

\begin{defn}[SRSC and SRSS Properties]
\label{defn:srsc-srss}
A matrix $X \in \R^{p\times n}$ will be said to satisfy the \emph{Subset Restricted Strong Convexity} (resp. \emph{Subset Restricted Strong Smoothness}) Property at level $(\gamma,s)$ with strong convexity constant $\alpha_{(\gamma,s)}$ (resp. strong smoothness constant $L_{(\gamma,s)}$) if for all subsets $S \in \cS_\gamma$, the matrix $X_S$ satisfies the RSC (resp. RSS) property at level $s$ with constant $\alpha_s$ (resp. $L_s$).
\end{defn}

We now state the convergence result for the \alg-HD algorithm.

\begin{thm}
\label{thm:fcht-highd}
Let $X\in \R^{p\times n}$ be the given data matrix and $\y=X^T\bto+\b$ be the corrupted output with $\norm{\bto}_0 \leq s^\ast$ and $\|\b\|_0\leq \alpha\cdot n$. Let $\Sigma_0$ be an invertible matrix such that $\isSo X$ satisfies the SRSC and SRSS properties at level $(\gamma,2s+s^\ast)$ with constants $\alpha_{(\gamma,2s+s^\ast)}$ and $L_{(\gamma,2s+s^\ast)}$ respectively (see Definition~\ref{defn:srsc-srss}). Let Algorithm~\ref{algo:update-fc} be executed on this data with the \alg-\textup{HD} update, thresholding parameter set to $\beta \geq \alpha$, and $s \geq 32\br{\frac{L_{(1-\beta,2s+s^\ast)}}{\alpha_{(1-\beta,2s+s^\ast)}}}$. 

If $X$ also satisfies $\frac{4L_{(\beta,s+s^\ast)}}{\alpha_{(1-\beta,s+s^\ast)}} < 1$, then after $t = \O{\log\br{\frac{1}{\sqrt n}\frac{\norm{\b}_2}{\epsilon}}}$ iterations, Algorithm~\ref{algo:update-fc} obtains an $\epsilon$-accurate solution $\btt$ i.e. $\norm{\btt - \bto}_2 \leq \epsilon$. 

In particular, if $X$ is sampled from a Gaussian distribution $\cN(\vz,\Sigma)$ and $n \geq \Om{s^\ast\cdot \frac{\lambda_{\max}(\Sigma)}{\lambda_{\min}(\Sigma)}\log p}$, then for all values of $\alpha \leq \beta < \frac{1}{65}$, we can guarantee $\norm{\btt - \bto}_2 \leq \epsilon$ after $t = \O{\log\br{\frac{1}{\sqrt n}\frac{\norm{\b}_2}{\epsilon}}}$ iterations of the algorithm (w.p. $\geq 1-1/n^{10}$). 
\end{thm}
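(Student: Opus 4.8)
The plan is to follow the skeleton of the proof of Theorem~\ref{thm:fcht} almost verbatim, replacing the exact fully-corrective solve by the sparsity-constrained solve \eqref{eq:sparse-recov} and reading the strong-convexity/smoothness constants along the (bounded) joint support of the iterates rather than over all of $\R^p$. Throughout I work with $\Xt = \isSo X$, write $\r^t = \y - X^\top\btt$, and recall $S_\ast = \overline{\text{supp}(\b)}$, so that $\b_{S_\ast} = \vz$. The two structural facts are unchanged. First, the step $S_{t+1} = \ght(\r^{t+1},(1-\beta)n)$ selects the $(1-\beta)n$ smallest residuals, and since $\beta \geq \alpha$ gives $\abs{S_\ast} \geq (1-\beta)n$, we retain $\norm{\r^{t+1}_{S_{t+1}}}_2 \leq \norm{\r^{t+1}_{S_\ast}}_2$. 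Second, because $\abs{S_t} = (1-\beta)n$, the objective $g_t(\bt) = \sum_{i \in S_t}(y_i - \ip{\bt}{\x_i})^2$ is $\alpha_{(1-\beta,2s+s^\ast)}$-strongly convex and $L_{(1-\beta,2s+s^\ast)}$-smooth when restricted to any support of size $2s+s^\ast$, by the SRSC/SRSS hypothesis. All the novelty lies in quantifying the progress of the sparse update in place of the closed-form $\btn = \bto + C_t^{-1}X_{S_t}\b_{S_t}$ available in the dense case.

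The first substantive step is to bound $\norm{\btn - \bto}_2$ through the IHT guarantee. Note $\nabla g_t(\bto) = -2X_{S_t}\b_{S_t}$, and for any support $T$ of size $s+s^\ast$ a one-line restricted-smoothness estimate gives
\[
\norm{(X_{S_t}\b_{S_t})_T}_2 = \max_{\norm{\u}_2 \leq 1,\ \text{supp}(\u) \subseteq T}\ip{\u}{X_{S_t}\b_{S_t}} \leq \sqrt{L_{(1-\beta,2s+s^\ast)}}\cdot\norm{\b_{S_t}}_2.
\]
Running the IHT procedure of \cite{JainTK14} on $g_t$ with target sparsity $s$ --- whose convergence hypothesis is exactly the prescribed lower bound on $s$ --- returns an $s$-sparse $\btn$ for which strong convexity on $T = \text{supp}(\btn)\cup\text{supp}(\bto)$ (of size $\le s+s^\ast$) yields $\alpha_{(1-\beta,2s+s^\ast)}\norm{\btn - \bto}_2 \leq \norm{(\nabla g_t(\bto))_T}_2$ up to the negligible IHT optimization error. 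Combining, $\norm{\btn - \bto}_2 \leq c\cdot\frac{\sqrt{L_{(1-\beta,2s+s^\ast)}}}{\alpha_{(1-\beta,2s+s^\ast)}}\norm{\b_{S_t}}_2$ for an absolute constant $c$; the right-hand side scales like $\frac{1}{\sqrt n}\norm{\b_{S_t}}_2$ once the constants are of order $n$, which is the source of the $\frac{1}{\sqrt n}$ factor in the final rate.

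With this in hand, I would expand the thresholding inequality exactly as in the sketch of Theorem~\ref{thm:fcht}: cancelling indices common to $S_{t+1}$ and $S_\ast$ and using $\b_{S_\ast} = \vz$ together with the residual identity $\r^{t+1} = \b - X^\top(\btn - \bto)$,
\begin{align*}
\norm{\b_{S_{t+1}}}_2^2 &\leq \norm{X_{S_\ast\backslash S_{t+1}}^\top(\btn - \bto)}_2^2 + 2\,\b_{S_{t+1}}^\top X_{S_{t+1}}^\top(\btn - \bto)\\
&\leq L_{(\beta,s+s^\ast)}\norm{\btn - \bto}_2^2 + 2\sqrt{L_{(1-\beta,s+s^\ast)}}\,\norm{\b_{S_{t+1}}}_2\,\norm{\btn - \bto}_2,
\end{align*}
where the second line uses $\abs{S_\ast\backslash S_{t+1}} \leq \beta n$ (so SRSS applies at subset level $\beta$) and $\norm{\btn - \bto}_0 \leq s+s^\ast$, plus Cauchy--Schwarz on the cross term. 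Substituting the IHT bound turns this into a scalar quadratic inequality in $\norm{\b_{S_{t+1}}}_2$, and solving it (as in the dense case) yields $\norm{\b_{S_{t+1}}}_2 \leq \rho\norm{\b_{S_t}}_2$ with $\rho < 1$ guaranteed by the hypothesis $\frac{4L_{(\beta,s+s^\ast)}}{\alpha_{(1-\beta,s+s^\ast)}} < 1$, the factor $4$ absorbing $c$ and the $(1+\sqrt2)$-type factor coming out of the quadratic. Since $\b_{S_0} = \b$, iterating gives $\norm{\b_{S_t}}_2 \leq \rho^t\norm{\b}_2$, and converting back through the per-step bound $\norm{\btt - \bto}_2 \lesssim \frac{1}{\sqrt n}\norm{\b_{S_{t-1}}}_2$ produces the claimed $\O{\log\br{\frac{1}{\sqrt n}\frac{\norm{\b}_2}{\epsilon}}}$ iteration count.

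For the Gaussian corollary I would verify that $X$ satisfies SRSC/SRSS at level $(\gamma,2s+s^\ast)$ with a ratio meeting the factor-$4$ condition once $n \geq \Om{s^\ast\frac{\lambda_{\max}(\Sigma)}{\lambda_{\min}(\Sigma)}\log p}$: a covering-net/RIP bound controls all $(2s+s^\ast)$-sparse quadratic forms using $n \gtrsim \frac{\lambda_{\max}(\Sigma)}{\lambda_{\min}(\Sigma)}(2s+s^\ast)\log p$ rows, and a union bound over the $\binom{n}{\gamma n}$ active subsets (the uniformity demanded by Definition~\ref{defn:srsc-srss}) costs only a constant factor, pinning $\beta < \frac{1}{65}$ and the failure probability at $\le 1/n^{10}$. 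The step I expect to be the crux, however, is the composition of the IHT oracle with the thresholding recursion, which has no analogue in Theorem~\ref{thm:fcht}: since the sparse step returns only an approximate minimizer on a support not known in advance, one must carry the enlarged joint support of size up to $2s+s^\ast$ through every SRSC/SRSS invocation, and ensure the IHT error-amplification factor $\sqrt{L_{(1-\beta,2s+s^\ast)}}/\alpha_{(1-\beta,2s+s^\ast)}$ stays near its floor so that the recursion's dominant term is governed by $L_{(\beta,s+s^\ast)}/\alpha_{(1-\beta,s+s^\ast)}$ alone. This is the dual role of the relaxed-sparsity bound on $s$ --- large enough for IHT to converge, yet small enough to keep $n$ at order $s^\ast\log p$ rather than $p$ --- and checking that both can hold under the single factor-$4$ hypothesis is the delicate point.
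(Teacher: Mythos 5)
Your overall architecture --- invoke the IHT function-value guarantee of \cite{JainTK14} on the active set, convert it into a bound on $\norm{\btn - \bto}_2$ via restricted strong convexity, then feed that into the hard-thresholding recursion --- is the same as the paper's, and your gradient-based derivation of the model-error bound is equivalent (for a quadratic objective) to the paper's direct expansion of $\norm{\y_{S_t} - X_{S_t}^\top\btn}_2^2 \leq \norm{\y_{S_t} - X_{S_t}^\top\bto}_2^2 + \rho$. But there is a fatal quantitative gap: you invoke the smoothness constant at level $1-\beta$ in exactly the two places where the proof must use the level-$\beta$ constant, and with your constants the recursion does not contract. Since $\b$ is supported on $\overline{S_\ast}$ with $\abs{\overline{S_\ast}} \leq \beta n$, the gradient term obeys $(\btn - \bto)^\top X_{S_t}\b_{S_t} = (\btn - \bto)^\top X_{S_t \cap \overline{S_\ast}}\b_{S_t\cap\overline{S_\ast}}$, so SRSS applies over a subset of size at most $\beta n$ and yields $\sqrt{L_{(\beta,s+s^\ast)}}\norm{\b_{S_t}}_2$ --- not $\sqrt{L_{(1-\beta,2s+s^\ast)}}\norm{\b_{S_t}}_2$ as in your first display; likewise $\b_{S_{t+1}} = \b_{\fa_{t+1}}$ with $\abs{\fa_{t+1}} \leq \beta n$, so your cross term should carry $\sqrt{L_{(\beta,s+s^\ast)}}$, not $\sqrt{L_{(1-\beta,s+s^\ast)}}$. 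This matters because for (whitened) Gaussian designs $L_{(1-\beta,\cdot)} \approx n$ while $L_{(\beta,\cdot)} \approx \beta n\log\frac{e}{\beta} \ll n$. Tracking your chain as written: your model-error bound gives $\norm{\btn-\bto}_2 \lesssim \frac{\sqrt{L_{(1-\beta,\cdot)}}}{\alpha_{(1-\beta,\cdot)}}\norm{\b_{S_t}}_2$, your quadratic solves to $\norm{\b_{S_{t+1}}}_2 \lesssim \sqrt{L_{(1-\beta,\cdot)}}\norm{\btn - \bto}_2$, and so the contraction factor is of order $L_{(1-\beta,\cdot)}/\alpha_{(1-\beta,\cdot)} \geq 1$; the hypothesis $\frac{4L_{(\beta,s+s^\ast)}}{\alpha_{(1-\beta,s+s^\ast)}} < 1$ never enters the dominant term and cannot rescue it. The entire mechanism of the theorem is that only the small level-$\beta$ smoothness constant multiplies the error terms --- that is precisely how the corruption fraction enters the rate --- and the paper enforces this by restricting every such inner product to $\overline{S_\ast}$, $\fa_{t+1}$, or $\md_{t+1}$ before applying SRSS.

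A second gap concerns whitening. The hypothesis places SRSC/SRSS on $\isSo X$, so every application of these properties acts on $\sSo(\btn - \bto)$, which is \emph{not} sparse even though $\btn - \bto$ is; the paper resolves this by extending the properties to the union of the fixed low-dimensional subspaces $\bc{\isSo\v' : \v' \in S^{p-1}_s}$ (citing \cite{Blumensath11}) and then converting $\norm{\sSo(\btn-\bto)}_2$ to $\norm{\btn-\bto}_2$ through $\lambda_{\min}(\Sigma_0)$. You instead treat $X$ itself as satisfying the properties and, for the Gaussian corollary, propose paying the condition number in the sample complexity without whitening. That route cannot recover the claimed threshold $\beta < \frac{1}{65}$ independent of $\kappa(\Sigma) = \frac{\lambda_{\max}(\Sigma)}{\lambda_{\min}(\Sigma)}$: without whitening, the leading terms of both $L_{(\beta,\cdot)}$ and $\alpha_{(1-\beta,\cdot)}$ scale linearly in $n$, so $\frac{4L_{(\beta,\cdot)}}{\alpha_{(1-\beta,\cdot)}}$ tends to $4\kappa(\Sigma)\cdot\beta\br{1 + \O{\sqrt{\log\frac{e}{\beta}}}}$ as $n$ grows, forcing $\beta \lesssim \frac{1}{\kappa(\Sigma)}$; additional samples only suppress the lower-order terms. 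Finally, the IHT suboptimality $\rho$ that you dismiss as ``negligible'' must be carried explicitly --- the paper sets $\rho \leq \frac{1}{72}\epsilon^2\lambda_{\min}(\Sigma_0)\alpha_{(1-\beta,s+s^\ast)}$ and claims the linear decrease only while $\norm{\btn - \bto}_2 > \epsilon$ --- but that is bookkeeping rather than a structural flaw.
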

\vskip1ex
\begin{remark}
The sample complexity required by Theorem~\ref{thm:fcht-highd} is identical to the one required by analyses for high dimensional sparse recovery \cite{JainTK14}, save constants. Also note that \alg-HD can tolerate the same corruption index as \alg-FC.
\end{remark}


\section{Experiments}
\label{sec:exps}

\begin{figure*}[t!]
	\centering
	\subfigure[\hspace*{-3ex}]{
		\includegraphics[scale=0.175]{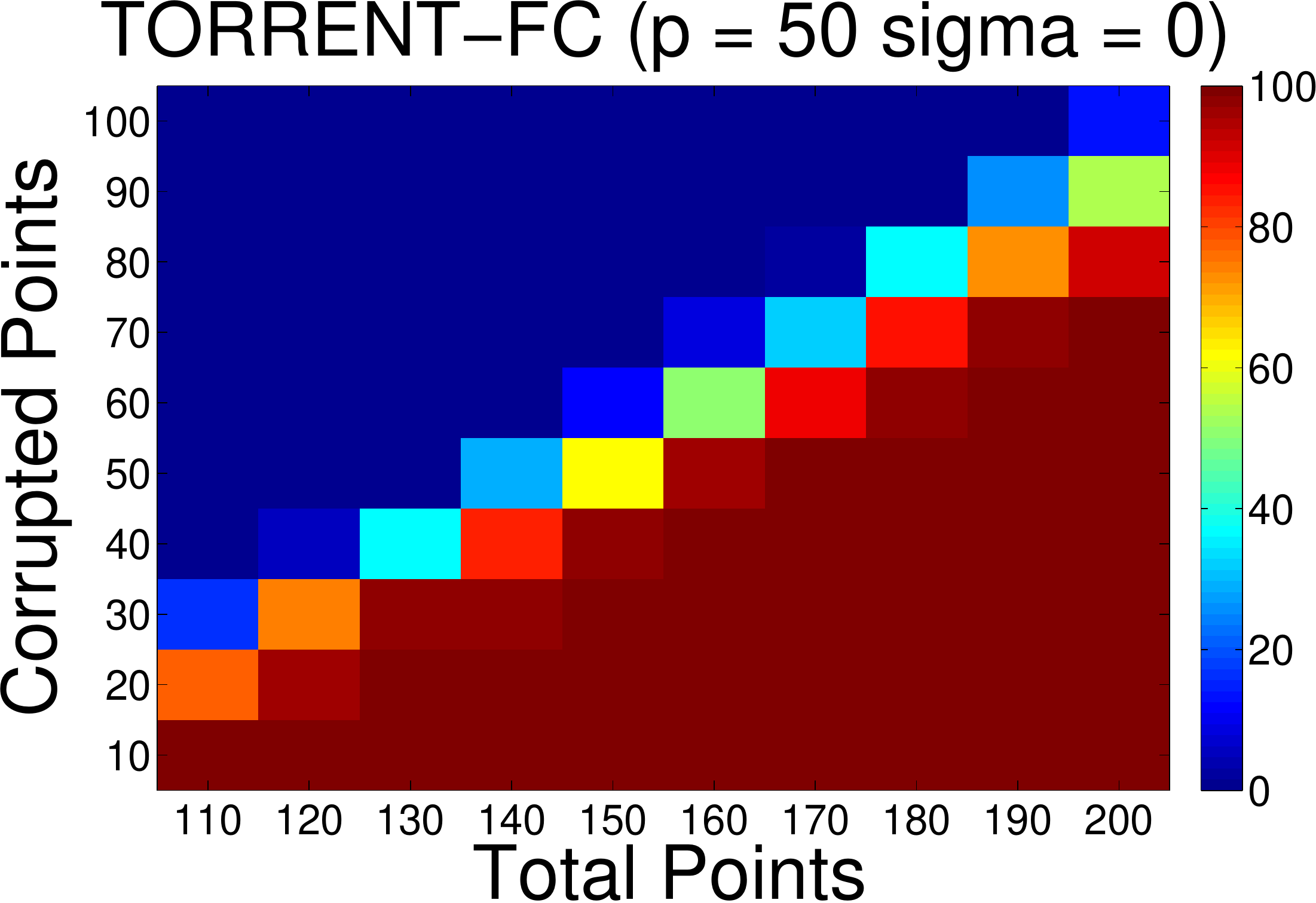}
	}\hspace*{-1ex}
	\subfigure[\hspace*{-3ex}]{
		\includegraphics[scale=0.175]{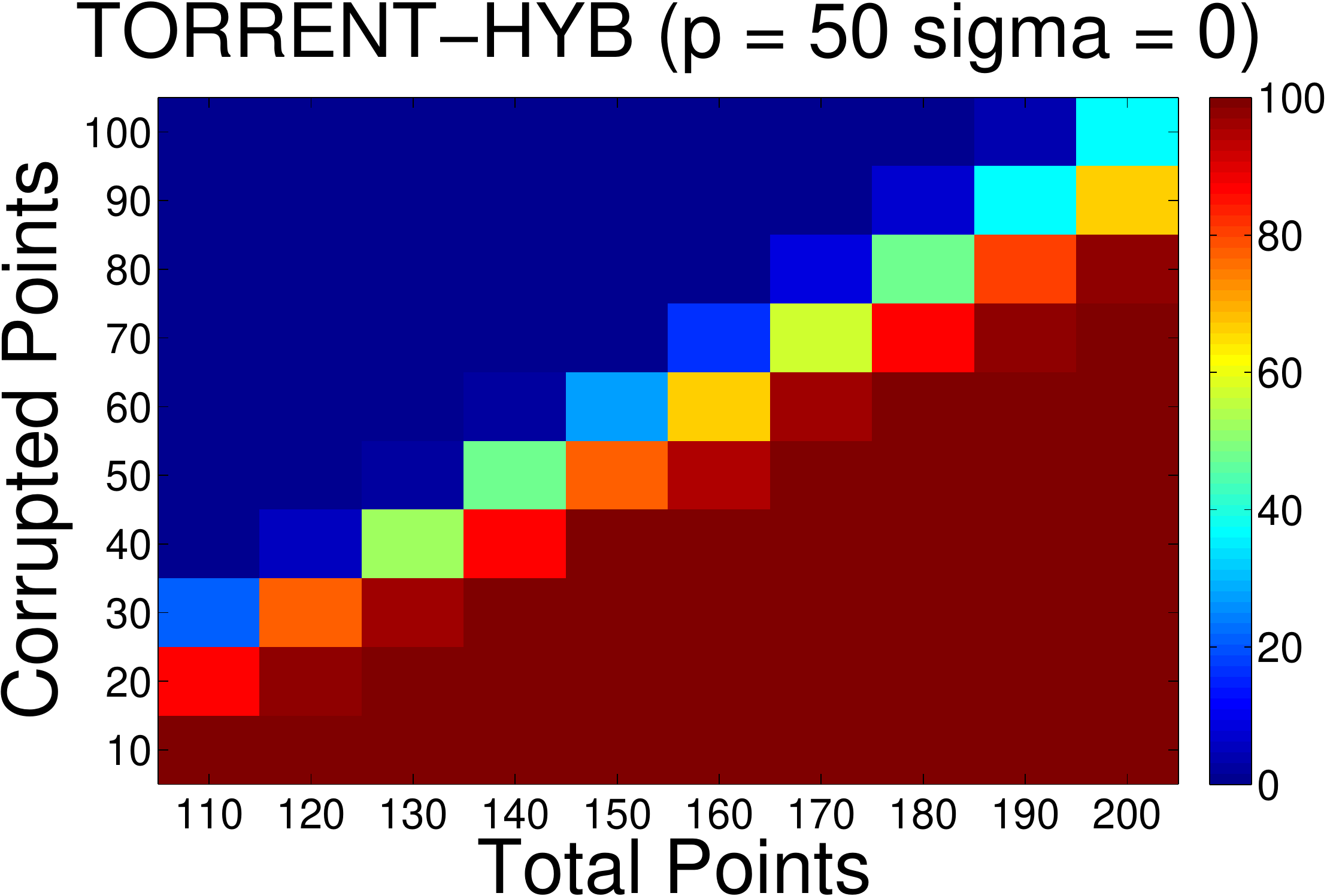}
	}\hspace*{-1ex}
	\subfigure[\hspace*{-3ex}]{
		\includegraphics[scale=0.175]{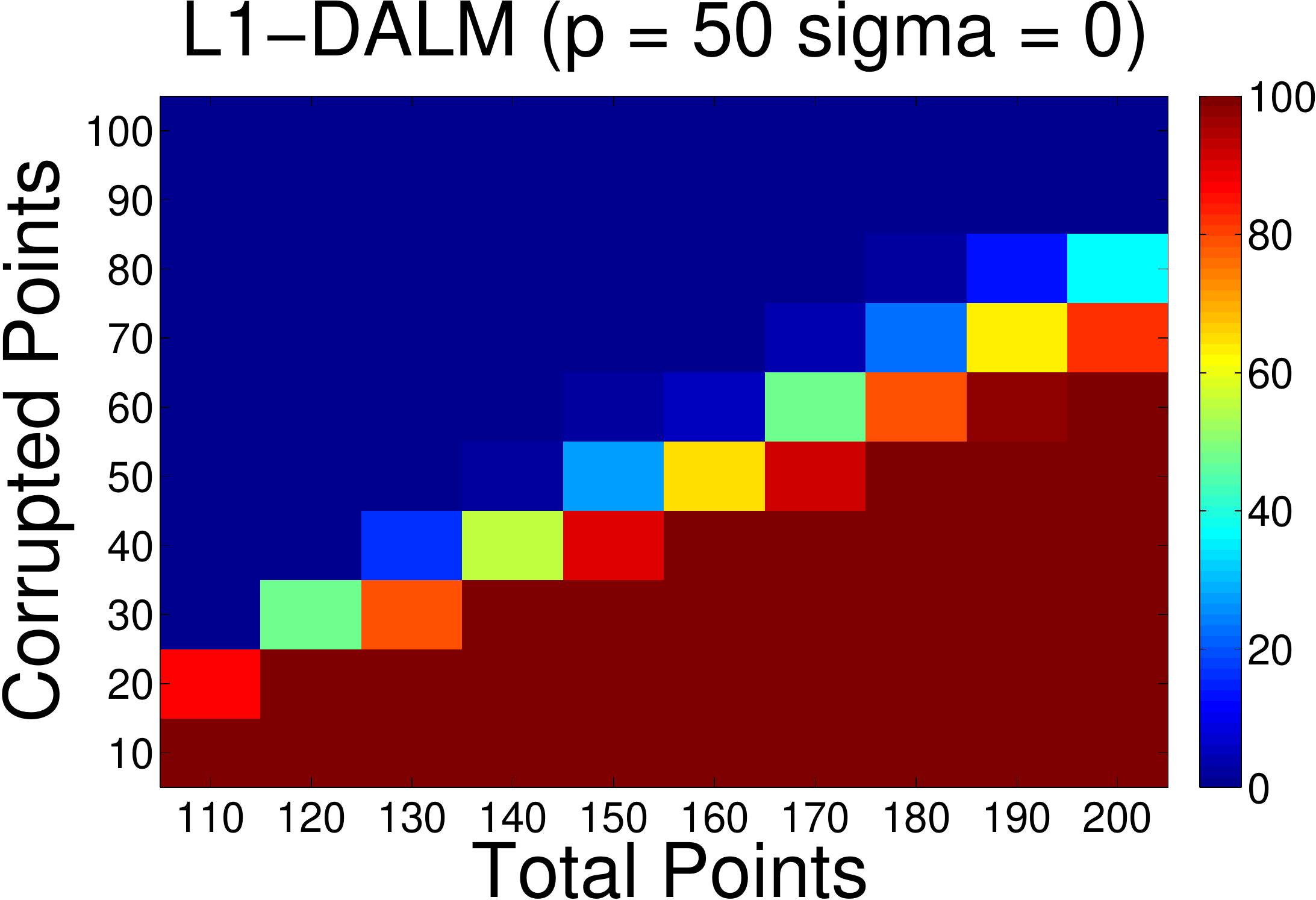}
	}\hspace*{-1ex}
	\subfigure[\hspace*{-3ex}]{
		\includegraphics[scale=0.175]{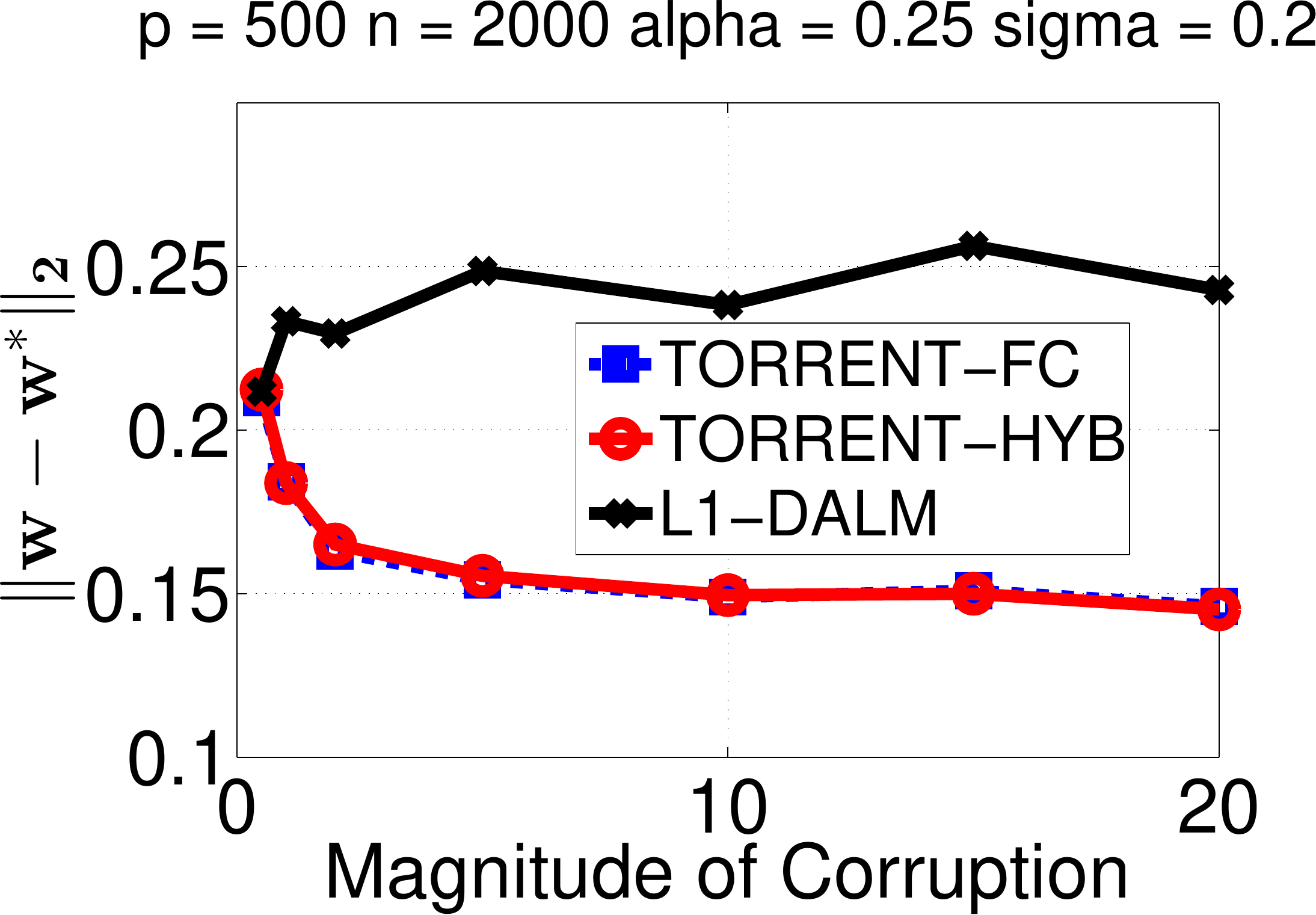}
	}
\hspace{2ex}
\caption{\small{(a), (b) and (c) Phase-transition diagrams depicting the recovery properties of the \alg-FC, \alg-HYB and $L_1$ algorithms. The colors red and blue represent a high and low probability of success resp. A method is considered successful in an experiment if it recovers $\bto$ upto a $10^{-4}$ relative error. Both variants of \alg can be seen to recover $\bto$ in presence of larger number of corruptions than the $L_1$ solver. (d) Variation in recovery error with the magnitude of corruption. As the corruption is increased, \alg-FC and \alg-HYB show improved performance while the problem becomes more difficult for the $L_1$ solver.}}
\label{fig:hm}
\end{figure*}

Several numerical simulations were carried out on linear regression problems in low-dimensional, as well as sparse high-dimensional settings. The experiments show that \alg not only offers statistically better recovery properties as compared to $L_1$-style approaches, but that it can be more than an order of magnitude faster as well.

\textbf{Data}: For the low dimensional setting, the regressor $\bto\in \R^p$ was chosen to be a random unit norm vector. Data was sampled as $\mathbf{x_i} \sim \cN(0,I_p)$ and response variables were generated as $y_i^\ast = \ip{\bto}{\x_i}$. The set of corrupted points $\overline{S}_\ast$ was selected as a uniformly random $(\alpha n)$-sized subset of $[n]$ and the corruptions were set to $b_i \sim U\br{-5 \norm{\mathbf{\y^\ast}}_{\infty}, 5 \norm{\mathbf{\y^\ast}}_{\infty}}$ for $i \in \overline{S}_\ast$. The corrupted responses were then generated as $y_i = y_i^\ast + b_i + \ve_i$ where $\ve_i\sim\cN(0, \sigma^2)$. For the sparse high-dimensional setting, $\text{supp}(\bto)$ was selected to be a random $s^*$-sized subset of $[p]$. Phase-transition diagrams (Figure \ref{fig:hm}) were generated by repeating each experiment 100 times. For all other plots, each experiment was run over 20 random instances of the data and the plots were drawn to depict the mean results.

\textbf{Algorithms}: We compared various variants of our algorithm \alg to the regularized $L_1$ algorithm for robust regression \cite{WrightM10,NguyenT13}. Note that the $L_1$ problem can be written as $\min_\z \norm{\z}_1 \text{s.t.} A\z=\y$, where $A= \left[X^\top\ \frac{1}{\lambda}I_{m\times m}\right]$ and  $\mathbf{z^*} = [\mathbf{w^{*\top}}\ \lambda \mathbf{b^\top}]^\top$. We used the Dual Augmented Lagrange Multiplier (DALM) $L_1$ solver implemented by ~\cite{YangGZSM12} to solve the $L_1$ problem. We ran a fine tuned grid search over the $\lambda$ parameter for the $L_1$ solver and quoted the best results obtained from the search. In the low-dimensional setting, we compared the recovery properties of \alg-FC (Algorithm~\ref{algo:update-fc}) and \alg-HYB (Algorithm~\ref{algo:update-hybrid}) with the DALM-$L_1$ solver, while for the high-dimensional case, we compared \alg-HD against the DALM-$L_1$ solver. Both the $L_1$ solver, as well as our methods, were implemented in Matlab and were run on a single core $2.4$GHz machine with $8$ GB RAM. 

\textbf{Choice of $L_1$-solver}: An extensive comparative study of various $L_1$ minimization algorithms was performed by \cite{YangGZSM12} who showed that the DALM and Homotopy solvers outperform other counterparts both in terms of recovery properties, and timings. We extended their study to our observation model and found the DALM solver to be significantly better than the other $L_1$ solvers; see Figure~\ref{fig:app} in the appendix. We also observed, similar to \cite{YangGZSM12}, that the Approximate Message Passing (AMP) solver diverges on our problem as the input matrix to the $L_1$ solver is a non-Gaussian matrix $A=[X^T \frac{1}{\lambda}I]$.

\textbf{Evaluation Metric}: We measure the performance of various algorithms using the standard $L_2$ error:  $r_{\widehat\bt} = \norm{\widehat\bt - \bto}_2$. For the phase-transition plots (Figure \ref{fig:hm}), we deemed an algorithm successful on an instance if it obtained a model $\widehat\bt$ with error $r_{\widehat\bt} < 10^{-4}\cdot\norm{\bto}_2$. We also measured the CPU time required by each of the methods, so as to compare their scalability.

\subsection{Low Dimensional Results}

\quad\textbf{Recovery Property}: The phase-transition plots presented in Figure \ref{fig:hm} represent our recovery experiments in graphical form. Both the fully-corrective and hybrid variants of \alg show better recovery properties than the $L_1$-minimization approach, indicated by the number of runs in which the algorithm was able to correctly recover $\bto$ out of a 100 runs. Figure \ref{fig:recoveryPlots} shows the variation in recovery error as a function of $\alpha$ in the presence of white noise and exhibits the superiority of \alg-FC and \alg-HYB over $L_1$-DALM. Here again, \alg-FC and \alg-HYB achieve significantly lesser recovery error than $L_1$-DALM for all $\alpha <= 0.5$. Figure \ref{fig:app} in the appendix show that the variations of $\norm{\widehat\bt - \bto}_2$ with varying $p, \sigma$ and $n$ follow a similar trend with \alg having significantly lower recovery error in comparison to the $L_1$ approach. 

Figure ~\ref{fig:hm}(d) brings out an interesting trend in the recovery property of \alg. As we increase the magnitude of corruption from $U\br{-\norm{\y^\ast}_\infty, \norm{\y^\ast}_\infty}$ to $U\br{-20\norm{\y^\ast}_\infty, 20\norm{\y^\ast}_\infty}$, the recovery error for \alg-HYB and \alg-FC decreases as expected since it becomes easier to identify the grossly corrupted points. However the $L_1$-solver was unable to exploit this observation and in fact exhibited an increase in recovery error.
\begin{figure}[t!]
	\centering
	\hspace*{-1ex}
	\subfigure[\hspace*{-3ex}]{
		\includegraphics[scale=0.175]{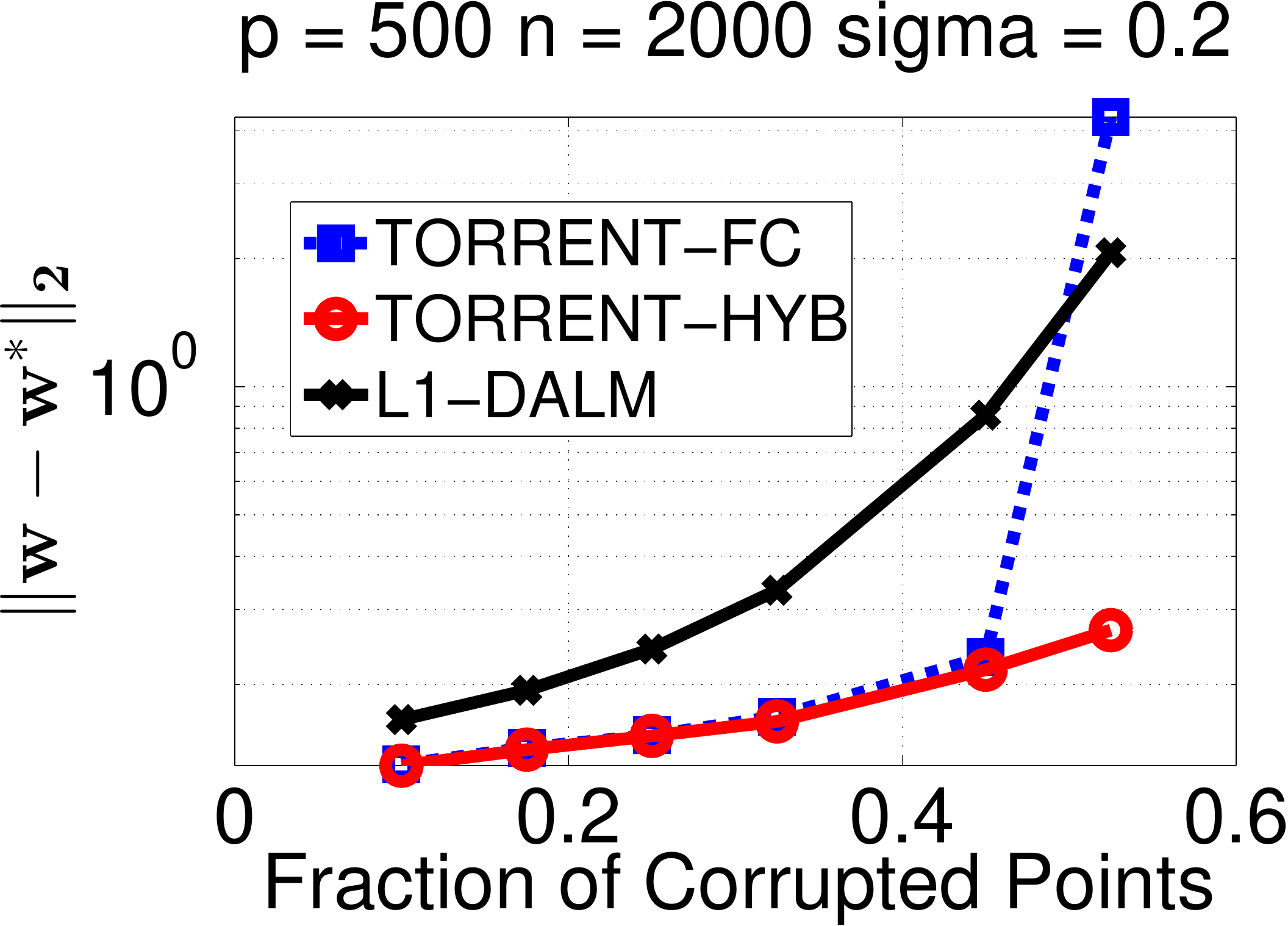}
	}\hspace*{-1ex}
	\subfigure[\hspace*{-3ex}]{
		\includegraphics[scale=0.1875]{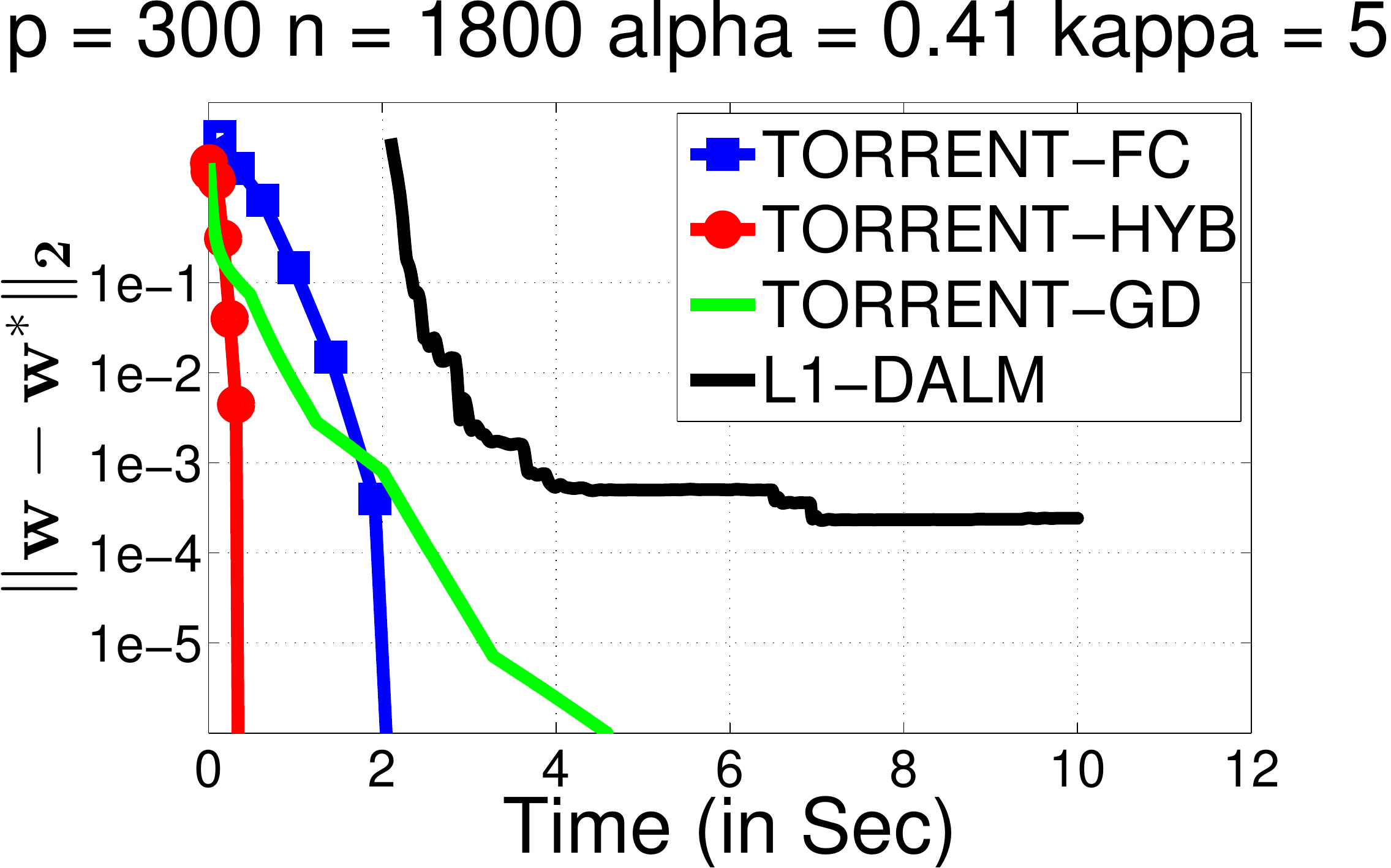}
	}\hspace*{-1ex}
	\subfigure[\hspace*{-3ex}]{
		\includegraphics[scale=0.175]{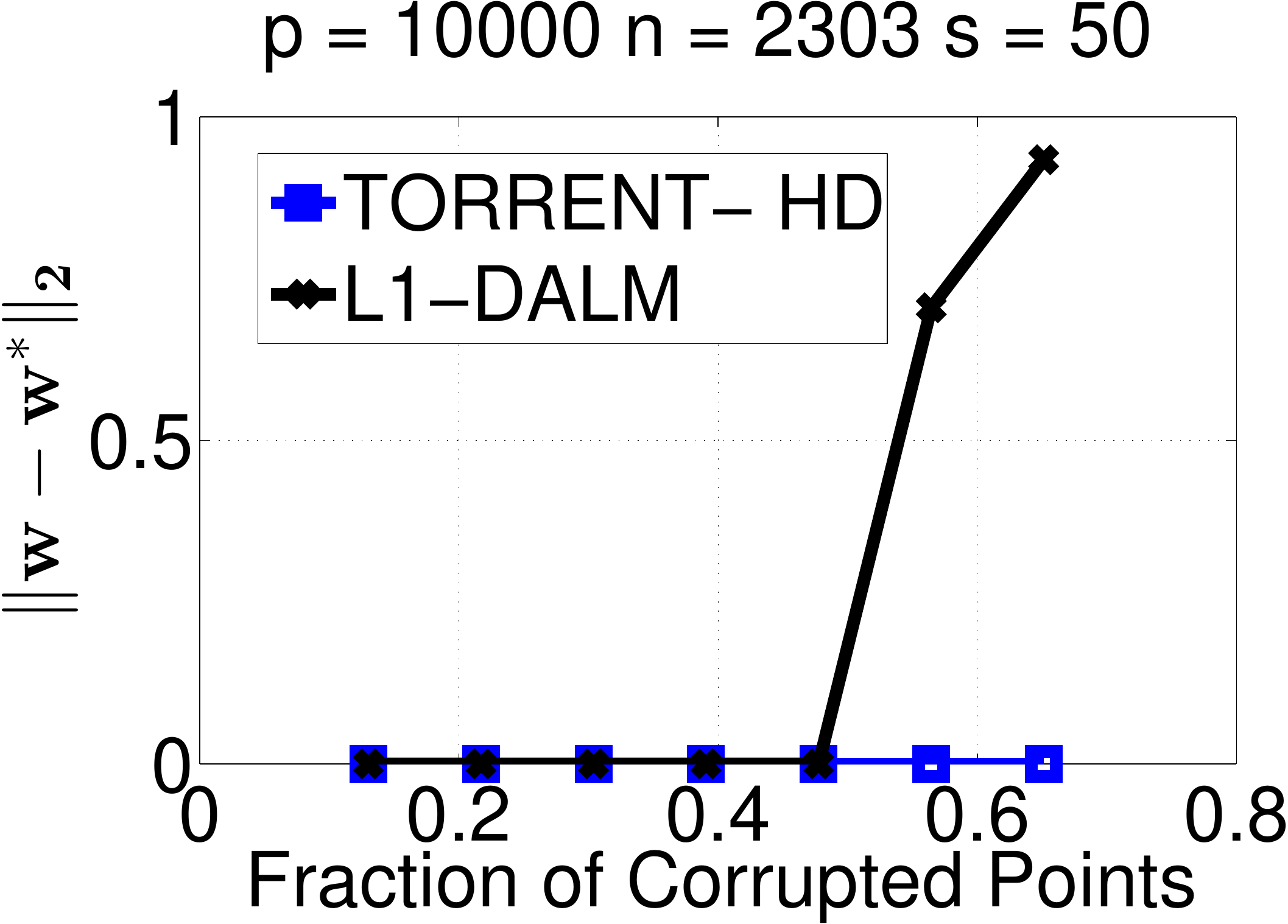}
	}\hspace*{-1ex}
	\subfigure[\hspace*{-3ex}]{
		\includegraphics[scale=0.175]{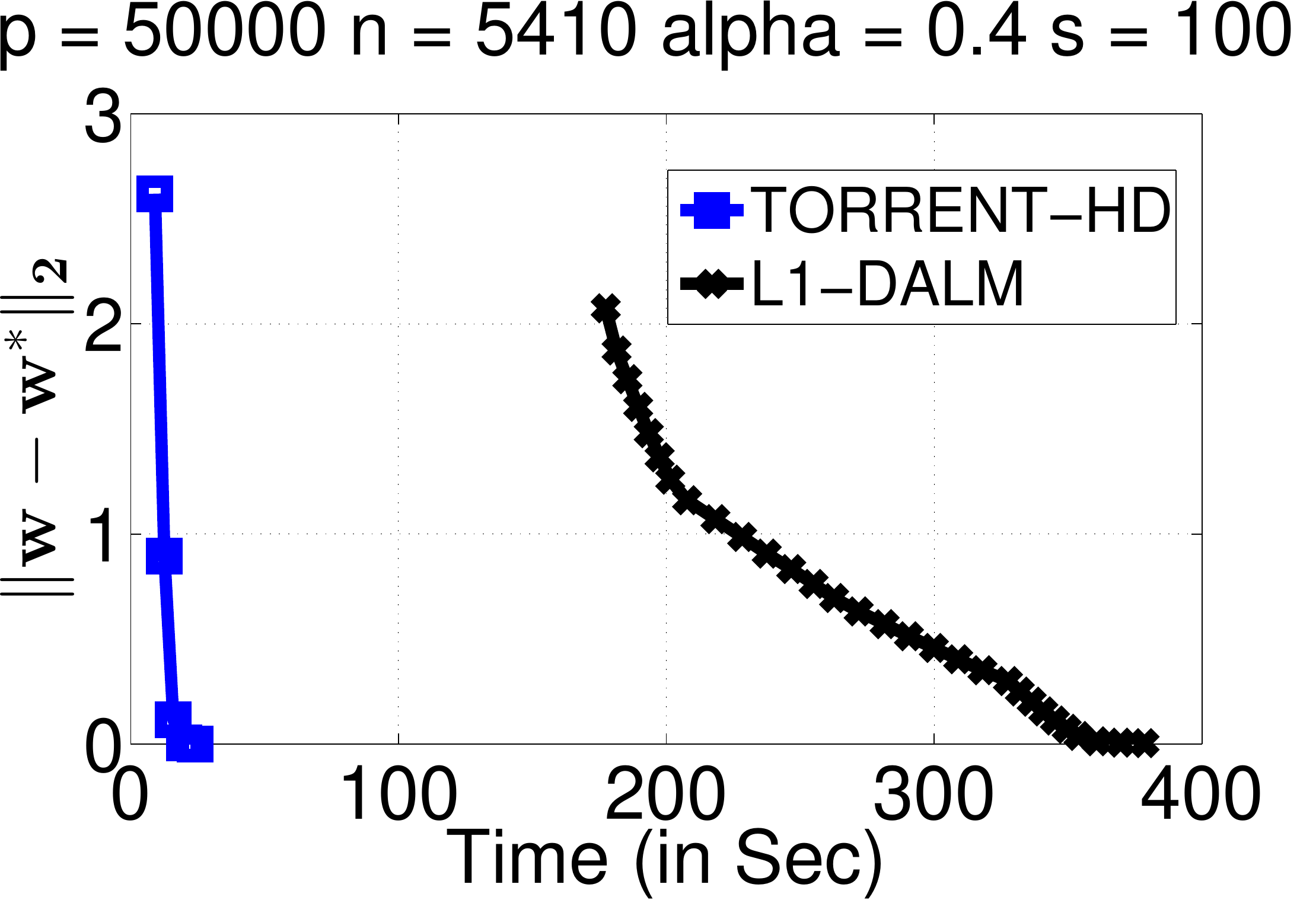}
	}
\hspace{2ex}
\caption{\small{(a), (b) and (c) Phase-transition diagrams depicting the recovery properties of the \alg-FC, \alg-HYB and $L_1$ algorithms. The colors red and blue represent a high and low probability of success resp. A method is considered successful in an experiment if it recovers $\bto$ upto a $10^{-4}$ relative error. Both variants of \alg can be seen to recover $\bto$ in presence of larger number of corruptions than the $L_1$ solver. (d) Variation in recovery error with the magnitude of corruption. As the corruption is increased, \alg-FC and \alg-HYB show improved performance while the problem becomes more difficult for the $L_1$ solver.}}
\label{fig:recoveryPlots}
\end{figure}

\textbf{Run Time}: In order to ascertain the recovery guarantees for \alg on ill-conditioned problems, we performed an experiment where data was sampled as $\x_i \sim \cN(\vz, \Sigma)$ where $\text{diag}(\Sigma)\sim U(0, 5)$. Figure ~\ref{fig:recoveryPlots} plots the recovery error as a function of time. \alg-HYB was able to correctly recover $\bto$ about $50\times$ faster than $L_1$-DALM which spent a considerable amount of time pre-processing the data matrix $X$. Even after allowing the $L_1$ algorithm to run for 500 iterations, it was unable to reach the desired residual error of $10^{-4}$. Figure \ref{fig:recoveryPlots} also shows that our \alg-HYB algorithm is able to converge to the optimal solution much faster than \alg-FC or \alg-GD. This is because \alg-FC solves a least square problem at each step and thus, even though it requires significantly fewer iterations to converge, each iteration in itself is very expensive. While each iteration of \alg-GD is cheap, it is still limited by the slow $\O{(1-\frac{1}{\kappa})^t}$ convergence rate of the gradient descent algorithm, where $\kappa$ is the condition number of the covariance matrix. \alg-HYB, on the other hand, is able to combine the strengths of both the methods to achieve faster convergence. 

\subsection{High Dimensional Results}
\quad\textbf{Recovery Property}: Figure \ref{fig:recoveryPlots} shows the variation in recovery error in the high-dimensional setting as the number of corrupted points was varied. For these experiments, $n$ was set to $5s^*\log(p)$ and the fraction of corrupted points $\alpha$ was varied from $0.1$ to $0.7$. While $L_1$-DALM fails to recover $\bto$ for $\alpha > 0.5$, \alg-HD offers perfect recovery even for $\alpha$ values upto 0.7.

\textbf{Run Time}: Figure \ref{fig:recoveryPlots} shows the variation in recovery error as a function of run time in this setting. $L_1$-DALM was found to be an order of magnitude slower than \alg-HD, making it infeasible for sparse high-dimensional settings. One key reason for this is that the $L_1$-DALM solver is significantly slower in identifying the set of clean points. For instance, whereas \alg-HD was able to identify the clean set of points in only $5$ iterations, it took $L_1$ around $250$ iterations to do the same.


\clearpage

\bibliographystyle{plain}
\bibliography{refs}

\appendix
\allowdisplaybreaks

\section{Convergence Guarantees with Dense Noise and Sparse Corruptions}
\label{app:dense-noise}
We will now present recovery guarantees for the \alg-FC algorithm when both, dense noise, as well as sparse adversarial corruptions are present. Extensions for \alg-GD and \alg-HYB will follow similarly.

\begin{thm}
\label{thm:fcht-dense-noise}
Let $X = \bs{\x_1, \dots, \x_n}\in \R^{p\times n}$ be the given data matrix and $\y=X^T\bto+\b+\vve$ be the corrupted output with sparse corruptions $\|\b\|_0\leq \alpha\cdot n$ as well as dense bounded noise $\vve$. Let Algorithm~\ref{algo:update-fc} be executed on this data with the thresholding parameter set to $\beta \geq \alpha$. Let $\Sigma_0$ be an invertible matrix such that $\Xt=\isSo X$ satisfies the SSC and SSS properties at level $\gamma$ with constants $\lambda_{\gamma}$ and $\Lambda_{\gamma}$ respectively (see Definition~\ref{defn:ssc-sss}). If the data satisfies $\frac{4\sqrt{\Lambda_{\beta}}}{\sqrt{\lambda_{1-\beta}}} < 1$, then after $t = \O{\log\br{\frac{1}{\sqrt n}\frac{\norm{\b}_2}{\epsilon}}}$ iterations, Algorithm~\ref{algo:update-fc} obtains an $\epsilon$-accurate solution $\btt$ i.e. $\norm{\btt - \bto}_2 \leq \epsilon + C\frac{\norm{\vve}_2}{\sqrt n}$ for some constant $C > 0$.
\end{thm}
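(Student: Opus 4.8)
The plan is to mirror the proof of Theorem~\ref{thm:fcht} (the noiseless case), carrying the dense noise $\vve$ through as an additional additive term and tracking how it prevents exact recovery, settling instead at a noise floor. First I would write the fully corrective update in closed form. With $C_t = X_{S_t}X_{S_t}^\top$ and the model $\y = X^\top\bto + \b + \vve$, the FC step gives
\[
\btn = C_t^{-1}X_{S_t}\y_{S_t} = \bto + C_t^{-1}X_{S_t}\br{\b_{S_t}+\vve_{S_t}},
\]
so that, writing $\u := C_t^{-1}X_{S_t}(\b_{S_t}+\vve_{S_t}) = \btn - \bto$ and $\p := \vve - X^\top\u$, the residual at the next step is $\r^{t+1} = \b + \p$. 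The key structural difference from the noiseless analysis is the presence of $\vve_{S_t}$ inside $\u$ and of $\vve$ inside $\p$: these terms do not vanish as the iterates improve, and will ultimately govern the irreducible error.

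Next I would invoke the optimality of hard thresholding, $\norm{\r^{t+1}_{S_{t+1}}}_2^2 \leq \norm{\r^{t+1}_{S_\ast}}_2^2$, where $S_\ast = \overline{\text{supp}(\b)}$. Using $\b_{S_\ast} = \vz$ and cancelling the contribution of the common indices $S_{t+1}\cap S_\ast$ (exactly as in the noiseless sketch), this reduces to
\[
\norm{\b_{S_{t+1}}}_2^2 \leq \norm{\p_{S_\ast\backslash S_{t+1}}}_2^2 - 2\,\b_{S_{t+1}}^\top\p_{S_{t+1}}.
\]
The crucial bookkeeping point is that $\b_{S_{t+1}}$ is supported on $S_{t+1}\backslash S_\ast \subseteq \text{supp}(\b)$, a set of size at most $\beta n$; restricting the cross term to this support is what allows the small smoothness constant $\Lambda_\beta$ (rather than $\Lambda_{1-\beta}$) to appear, mirroring the mechanism behind the $\Lambda_\beta/\lambda_{1-\beta}$ factor in Theorem~\ref{thm:fcht}. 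Likewise $\abs{S_\ast\backslash S_{t+1}} \leq \beta n$ since $S_\ast\backslash S_{t+1}\subseteq [n]\backslash S_{t+1}$.

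I would then bound each piece via the SSC/SSS properties of $\Xt = \isSo X$, using $X_S^\top C_t^{-1}X_{S'} = \Xt_S^\top(\Xt_{S_t}\Xt_{S_t}^\top)^{-1}\Xt_{S'}$ as in step $\zeta_1$. Setting $\tilde\u := (\Xt_{S_t}\Xt_{S_t}^\top)^{-1}\Xt_{S_t}(\b_{S_t}+\vve_{S_t})$, the projection identity $\Xt_{S_t}^\top(\Xt_{S_t}\Xt_{S_t}^\top)^{-1}\Xt_{S_t}\preceq \I$ gives $\norm{\tilde\u}_2 \leq \frac{1}{\sqrt{\lambda_{1-\beta}}}\norm{\b_{S_t}+\vve_{S_t}}_2$, while SSS on the size-$\beta n$ sets gives $\norm{\Xt_{S_\ast\backslash S_{t+1}}^\top\tilde\u}_2, \norm{\Xt_{S_{t+1}\backslash S_\ast}^\top\tilde\u}_2 \leq \sqrt{\Lambda_\beta}\norm{\tilde\u}_2$. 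Writing $\rho := \sqrt{\Lambda_\beta/\lambda_{1-\beta}}$ and bounding $\norm{\vve_T}_2\leq\norm{\vve}_2$ on every subset $T$, the displayed inequality becomes a quadratic in $b_{t+1}:=\norm{\b_{S_{t+1}}}_2$ of the form $b_{t+1}^2 - 2Db_{t+1} - D^2 \leq 0$ with $D = \rho\,b_t + (1+\rho)\norm{\vve}_2$ (here $b_t := \norm{\b_{S_t}}_2$). Solving yields $b_{t+1} \leq (1+\sqrt2)D$, i.e. a contraction $b_{t+1} \leq q\,b_t + c\,\norm{\vve}_2$ with $q = (1+\sqrt2)\rho$ and $c = (1+\sqrt2)(1+\rho)$; the hypothesis $\frac{4\sqrt{\Lambda_\beta}}{\sqrt{\lambda_{1-\beta}}} < 1$ forces $\rho < 1/4$ and hence $q < 1$.

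Finally I would unroll the recursion to obtain $b_t \leq q^t\norm{\b}_2 + \frac{c}{1-q}\norm{\vve}_2$ (using $b_0 = \norm{\b}_2$ since $S_0 = [n]$), and translate this into a parameter error bound through $\btn - \bto = \u = \isSo\tilde\u$, which gives $\norm{\btn-\bto}_2 \leq \frac{\norm{\isSo}}{\sqrt{\lambda_{1-\beta}}}(b_t + \norm{\vve}_2)$. After $t = \O{\log\br{\frac{1}{\sqrt n}\frac{\norm{\b}_2}{\epsilon}}}$ iterations the geometric term drops below $\epsilon$, leaving only the noise floor; since for the designs of interest $\lambda_{1-\beta} = \Om{n}$, the factor $1/\sqrt{\lambda_{1-\beta}}$ supplies the claimed $1/\sqrt n$ scaling, so $\norm{\btt-\bto}_2 \leq \epsilon + C\norm{\vve}_2/\sqrt n$ with $C$ absorbing the $\Sigma_0$- and $\lambda_{1-\beta}/n$-dependent constants. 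The main obstacle is the middle step: unlike the noiseless case, $\norm{\b_{S_t}}_2$ does not vanish but plateaus at the noise floor, so near convergence the cross term $2\,\b_{S_{t+1}}^\top\p_{S_{t+1}}$ is of the same order as $\norm{\b_{S_{t+1}}}_2^2$ itself. The quadratic-inequality argument must therefore be arranged so that the contraction factor $q$ stays strictly below one even after the noise-induced cross terms are folded in, which is precisely what the stricter hypothesis $4\sqrt{\Lambda_\beta/\lambda_{1-\beta}} < 1$ (versus $(1+\sqrt2)\Lambda_\beta/\lambda_{1-\beta} < 1$ in the noiseless theorem) buys us.
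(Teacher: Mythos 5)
Your proof is correct, but it takes a genuinely different route from the paper's own proof in Appendix~\ref{app:dense-noise}. You extend the noiseless Theorem~\ref{thm:fcht} machinery: the closed-form normal equation $\btn = \bto + C_t^{-1}X_{S_t}\br{\b_{S_t}+\vve_{S_t}}$, expansion of the squared thresholding inequality with cancellation over $S_{t+1}\cap S_\ast$, a Cauchy--Schwarz bound on the cross term (exploiting that $\b_{S_{t+1}}$ is supported on $S_{t+1}\backslash S_\ast$), and a quadratic inequality in $\norm{\b_{S_{t+1}}}_2$. The paper instead abandons the closed form and uses only a \emph{suboptimality comparison}: since $\btn$ minimizes the least-squares objective on $S_t$, we have $\norm{\y_{S_t}-X_{S_t}^\top\btn}_2 \leq \norm{\y_{S_t}-X_{S_t}^\top\bto}_2 = \norm{\vve_{S_t}+\b_{S_t}}_2$, whence the triangle inequality and SSC give $\norm{\btn-\bto}_2 \leq \frac{2}{\sqrt{\lambda_{1-\beta}}}\br{\norm{\vve}_2+\norm{\b_{S_t}}_2}$; the thresholding inequality is then processed by restricting to $\fa_{t+1}=S_{t+1}\backslash S_\ast$ and $\md_{t+1}=S_\ast\backslash S_{t+1}$ and applying the triangle inequality directly (no squares, no quadratic), giving $\norm{\b_{S_{t+1}}}_2 \leq 2\sqrt{\Lambda_\beta}\norm{\btn-\bto}_2+\sqrt{2}\norm{\vve}_2$ and hence a linear recursion with contraction factor exactly $\eta = \frac{4\sqrt{\Lambda_\beta}}{\sqrt{\lambda_{1-\beta}}}$, the quantity in the hypothesis. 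Comparing the two: your route yields a sharper contraction factor, $(1+\sqrt{2})\sqrt{\Lambda_\beta/\lambda_{1-\beta}}$ versus the paper's $4\sqrt{\Lambda_\beta/\lambda_{1-\beta}}$, under the same hypothesis, and it makes the $\Sigma_0$-dependence explicit through $\norm{\isSo}$ and the projection identity $\Xt_{S_t}^\top(\Xt_{S_t}\Xt_{S_t}^\top)^{-1}\Xt_{S_t} \preceq \I$; the paper's route is algebraically lighter (it never inverts $C_t$ or solves a quadratic), and its objective-comparison bound on the FC step is the template reused in the GD and high-dimensional analyses. You also correctly diagnose why the hypothesis degrades from $(1+\sqrt{2})\Lambda_\beta/\lambda_{1-\beta}$ to a square-root form: the vector $\b_{S_t}+\vve_{S_t}$ hit by $X_{S_t}$ is dense, so the second $\sqrt{\Lambda_\beta}$ factor available in the noiseless proof (from the sparsity of $\b_{S_t}$) is lost. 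Both proofs share the same final step of absorbing $\sqrt{\lambda_{1-\beta}} = \Om{\sqrt{n}}$ and design-dependent factors into the constant $C$.
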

\begin{proof}
We being by observing that the optimality of the model $\btn$ on the active set $S_t$ ensures
\[
\norm{\y_{S_t} - X_{S_t}^\top\btn}_2 = \norm{X_{S_t}^\top(\bto - \btn) + \vve_{S_t} + \b_{S_t}}_2 \leq \norm{\y_t - X_{S_t}^\top\bto}_2 = \norm{\vve_{S_t} + \b_{S_t}}_2,
\]
which, upon the application of the triangle inequality, gives us
\[
\norm{X_{S_t}^\top(\bto - \btn)}_2 \leq 2\norm{\vve_{S_t} + \b_{S_t}}_2.
\]
Since $\norm{X_{S_t}^\top(\bto - \btn)}_2 \geq \sqrt{\lambda_{1-\beta}}\norm{\bto - \btn}_2$, we get
\[
\norm{\bto - \btn}_2 \leq \frac{2}{\sqrt{\lambda_{1-\beta}}}\norm{\vve_{S_t} + \b_{S_t}}_2 \leq \frac{2}{\sqrt{\lambda_{1-\beta}}}\br{\norm{\vve}_2+\norm{\b_{S_t}}_2}.
\]
The hard thresholding step, on the other hand, guarantees that
\begin{align*}
\norm{X_{S_{t+1}}^\top(\bto - \btn) + \vve_{S_{t+1}} + \b_{S_{t+1}}}_2^2 &= \norm{\y_{S_{t+1}} - X_{S_{t+1}}^\top\btn}_2^2\\
																		 &\leq \norm{\y_{S_\ast} - X_{S_\ast}^\top\btn}_2\\
																		 &= \norm{X_{S_{\ast}}^\top(\bto - \btn) + \vve_{S_{\ast}}}_2^2.
\end{align*}
As before, let $\fa_{t+1} = S_{t+1}\backslash S_\ast$ and $\md_{t+1} = S_\ast\backslash S_{t+1}$. Then we have
\[
\norm{X_{\fa_{t+1}}^\top(\bto - \btn) + \vve_{\fa_{t+1}} + \b_{\fa_{t+1}}}_2 \leq \norm{X_{\md_{t+1}}^\top(\bto - \btn) + \vve_{\md_{t+1}}}_2.
\]
An application of the triangle inequality and the fact that $\norm{\b_{\fa_{t+1}}}_2 = \norm{\b_{S_{t+1}}}$ gives us
\begin{align*}
\norm{\b_{S_{t+1}}}_2 &\leq \norm{X_{\md_{t+1}}^\top(\bto - \btn)}_2 + \norm{X_{\fa_{t+1}}^\top(\bto - \btn)}_2 + \norm{\vve_{\fa_{t+1}}}_2 + \norm{\vve_{\md_{t+1}}}_2\\
					&\leq 2\sqrt{\Lambda_{\beta}}\norm{\bto - \btn}_2 + \sqrt{2}\norm{\vve}_2,\\
					&= \frac{4\sqrt{\Lambda_{\beta}}}{\sqrt{\lambda_{1-\beta}}}\norm{\b_{S_t}}_2 + (\frac{4\sqrt{\Lambda_{\beta}}}{\sqrt{\lambda_{1-\beta}}}+\sqrt 2)\norm{\vve}_2\\
					&\leq \eta\cdot\norm{\b_{S_t}}_2 + (1+\sqrt 2)\norm{\vve}_2,
\end{align*}
where the second step uses the fact that $\max\bc{\abs{\fa_{t+1}},\abs{\md_{t+1}}} \leq \beta\cdot n$ and the Cauchy-Schwartz inequality, and the last step uses the fact that for sufficiently small $\beta$, we have $\eta := \frac{4\sqrt{\Lambda_{\beta}}}{\sqrt{\lambda_{1-\beta}}}$. Using the inequality for $\norm{\btn-\bto}_2$ again gives us
\begin{align*}
\norm{\bto - \btn}_2 &\leq \frac{2}{\sqrt{\lambda_{1-\beta}}}\br{\norm{\vve}_2+\norm{\b_{S_t}}_2}\\
					 &\leq \frac{4+2\sqrt 2}{\sqrt{\lambda_{1-\beta}}}\norm{\vve}_2 + \frac{2\cdot\eta^t}{\sqrt{\lambda_{1-\beta}}}\norm{\b}_2
\end{align*}
For large enough $n$ we have $\sqrt{\lambda_{1-\beta}} \geq \O{\sqrt n}$, which completes the proof.
\end{proof}

Notice that for random Gaussian noise, this result gives the following convergence guarantee.

\begin{cor}
Let the date be generated as before with random Gaussian dense noise i.e. $\y=X^T\bto+\b+\vve$ with $\|\b\|_0\leq \alpha\cdot n$ and $\vve \sim \cN(\vz,\sigma^2\cdot I)$. Let Algorithm~\ref{algo:update-fc} be executed on this data with the thresholding parameter set to $\beta \geq \alpha$. Let $\Sigma_0$ be an invertible matrix such that $\Xt=\isSo X$ satisfies the SSC and SSS properties at level $\gamma$ with constants $\lambda_{\gamma}$ and $\Lambda_{\gamma}$ respectively (see Definition~\ref{defn:ssc-sss}). If the data satisfies $\frac{4\sqrt{\Lambda_{\beta}}}{\sqrt{\lambda_{1-\beta}}} < 1$, then after $t = \O{\log\br{\frac{1}{\sqrt n}\frac{\norm{\b}_2}{\epsilon}}}$ iterations, Algorithm~\ref{algo:update-fc} obtains an $\epsilon$-accurate solution $\btt$ i.e. $\norm{\btt - \bto}_2 \leq \epsilon + 2\sigma C$, where $C > 0$ is the constant in Theorem~\ref{thm:fcht-dense-noise}.
\end{cor}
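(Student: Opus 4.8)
The plan is to derive this corollary as a near-immediate consequence of Theorem~\ref{thm:fcht-dense-noise}. Since the data-generation model here ($\y = X^\top\bto + \b + \vve$ with $\norm{\b}_0 \leq \alpha\cdot n$ and bounded dense noise) is precisely the setting covered by that theorem, and since the assumption $\frac{4\sqrt{\Lambda_\beta}}{\sqrt{\lambda_{1-\beta}}} < 1$ is carried over verbatim, I would first simply invoke the theorem to obtain, after $t = \O{\log\br{\frac{1}{\sqrt n}\frac{\norm{\b}_2}{\epsilon}}}$ iterations, the guarantee
\[
\norm{\btt - \bto}_2 \leq \epsilon + C\frac{\norm{\vve}_2}{\sqrt n},
\]
with $C > 0$ the constant from Theorem~\ref{thm:fcht-dense-noise}. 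This reduces the entire corollary to controlling the single quantity $\frac{\norm{\vve}_2}{\sqrt n}$ under the additional Gaussian assumption on $\vve$.

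The second and only substantive step is to bound $\frac{\norm{\vve}_2}{\sqrt n}$ using concentration of the norm of a Gaussian vector. Writing $\vve \sim \cN(\vz, \sigma^2 I_n)$, we have $\norm{\vve}_2^2/\sigma^2 \sim \chi^2_n$, a chi-square random variable with $n$ degrees of freedom whose mean is $n$. By a standard chi-square tail inequality (e.g.\ of Laurent--Massart type), $\Pr{\norm{\vve}_2^2 > 4\sigma^2 n}$ decays exponentially in $n$, so for sufficiently large $n$ we have $\norm{\vve}_2 \leq 2\sigma\sqrt n$, i.e.\ $\frac{\norm{\vve}_2}{\sqrt n} \leq 2\sigma$, with high probability. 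The factor of $2$ is precisely what absorbs the fluctuation of $\chi^2_n$ above its mean; any constant strictly larger than $1$ would do, and the choice here is made only to state a clean bound. Substituting this into the displayed inequality yields $\norm{\btt - \bto}_2 \leq \epsilon + 2\sigma C$, which is the claimed result.

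There is essentially no genuine obstacle here: the corollary is a specialization of an already-proven deterministic-noise theorem, and the work consists entirely of replacing the data-dependent quantity $\frac{\norm{\vve}_2}{\sqrt n}$ by its typical value $\sigma$. The one point requiring mild care is bookkeeping of the probability of success, since the high-probability Gaussian-norm bound must be combined (via a union bound) with whatever high-probability event already underlies the SSC/SSS guarantees used by Theorem~\ref{thm:fcht-dense-noise}; for $n$ large enough both events hold simultaneously with high probability, and the stated conclusion follows.
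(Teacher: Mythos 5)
Your proposal is correct and follows essentially the same route as the paper: invoke Theorem~\ref{thm:fcht-dense-noise} to reduce everything to bounding $\frac{\norm{\vve}_2}{\sqrt n}$, then apply a Laurent--Massart-type chi-squared tail bound to conclude $\norm{\vve}_2 \leq 2\sigma\sqrt n$ with high probability for large enough $n$. The paper's proof is exactly this two-step specialization, differing only in that it writes out the chi-squared concentration inequality explicitly in terms of $\delta$ rather than as an exponential tail in $n$.
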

\begin{proof}
Using tail bounds on Chi-squared distributions \cite{LaurentM00}, we get, with probability at least $1-\delta$,
\[
\norm{\vve}_2^2 \leq \sigma^2\br{n + 2\sqrt{n\log\frac{1}{\delta}} + 2\log\frac{1}{\delta}}.
\]
Thus, for $n > 4\log\frac{1}{\delta}$, we have $\norm{\vve}_2^2 \leq 2\sigma n$ which proves the result.
\end{proof}

\begin{remark}
We note that the design assumptions made by Theorem~\ref{thm:fcht-dense-noise} (i..e $\frac{4\sqrt{\Lambda_{\beta}}}{\sqrt{\lambda_{1-\beta}}} < 1$) are similar to those made by Theorem~\ref{thm:fcht} and would be satisfied with high probability by data sampled from sub-Gaussian distributions (see Appendix~\ref{sec:stat} for details).
\end{remark}

\section{Proof of Theorem~\ref{thm:fcht}}
\label{app:thm-fcht-proof}
\begin{repthm}{thm:fcht}
Let $X = \bs{\x_1, \dots, \x_n}\in \R^{p\times n}$ be the given data matrix and $\y=X^T\bto+\b$ be the corrupted output with $\|\b\|_0\leq \alpha\cdot n$. Let Algorithm~\ref{algo:update-fc} be executed on this data with the thresholding parameter set to $\beta \geq \alpha$. Let $\Sigma_0$ be an invertible matrix such that $\Xt=\isSo X$ satisfies the SSC and SSS properties at level $\gamma$ with constants $\lambda_{\gamma}$ and $\Lambda_{\gamma}$ respectively (see Definition~\ref{defn:ssc-sss}). If the data satisfies $\frac{(1+\sqrt 2)\Lambda_{\beta}}{\lambda_{1-\beta}} < 1$, then after $t = \O{\log\br{\frac{1}{\sqrt n}\frac{\norm{\b}_2}{\epsilon}}}$ iterations, Algorithm~\ref{algo:update-fc} obtains an $\epsilon$-accurate solution $\btt$ i.e. $\norm{\btt - \bto}_2 \leq \epsilon$.
\end{repthm}
\begin{proof}
Let $\r^t = \y - X^\top\btt$ be the vector of residuals at time $t$ and $C_t = X_{S_t}X_{S_t}^\top$. Since $\lambda_\alpha > 0$ (something which we shall establish later), we get
\[
\btn = C_t^{-1}X_{S_t}\y_{S_t} = C_t^{-1}X_{S_t}\br{X_{S_t}^\top\bto + \b_{S_t}} = \bto + C_t^{-1}X_{S_t}\b_{S_t}.
\]
Thus, for any set $S \subset [n]$, we have
\[
\r^{t+1}_S = \y_S - X_S^\top\w_{t+1} = \b_S - X_S^\top C_t^{-1}X_{S_t}\b_{S_t}
\]
This, gives us
\begin{align*}
\norm{\b_{S_{t+1}}}_2^2 &= \norm{\b_{S_{t+1}} - X_{S_{t+1}}^\top C_t^{-1}X_{S_t}\b_{S_t}}_2^2 - \norm{X_{S_{t+1}}^\top C_t^{-1}X_{S_t}\b_{S_t}}_2^2 + 2\cdot\b_{S_{t+1}}^\top X_{S_{t+1}}^\top C_t^{-1}X_{S_t}\b_{S_t}\\
&\stackrel{\zeta_1}{\leq} \norm{\b_{S_{\ast}} - X_{S_{\ast}}^\top C_t^{-1}X_{S_t}\b_{S_t}}_2^2 - \norm{X_{S_{t+1}}^\top C_t^{-1}X_{S_t}\b_{S_t}}_2^2 + 2\cdot\b_{S_{t+1}}^\top X_{S_{t+1}}^\top C_t^{-1}X_{S_t}\b_{S_t}\\
&\stackrel{\zeta_2}{=} \norm{X_{S_\ast}^\top C_t^{-1}X_{S_t}\b_{S_t}}_2^2 - \norm{X_{S_{t+1}}^\top C_t^{-1}X_{S_t}\b_{S_t}}_2^2 + 2\cdot\b_{S_{t+1}}^\top X_{S_{t+1}}^\top C_t^{-1}X_{S_t}\b_{S_t}\\
&\leq \norm{X_{S_\ast\backslash S_{t+1}}^\top C_t^{-1}X_{S_t}\b_{S_t}}_2^2 + 2\cdot\b_{S_{t+1}}^\top X_{S_{t+1}}^\top C_t^{-1}X_{S_t}\b_{S_t}\\
&\stackrel{\zeta_3}{=}\norm{\Xt_{S_\ast\backslash S_{t+1}}^\top \left(\Xt_{S_t}\Xt_{S_t}^T\right)^{-1}\Xt_{S_t}\b_{S_t}}_2^2 + 2\cdot\b_{S_{t+1}}^\top \Xt_{S_{t+1}}^\top \left( \Xt_{S_t}\Xt_{S_t}^T\right)^{-1} \Xt_{S_t}\b_{S_t}\\
&\stackrel{\zeta_4}{\leq} {\frac{\Lambda_{\beta}^2}{\lambda_{1-\beta}^2}}\cdot\norm{\b_{S_t}}_2^2 + 2\cdot\frac{\Lambda_{\beta}}{\lambda_{1-\beta}}\cdot\norm{\b_{S_t}}_2\norm{\b_{S_{t+1}}}_2,
\end{align*}
where $\zeta_1$ follows since the hard thresholding step ensures $\norm{\r^{t+1}_{S_{t+1}}}_2^2 \leq \norm{\r^{t+1}_{S_\ast}}_2^2$ (see Claim~\ref{clm:supp-ord-stat} and use the fact that $\beta \geq \alpha$), $\zeta_2$ notices the fact that $\b_{S_\ast} = \vz$. $\zeta_3$ follows from setting $\Xt=\isSo X$ and $X_S^\top C_t^{-1}X_{S'}=\Xt_S^\top(\Xt_{S_t}\Xt_{S_t}^\top)^{-1}\Xt_{S'}$. $\zeta_4$ follows from the definition of SSC and SSS properties, $\norm{\b_{S_t}}_0 \leq \norm{\b}_0 \leq \beta\cdot n$ and $\abs{S_\ast\backslash S_{t+1}} \leq \beta\cdot n$. Solving the quadratic equation gives us
\begin{align}
\label{eq:fcht-rate}
\norm{\b_{S_{t+1}}}_2 \leq (1 + \sqrt 2)\cdot\frac{\Lambda_{\beta}}{\lambda_{1-\beta}}\cdot\norm{\b_{S_{t}}}_2.
\end{align}
Let $\eta := \frac{(1 + \sqrt 2)\Lambda_{\beta}}{\lambda_{1-\beta}}$ denote the convergence rate in \eqref{eq:fcht-rate}. We shall show below that for a large family of random designs, we have $\eta < 1$ if $n \geq \Om{p + \log\frac{1}{\delta}}$. We now recall from our earlier discussion that $\btn = \bto + C_t^{-1}X_{S_t}\b_{S_t}$ which gives us
\[
\norm{\btn - \bto}_2 = \norm{C_t^{-1}X_{S_t}\b_{S_t}}_2 \leq \frac{\sqrt{\Lambda_\beta}}{\lambda_{1-\beta}}\cdot\norm{\b_{S_t}}_2 \leq \eta^t\cdot\frac{\sqrt{\Lambda_\beta}}{\lambda_{1-\beta}}\norm{\b}_2 \leq \epsilon,
\]
for $t \geq \log_{\frac{1}{\eta}}\br{\frac{\sqrt{\Lambda_\beta}}{\lambda_{1-\beta}}\cdot\frac{\norm{\b}_2}{\epsilon}}$. Noting that $\frac{\sqrt{\Lambda_\beta}}{\lambda_{1-\beta}} \leq \O{\frac{1}{\sqrt n}}$ establishes the convergence result.
\end{proof}

\section{Proof of Theorem~\ref{thm:fcht-explicit-rate}}
\label{app:thm-fcht-explicit-rate}
\begin{repthm}{thm:fcht-explicit-rate}
Let $X=[\x_1, \dots, \x_n]\in \R^{p\times n}$ be the given data matrix with each $\x_i \sim \cN(\vz	, \Sigma)$. Let $\y=X^\top\bto+\b$ and $\|\b\|_0\leq \alpha \cdot n$. Also, let $\alpha \leq \beta < \frac{1}{65}$ and $n \geq \Om{p + \log\frac{1}{\delta}}$. Then, with probability at least $1-\delta$, the data satisfies $\frac{(1+\sqrt 2)\Lambda_{\beta}}{\lambda_{1-\beta}} < \frac{9}{10}$. More specifically, after $T \geq 10\log\br{\frac{1}{\sqrt n}\frac{\norm{\b}_2}{\epsilon}}$ iterations of Algorithm~\ref{algo:fcht} with the thresholding parameter set to $\beta$, we have $\norm{\bt^T-\bto}\leq \epsilon.$
\end{repthm}
\begin{proof}
We note that whenever $\x \sim \cN(\vz,\Sigma)$ then $\isS\x \sim \cN(\vz,I)$. Thus, Theorem~\ref{thm:ev-bound-gaussian-local} assures us that with probability at least $1 - \delta$, the data matrix $\Xt = \isS X$ satisfies the SSC and SSS properties with the following constants
\begin{align*}
\Lambda_\beta &\leq \beta n\br{1 + 3e\sqrt{6\log\frac{e}{\beta}}} + \O{\sqrt{np + n\log\frac{1}{\delta}}}\\
\lambda_{1-\beta} &\geq n - \beta n\br{1 + 3e\sqrt{6\log\frac{e}{\beta}}} - \Om{\sqrt{np + n\log\frac{1}{\delta}}}
\end{align*}
Thus, the convergence given be Algorithm~\ref{algo:fcht}, when invoked with $\Sigma_0 = \Sigma$, relies on the quantity $\eta = \frac{(1 + \sqrt 2)\Lambda_{\beta}}{\lambda_{1-\beta}}$ being less than unity. This translates to the requirement $(1 + \sqrt 2)\Lambda_\beta \leq \lambda_{1-\beta}$. Using the above bounds translates that requirement to
\[
\underbrace{\vphantom{\O{\sqrt{\frac{p}{n} + \frac{1}{n}\log\frac{1}{\delta}}}}(2+\sqrt 2){\beta\br{1 + 3e\sqrt{6\log\frac{e}{\beta}}}}}_{(A)} + \underbrace{\O{\sqrt{\frac{p}{n} + \frac{1}{n}\log\frac{1}{\delta}}}}_{(B)} < 1.
\]
For $n = \Om{p + \log\frac{1}{\delta}}$, the second quantity $(B)$ can be made as small a constant as necessary. Tackling the first quantity $(A)$ turns out to be more challenging. However, we can show that for all $\beta < \frac{1}{190}$, we get $\eta = \frac{(1 + \sqrt 2)\Lambda_{\beta}}{\lambda_{1-\beta}} < \frac{9}{10}$ which establishes the claimed result. Thus, Algorithm~\ref{algo:fcht} can tolerate a corruption index of upto $\alpha \leq \frac{1}{190}$. However, we note that using a more finely tuned setting of the constant $\epsilon$ in the proof of Theorem~\ref{thm:ev-bound-gaussian-local} and a more careful proof using tight tail inequalities for chi-squared distributions \cite{LaurentM00}, we can achieve a better corruption level tolerance of $\alpha < \frac{1}{65}$.
\end{proof}

\section{Proof of Theorem~\ref{thm:fcht-grades}}
\label{app:thm-fcht-grades}
\begin{repthm}{thm:fcht-grades}
Let $X = \bs{\x_1, \dots, \x_n}\in \R^{p\times n}$ be the given data matrix and $\y=X^T\bto+\b$ be the corrupted output with $\|\b\|_0\leq \alpha\cdot n$. Let $X$ satisfy the SSC and SSS properties at level $\gamma$ with constants $\lambda_{\gamma}$ and $\Lambda_{\gamma}$ respectively (see Definition~\ref{defn:ssc-sss}). Let Algorithm~\ref{algo:fcht} be executed on this data with the GD update (Algorithm~\ref{algo:update-grades}) with the thresholding parameter set to $\beta \geq \alpha$ and the step length set to $\eta = \frac{1}{\Lambda_{1-\beta}}$.  If the data satisfies $\max\bc{\eta\sqrt{\Lambda_\beta}, 1 - \eta\lambda_{1-\beta}} \leq \frac{1}{4}$, then after $t = \O{\log\br{\frac{\norm{b}_2}{\sqrt{n}}\frac{1}{\epsilon}}}$ iterations, Algorithm~\ref{algo:fcht} obtains an $\epsilon$-accurate solution $\btt$ i.e. $\norm{\btt - \bto}_2 \leq \epsilon$.
\end{repthm}
\begin{proof}
Let $\r^t = \y - X^\top\btt$ be the vector of residuals at time $t$ and $C_t = X_{S_t}X_{S_t}^\top$. We have
\[
\btn = \btt + \eta\cdot X_{S_t}\r^t_{S_t} = \btt + \eta\cdot X_{S_t}(\y_{S_t} - X_{S_t}^\top\btt)
\]
The thresholding step ensures that $\norm{\r^{t+1}_{S_{t+1}}}_2^2 \leq \norm{\r^{t+1}_{S_\ast}}_2^2$ (see Claim~\ref{clm:supp-ord-stat} and use $\beta \geq \alpha$) which implies
\[
\norm{\r^{t+1}_{\fa_{t+1}}}_2^2 \leq \norm{\r^{t+1}_{\md_{t+1}}}_2^2,
\]
where $\fa_{t+1} = S_{t+1}\backslash S_\ast$ are the \emph{corrupted recoveries} and $\md_{t+1} = S_\ast\backslash S_{t+1}$ are the clean points \emph{missed} out from \emph{detection}. Note that $\abs{\fa_{t+1}} \leq \alpha\cdot n$ and $\abs{\md_{t+1}} \leq \beta\cdot n$. Since $\b_{S_\ast} = \vz$ and $\md_{t+1} \subseteq S_\ast$, we get
\[
\norm{\b_{\fa_{t+1}} + X_{\fa_{t+1}}^\top(\bto - \btn)}_2 \leq \norm{X_{\md_{t+1}}^\top(\bto - \btn)}_2
\]
Using the SSS conditions and the fact that $\norm{\b_{S_{t+1}}}_2 = \norm{\b_{S_{t+1}\backslash S_\ast}}_2$ gives us
\[
\norm{\b_{S_{t+1}}}_2 = \norm{\b_{\fa_{t+1}}}_2 \leq (\sqrt{\Lambda_{\alpha}} + \sqrt{\Lambda_{\beta}})\norm{\bto - \btn}_2 \leq 2\sqrt{\Lambda_\beta}\norm{\bto - \btn}_2
\]
Now, using the expression for $\btn$ gives us
\[
\norm{\bto - \btn}_2 \leq \norm{(I - \eta C_t)(\bto - \btt)}_2 + \eta\norm{X_{S_t}\b_{S_t}}_2
\]
We will bound the two terms on the right hand separately. We can bound the second term easily as
\[
\eta\norm{X_{S_t}\b_{S_t}}_2 \leq \eta\sqrt{\Lambda_\alpha}\norm{\b_{S_t}}_2 \leq \eta\sqrt{\Lambda_\beta}\norm{\b_{S_t}}_2,
\]
since $\norm{\b_{S_t}}_0 \leq \alpha\cdot n$. For the first term we observe that for $\eta \leq \frac{1}{\Lambda_{1-\beta}}$, we have
\[
\norm{I - \eta C_t}_2 = \underset{\v \in S^{p-1}}\sup\abs{1 - \eta\cdot\v^\top C_t\v} = \underset{\v \in S^{p-1}}\sup\bc{1 - \eta\cdot\v^\top C_t\v} \leq 1 - \eta\lambda_{1-\beta},
\]
which we can use to bound
\[
\norm{\bto - \btn}_2 \leq (1-\eta\lambda_{1-\beta})\norm{\bto - \btt}_2 + \eta\sqrt{\Lambda_\beta}\norm{\b_{S_t}}_2
\]
This gives us, for $\eta = \frac{1}{\Lambda_{1-\beta}}$,
\[
\norm{\b_{S_{t+1}}}_2 \leq 2\sqrt{\Lambda_\beta}\norm{\bto - \btn}_2 \leq 2\underbrace{\vphantom{\frac{\Lambda_\beta}{\Lambda_{1-\beta}}}\br{1-\frac{\lambda_{1-\beta}}{\Lambda_{1-\beta}}}}_{(P)}\sqrt{\Lambda_\beta}\norm{\bto - \btt}_2 + 2\underbrace{\frac{\Lambda_\beta}{\Lambda_{1-\beta}}}_{(Q)}\norm{\b_{S_t}}_2.
\]
For Gaussian designs and small enough $\beta$, we can show $(Q) \leq \frac{1}{4}$ as we did in Theorem~\ref{thm:fcht-explicit-rate}. To bound $(P)$, we use the lower bound on $\lambda_{1-\beta}$ given by Theorem~\ref{thm:ev-bound-gaussian-local} and use the following tighter upper bound for $\Lambda_{1-\beta}$:
\[
\Lambda_{1-\beta} \leq \br{(1-\beta) + 3e\sqrt{6\beta(1-\beta)\log\frac{e}{\beta}}}n + \O{\sqrt{np + n\log\frac{1}{\delta}}}
\]
The above bound is obtained similarly to the one in Theorem~\ref{thm:ev-bound-gaussian-local} but uses the identity $\binom{n}{k} = \binom{n}{n-k} \leq \br{\frac{en}{n-k}}^{n-k}$ for values of $k \geq n/2$ instead. For small enough $\beta$ and $n = \Om{\kappa^2(\Sigma)(p + \log\frac{1}{\delta})}$, we can then show $(P) \leq \frac{1}{4}$ as well. Let $\Psi_t := \sqrt{n}\norm{\bto - \btt}_2 + \norm{b_{S_t}}$. Using elementary manipulations and the fact that $\sqrt{\Lambda_\beta} \geq \Om{\sqrt{n}}$, we can then show that
\[
\Psi_{t+1} \leq 3/4\cdot\Psi_t.
\]
Thus, in $t = \O{\log\br{\br{\norm{\bto}_2 + \frac{\norm{b}_2}{\sqrt{n}}}\frac{1}{\epsilon}}}$ iterations of the algorithm, we arrive at an $\epsilon$-optimal solution i.e. $\norm{\bto-\btt}_2 \leq \epsilon$. A similar argument holds true for sub-Gaussian designs as well.
\end{proof}

\section{Proof of Theorem~\ref{thm:fcht-hyb-rate}}
\label{app:thm-fcht-hyb-rate}
\begin{repthm}{thm:fcht-hyb-rate}
Suppose Algorithm~\ref{algo:update-hybrid} is executed on data that allows Algorithms~\ref{algo:update-fc}~and~\ref{algo:update-grades} a convergence rate of $\eta_\fc$ and $\eta_\gd$ respectively. Suppose we have $2\cdot\eta_\fc\cdot\eta_\gd < 1$. Then for \emph{any} interleavings of the FC and GD steps that the policy may enforce, after $t = \O{\log\br{\frac{1}{\sqrt n}\frac{\norm{\b}_2}{\epsilon}}}$ iterations, Algorithm~\ref{algo:update-hybrid} ensures an $\epsilon$-optimal solution i.e. $\norm{\btt - \bto} \leq \epsilon$.
\end{repthm}
\begin{proof}
Our proof shall essentially show that the FC and GD steps do not undo the progress made by the other if executed in succession and if $2\cdot\eta_\fc\cdot\eta_\gd < 1$, actually ensure non-trivial progress. Let
\begin{align*}
\Psi^\fc_t &= \norm{\b_{S_t}}_2\\
\Psi^\gd_t &= \sqrt{n}\norm{\btt - \bto} + \norm{\b_{S_t}}_2
\end{align*}
denote the potential functions used in the analyses of the FC and GD algorithms before. Then we will show below that if the FC and GD algorithms are executed in steps $t$ and $t+1$ then we have
\[
\Psi^\fc_{t+2} \leq 2\cdot\eta_\fc\cdot\eta_\gd\cdot\Psi^\fc_t
\]
Alternatively, if the GD and FC algorithms are executed in steps $t$ and $t+1$ respectively, then
\[
\Psi^\gd_{t+2} \leq 2\cdot\eta_\fc\cdot\eta_\gd\cdot\Psi^\gd_t
\]
Thus, if algorithm executes the FC step at the time step $t$, then it would at least ensure $\Psi^\fc_t \leq \br{2\cdot\eta_\fc\cdot\eta_\gd}^{t/2}\cdot\Psi^\fc_0$ (similarly if the last step is a GD step). Since both the FC and GD algorithms ensure $\norm{\btt - \bto}_2 \leq \epsilon$ for $t \geq \O{\log\br{\frac{1}{\sqrt n}\frac{\norm{b}_2}{\epsilon}}}$, the claim would follow.

We now prove the two claimed results regarding the two types of interleaving below
\begin{enumerate}
	\item $\fc \longrightarrow \gd$\newline
	The FC step guarantees $\norm{\b_{S_{t+1}}}_2 \leq \eta_\fc\cdot\norm{\b_{S_t}}$ as well as $\norm{\btn-\bto}_2 \leq \eta_\fc\cdot\frac{\norm{\b_{S_t}}}{\sqrt n}$, whereas the GD step guarantees $\Psi^\gd_{t+2} \leq \eta_\gd\cdot\Psi^\gd_{t+1}$. Together these guarantee
	\begin{align*}
	\sqrt{n}\norm{\btnn - \bto}_2 + \norm{\b_{S_{t+2}}}_2 &\leq \eta_\gd\cdot{\sqrt{n}\norm{\btn - \bto}_2 + \norm{\b_{S_{t+1}}}_2}\\
														  &\leq 2\cdot\eta_\fc\cdot\eta_\gd\cdot\norm{\b_{S_t}}_2
	\end{align*}
	Since $\sqrt{n}\norm{\btnn - \bto}_2 \geq 0$, this yields the result.
	\item $\gd \longrightarrow \fc$\newline
	The GD step guarantees $\Psi^\gd_{t+1} \leq \eta_\gd\cdot\Psi^\gd_{t}$ whereas the FC step guarantees $\norm{\b_{S_{t+2}}}_2 \leq \eta_\fc\cdot\norm{\b_{S_{t+1}}}$ as well as $\norm{\btnn-\bto}_2 \leq \eta_\fc\cdot\frac{\norm{\b_{S_{t+1}}}}{\sqrt n}$. Together these guarantee
	\begin{align*}
	\sqrt{n}\norm{\btnn - \bto}_2 + \norm{\b_{S_{t+2}}}_2 &\leq 2\eta_\fc\norm{\b_{S_{t+1}}}_2\\
														  &\leq 2\cdot\eta_\fc\cdot\eta_\gd\cdot\Psi^{\gd}_t,
	\end{align*}
	where the second step follows from the GD step guarantee since $\sqrt{n}\norm{\btn - \bto}_2 \geq 0$.
\end{enumerate}
This finishes the proof.
\end{proof}

\section{Proof of Theorem~\ref{thm:fcht-highd}}
\label{app:thm-fcht-highd}
\begin{repthm}{thm:fcht-highd}
Let $X = \bs{\x_1, \dots, \x_n}\in \R^{p\times n}$ be the given data matrix and $\y=X^T\bto+\b$ be the corrupted output with $\norm{\bto}_0 \leq s^\ast$ and $\|\b\|_0\leq \alpha\cdot n$. Let Algorithm~\ref{algo:update-fc} be executed on this data with the IHT update from \cite{JainTK14} and thresholding parameter set to $\beta \geq \alpha$. Let $\Sigma_0$ be an invertible matrix such that $\isSo X$ satisfies the SRSC and SRSS properties at level $(\gamma,2s+s^\ast)$ with constants $\alpha_{(\gamma,2s+s^\ast)}$ and $L_{(\gamma,2s+s^\ast)}$ respectively (see Definition~\ref{defn:srsc-srss}) for $s \geq 32\br{\frac{L_{(\gamma,2s+s^\ast)}}{\alpha_{(\gamma,2s+s^\ast)}}}$ with $\gamma = 1-\beta$. If $X$ also satisfies $\frac{4L_{(\beta,s+s^\ast)}}{\alpha_{(1-\beta,s+s^\ast)}} < 1$, then after $t = \O{\log\br{\frac{1}{\sqrt n}\frac{\norm{\b}_2}{\epsilon}}}$ iterations, Algorithm~\ref{algo:update-fc} obtains an $\epsilon$-accurate solution $\btt$ i.e. $\norm{\btt - \bto}_2 \leq \epsilon$. In particular, if $X$ is sampled from a Gaussian distribution $\cN(\vz,\Sigma)$ and $n \geq \Om{{(2s+s^\ast)\log p + \log\frac{1}{\delta}}}$, then for all values of $\alpha \leq \beta < \frac{1}{65}$, we can guarantee recovery as $\norm{\btt - \bto}_2 \leq \epsilon$.
\end{repthm}
\begin{proof}
We first begin with the guarantee provided by existing sparse recovery techniques. The results of \cite{JainTK14}, for example, indicate that if the input to the algorithm indeed satisfies the RSC and RSS properties at the level $(1-\beta,2s+s^\ast)$ with constants $\alpha_{2s+s^\ast}$ and $L_{2s+s^\ast}$ for $s \geq 32\br{\frac{L_{2s+s^\ast}}{\alpha_{2s+s^\ast}}}$, then in time $\tau = \O{\frac{L_{2s+s^\ast}}{\alpha_{2s+s^\ast}}\cdot\log\br{\frac{\norm{b}_2}{\rho}}}$, the IHT algorithm \cite[Algorithm 1]{JainTK14} outputs an updated model $\btn$ that satisfies $\norm{\btn}_0 \leq s$, as well as
\[
\norm{\y_{S_t} - X_{S_t}^\top\btn}_2^2 \leq \norm{\y_{S_t} - X_{S_t}^\top\bto}_2^2 + \rho.
\]
We will set $\rho$ later. Since the SRSC and SRSS properties ensure the above and $\y = X^\top\bto + \b$, this gives us
\[
\norm{X_{S_t}^\top(\btn - \bto)}_2^2 \leq 2(\btn-\bto)^\top X_{S_t}^\top\b_{S_t} + \rho = 2(\btn-\bto)^\top X_{S_t\cap \bar{S}_\ast}^\top\b_{S_t\cap \bar{S}_\ast} + \rho,
\]
since $\b_S = \vz$ for any set $S \cap \bar{S}_\ast = \phi$. We now analyze the two sides separately below using the SRSC and SRSS properties below. For any $S \subset [n]$, denote $\tilde X_{S} := \isSo X$.
\begin{align*}
\norm{X_{S_t}^\top(\btn - \bto)}_2^2 &= \norm{\tilde X_{S_t}^\top\sSo(\btn - \bto)}_2^2 \geq \alpha_{(1-\beta,s+s^\ast)}\norm{\sSo(\btn-\bto)}_2^2\\
\norm{X_{S_t\cap \bar{S}_\ast}(\btn-\bto)} &= \norm{\tilde X_{S_t\cap \bar{S}_\ast}\sSo(\btn-\bto)} \leq \sqrt{L_{(\beta,s+s^\ast)}}\norm{\sSo(\btn-\bto)}_2.
\end{align*}
Now, if $\norm{\btn-\bto}_2 \geq \epsilon$, then $\norm{\sSo(\btn-\bto)}_2 \geq \sqrt{\lambda_{\min}(\Sigma_0)}\cdot\epsilon$. This give us
\begin{align*}
\norm{\sSo(\btn-\bto)}_2 &\leq \frac{2\sqrt{L_{(\beta,s+s^\ast)}}}{\alpha_{(1-\beta,s+s^\ast)}}\norm{\b_{S_t\cap \bar{S}_\ast}}_2 + \frac{\rho}{\alpha_{(1-\beta,s+s^\ast)}}\\
						 &= \frac{2\sqrt{L_{(\beta,s+s^\ast)}}}{\alpha_{(1-\beta,s+s^\ast)}}\norm{\b_{S_t}}_2  + \frac{\rho}{\epsilon\cdot\sqrt{\lambda_{\min}(\Sigma_0)}\cdot\alpha_{(1-\beta,s+s^\ast)}}.
\end{align*}
We note that although we declared the SRSC and SRSS properties for the action of matrices on sparse vectors (such as $\bto -\btn$), we instead applied them above to the action of matrices on sparse vectors transformed by $\sSo$ ($\sSo(\bto - \btn)$). Since $\sSo\v$ need not be sparse even if $\v$ is sparse, this appears to pose a problem. However, all we need to resolve this is to notice that the proof technique of Theorem~\ref{thm:ev-bound-high-d-gaussian-local} which would be used to establish the SRSC and SRSS properties, holds in general for not just the action of a matrix on the set of sparse vectors, but on vectors in the union of any fixed set of low dimensional subspaces.

More specifically, we can modify the RSC and RSS properties (and by extension, the SRSC and SRSS properties), to requiring that the matrix $X$ act as an approximate isometry on the following set of vectors $S^{p-1}_{(s,\Sigma_0)} := \bc{\v: \v = \isSo \v' \text{ for some } \v' \in S^{p-1}_s}$. We refer the reader to the work of \cite{Blumensath11} which describes this technique in great detail. Proceeding with the proof, the assurance of the thresholding step, as used in the proof of Theorem~\ref{thm:fcht-grades}, along with a straightforward application of the (modified) SRSS property gives us
\begin{align*}
\norm{\b_{S_{t+1}}}_2 &\leq \norm{X_{\fa_{t+1}}^\top(\btn-\bto)}_2 + \norm{X_{\md_{t+1}}^\top(\btn-\bto)}_2\\
					  &= \norm{\tilde X_{\fa_{t+1}}^\top\sSo(\btn-\bto)}_2 + \norm{\tilde X_{\md_{t+1}}^\top\sSo(\btn-\bto)}_2\\
					  &\leq 2\sqrt{L_{(\beta,s+s^\ast)}}\norm{\sSo(\btn-\bto)}_2\\
					  &\leq \frac{4L_{(\beta,s+s^\ast)}}{\alpha_{(1-\beta,s+s^\ast)}}\norm{\b_{S_t}}_2  + \frac{2\rho\sqrt{L_{(\beta,s+s^\ast)}}}{\epsilon\cdot\sqrt{\lambda_{\min}(\Sigma_0)}\cdot\alpha_{(1-\beta,s+s^\ast)}}
\end{align*}
Thus, whenever $\norm{\btn-\bto}_2 > \epsilon$, in successive steps, $\norm{\b_{S_t}}_2$ undergoes a linear decrease. Denoting $\eta := \frac{4L_{(\beta,s+s^\ast)}}{\alpha_{(1-\beta,s+s^\ast)}}$, we get
\[
\norm{\b_{S_{t+1}}}_2 \leq \eta^t\cdot\norm{\b}_2 + \br{\frac{1-\eta^t}{1-\eta}}\frac{2\rho\sqrt{L_{(\beta,s+s^\ast)}}}{\epsilon\cdot\sqrt{\lambda_{\min}(\Sigma_0)}\cdot\alpha_{(1-\beta,s+s^\ast)}}
\]
 and using $\norm{\sSo(\btt-\bto)}_2 \geq \sqrt{\lambda_{\min}(\Sigma_0)}\norm{\btt-\bto}_2$ gives us
\begin{align*}
\norm{\btn-\bto}_2 &\leq \frac{2\sqrt{L_{(\beta,s+s^\ast)}}}{\sqrt{\lambda_{\min}(\Sigma_0)}\cdot\alpha_{(1-\beta,s+s^\ast)}}\norm{\b_{S_{t+1}}}_2  + \frac{\rho}{{\lambda_{\min}(\Sigma_0)}\cdot\alpha_{(1-\beta,s+s^\ast)}}\\
				   &\leq \eta^t\frac{2\sqrt{L_{(\beta,s+s^\ast)}}}{\sqrt{\lambda_{\min}(\Sigma_0)}\cdot\alpha_{(1-\beta,s+s^\ast)}}\norm{\b}_2 + \frac{36\rho}{\epsilon\cdot{\lambda_{\min}(\Sigma_0)}\cdot\alpha_{(1-\beta,s+s^\ast)}},
\end{align*}
where we have assumed that $\frac{4L_{(\beta,s+s^\ast)}}{\alpha_{(1-\beta,s+s^\ast)}} < 9/10$, something that we shall establish below. Note that $\lambda_{\min}(\Sigma_0) >0$ since $\Sigma$ is assumed to be invertible. In the random design settings we shall consider, we also have $\frac{\sqrt{L_{(\beta,s+s^\ast)}}}{\sqrt{\lambda_{\min}(\Sigma_0)}\cdot\alpha_{(1-\beta,s+s^\ast)}} = \O{\frac{1}{\sqrt n}}$. Then setting $\rho \leq \frac{1}{72}\epsilon^2\cdot{\lambda_{\min}(\Sigma_0)}\cdot{\alpha_{(1-\beta,s+s^\ast)}}$ proves the convergence result.

As before, we can use the above result to establish sparse recovery guarantees in the statistical setting for Gaussian and sub-Gaussian design models. If our data matrix $X$ is generated from a Gaussian distribution $\cN(\vz,\Sigma)$ for some invertible $\Sigma$, then the results in Theorem~\ref{thm:ev-bound-high-d-gaussian-local} can be used to establish that $\isS X$ satisfies the SRSC and SRSS properties at the required levels and that for $\alpha < \frac{1}{190}$ and $n \geq \Om{{(2s+s^\ast)\log p + \log\frac{1}{\delta}}}$, we have $\eta = \frac{2L_{(\beta,s+s^\ast)}}{\alpha_{(1-\beta,s+s^\ast)}} < 9/10$.

Thus, the above result can be applied with $\Sigma_0 = \Sigma$ to get convergence guarantees in the general Gaussian setting. We note that the above analysis can tolerate the same level of corruption as Theorem~\ref{thm:fcht-explicit-rate} and thus, we can improve the noise tolerance level to $\alpha \leq \frac{1}{65}$ here as well. We also note that these results can be readily extended to the sub-Gaussian setting as well.
\end{proof}

\section{Robust Statistical Estimation}
\label{sec:stat}
This section elaborates on how results on the convergence guarantees of our algorithms can be used to give guarantees for robust statistical estimation problems. We begin with a few definition of sampling models that would be used in our results.

\begin{defn}
A random variable $x \in \R$ is called sub-Gaussian if the following quantity is finite
\[
\underset{p \geq 1}{\sup}\ p^{-1/2}\br{\Ebb\abs{x}^p}^{1/p}.
\]
Moreover, the smallest upper bound on this quantity is referred to as the sub-Gaussian norm of $x$ and denoted as $\normsg{x}$.
\end{defn}

\begin{defn}
A vector-valued random variable $\x \in \R^p$ is called sub-Gaussian if its unidimensional marginals $\ip{\x}{\v}$ are sub-Gaussian for all $\v \in S^{p-1}$. Moreover, its sub-Gaussian norm is defined as follows
\[
\normsg{X} :=  \underset{\v \in S^{p-1}}{\sup} \normsg{\ip{\x}{\v}}
\]
\end{defn}

We will begin with the analysis of Gaussian designs and then extend our analysis for the class of general sub-Gaussian designs.

\begin{lem}
\label{lem:ev-bound-gaussian-global}
Let $X \in \R^{p \times n}$ be a matrix whose columns are sampled i.i.d from a standard Gaussian distribution i.e. $\x_i \sim \cN(\vz,I)$. Then for any $\epsilon > 0$, with probability at least $1 - \delta$, $X$ satisfies
\begin{align*}
s_{\max}(XX^\top) &\leq n + (1 - 2\epsilon)^{-1}\sqrt{cnp + c'n\log\frac{2}{\delta}}\\
s_{\min}(XX^\top) &\geq n - (1 - 2\epsilon)^{-1}\sqrt{cnp + c'n\log\frac{2}{\delta}},
\end{align*}
where $c = 24e^2\log\frac{3}{\epsilon}$ and $c' = 24e^2$.
\end{lem}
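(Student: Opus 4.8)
The plan is to control the deviation of $XX^\top$ from its expectation in operator norm. Since the columns are i.i.d.\ with $\E{\x_i\x_i^\top} = \I$, we have $XX^\top = \sum_{i=1}^n \x_i\x_i^\top$ with $\E{XX^\top} = n\I$. Writing $B := XX^\top - n\I$, which is symmetric, I observe that $s_{\max}(XX^\top) = n + \lambda_{\max}(B) \leq n + \norm{B}_2$ and $s_{\min}(XX^\top) = n + \lambda_{\min}(B) \geq n - \norm{B}_2$. Hence both claimed inequalities follow at once from a single high-probability bound of the form $\norm{B}_2 \leq (1-2\epsilon)^{-1}\sqrt{cnp + c'n\log\frac{2}{\delta}}$, and there is no need to treat $s_{\max}$ and $s_{\min}$ separately.

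To bound $\norm{B}_2$ I would run the standard covering argument over the sphere. Fix an $\epsilon$-net $\cC_\epsilon \subset S^{p-1}$ with $\abs{\cC_\epsilon} \leq \br{1+2/\epsilon}^p \leq \br{3/\epsilon}^p$ (valid for $\epsilon \in (0,1)$), and use the elementary inequality for symmetric matrices $\norm{B}_2 \leq (1-2\epsilon)^{-1}\max_{\v \in \cC_\epsilon}\abs{\ip{B\v}{\v}}$, which is precisely the source of the $(1-2\epsilon)^{-1}$ factor appearing in the statement (it requires $\epsilon < 1/2$). For a fixed $\v \in S^{p-1}$, the key observation is that $\ip{B\v}{\v} = \sum_{i=1}^n \ip{\x_i}{\v}^2 - n$ is a centered chi-squared variable: since $\ip{\x_i}{\v} \sim \cN(\vz,1)$ independently, $\sum_{i}\ip{\x_i}{\v}^2 \sim \chi^2_n$. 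The chi-squared tail bounds of \cite{LaurentM00}, after symmetrizing, give $\Pr{\abs{\ip{B\v}{\v}} \geq t} \leq 2\exp\br{-t^2/(c_0 n)}$ in the relevant regime, and a union bound over the net yields $\Pr{\max_{\v \in \cC_\epsilon}\abs{\ip{B\v}{\v}} \geq t} \leq 2\br{3/\epsilon}^p\exp\br{-t^2/(c_0 n)}$. Choosing $t^2/(c_0 n) = p\log\frac{3}{\epsilon} + \log\frac{2}{\delta}$, i.e. $t = \sqrt{c_0\log\frac{3}{\epsilon}\cdot np + c_0 n\log\frac{2}{\delta}}$, makes this failure probability exactly $\delta$, and combining with the net inequality reproduces the stated bound with $c = c_0\log\frac{3}{\epsilon}$ and $c' = c_0$.

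The main obstacle is pinning down the absolute constant $c_0 = 24e^2$. This requires carefully controlling the sub-exponential (equivalently, moment-generating) behaviour of $\ip{\x_i}{\v}^2 - 1$ and tracking the resulting Chernoff/Bernstein constant; concretely, one must keep the Laurent--Massart estimate $\chi^2_n - n \leq 2\sqrt{nx}+2x$ in the sub-Gaussian regime $x \leq n$, where the linear $2x$ term is dominated by $2\sqrt{nx}$ so that the tail genuinely behaves like $\exp\br{-t^2/(c_0 n)}$. Verifying that this is the operative regime whenever $n \gtrsim p + \log\frac{1}{\delta}$, and checking that the constant absorbs cleanly into $c$ and $c'$, is the only delicate bookkeeping. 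Everything else is the routine net-plus-concentration template, and the exact cancellation $2\br{3/\epsilon}^p\exp\br{-t^2/(c_0 n)} = \delta$ confirms that the stated constants are the natural output of this scheme.
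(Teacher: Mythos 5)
Your proposal is correct and follows essentially the same route as the paper's proof: reduce both singular-value bounds to a single operator-norm bound on $XX^\top - n\I$, apply the $\br{3/\epsilon}^p$-net argument with the $(1-2\epsilon)^{-1}$ factor, observe that $\norm{X^\top\v}_2^2 \sim \chi^2(n)$ for fixed $\v$, and union-bound with exactly the same choice of deviation parameter. The only cosmetic difference is the source of the chi-squared tail: you invoke Laurent--Massart (with the correct caveat about staying in the sub-Gaussian regime, which the paper's proof also implicitly requires via its restriction $\mu < 1$), whereas the paper proves its own Bernstein-type bound (its Lemma~\ref{lem:chi-2-bernstein}) precisely to pin down the constants $c = 24e^2\log\frac{3}{\epsilon}$ and $c' = 24e^2$ appearing in the statement.
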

\begin{proof}
We will first use the fact that $X$ is sampled from a standard Gaussian to show that its covariance concentrates around identity. Thus, we first show that with high probability,
\[
\norm{XX^\top - nI}_2 \leq \epsilon_1
\]
for some $\epsilon_1 < 1$. Doing so will automatically establish the following result
\[
n - \epsilon_1 \leq s_{\min}(XX^\top) \leq s_{\max}(XX^\top) \leq n + \epsilon_1.
\]
Let $A := XX^\top - I$. We will use the technique of covering numbers \cite{Vershynin12} to establish the above. Let $\cC^{p-1}(\epsilon) \subset S^{p-1}$ be an $\epsilon$ cover for $S^{p-1}$ i.e. for all $\u \in S^{p-1}$, there exists at least one $\v \in \cC^{p-1}$ such that $\norm{\u - \v}_2 \leq \epsilon$. Standard constructions \cite[see Lemma 5.2]{Vershynin12} guarantee such a cover of size at most $\br{1 + \frac{2}{\epsilon}}^p \leq \br{\frac{3}{\epsilon}}^p$. Now for any $\u \in S^{p-1}$ and $\v \in \cC^{p-1}$ such that $\norm{\u-\v}_2 \leq \epsilon$, we have
\[
\abs{\u^\top A\u - \v^\top A\v} \leq \abs{\u^\top A(\u - \v)} + \abs{\v^\top A(\u - \v)} \leq 2\epsilon\norm{A}_2,
\]
which gives us
\[
\norm{XX^\top - nI}_2 \leq (1-2\epsilon)^{-1}\cdot\underset{\v \in \cC^{p-1}(\epsilon)}{\sup}\abs{\norm{X^\top\v}_2^2 - n}.
\]
Now for a fixed $\v \in S^{n-1}$, the random variable $\norm{X^\top\v}_2^2$ is distributed as a $\chi^2(n)$ distribution with $n$ degrees of freedom. Using Lemma~\ref{lem:chi-2-bernstein}, we get, for any $\mu < 1$,
\[
\Pr{\abs{\norm{X^\top\v}_2^2 - n}\geq \mu n} \leq 2\exp\br{-\min\bc{\frac{\mu^2n^2}{24ne^2},\frac{\mu n}{4\sqrt 3e}}} \leq 2\exp\br{-\frac{\mu^2n}{24e^2}}.
\]
Setting $\mu^2 = c\cdot\frac{p}{n} + c'\cdot\frac{\log\frac{2}{\delta}}{n}$, where $c = 24e^2\log\frac{3}{\epsilon}$ and $c' = 24e^2$, and taking a union bound over all $\cC^{p-1}(\epsilon)$, we get
\[
\Pr{\underset{\v \in \cC^{p-1}(\epsilon)}{\sup}\abs{\norm{X^\top\v}_2^2 - n}\geq \sqrt{cnp + c'n\log\frac{2}{\delta}}} \leq 2\br{\frac{3}{\epsilon}}^p\exp\br{-\frac{\mu^2n}{24e^2}} \leq \delta.
\]
This implies that with probability at least $1 - \delta$,
\[
\norm{XX^\top - nI}_2 \leq (1 - 2\epsilon)^{-1}\sqrt{cnp + c'n\log\frac{2}{\delta}},
\]
which gives us the claimed bounds on the singular values of $XX^\top$.
\end{proof}

\begin{thm}
\label{thm:ev-bound-gaussian-local}
Let $X \in \R^{p \times n}$ be a matrix whose columns are sampled i.i.d from a standard Gaussian distribution i.e. $\x_i \sim \cN(\vz,I)$. Then for any $\gamma > 0$, with probability at least $1 - \delta$, the matrix $X$ satisfies the SSC and SSS properties with constants
\begin{align*}
\Lambda_\gamma^{\text{Gauss}} &\leq \gamma n\br{1 + 3e\sqrt{6\log\frac{e}{\gamma}}} + \O{\sqrt{np + n\log\frac{1}{\delta}}}\\
\lambda_\gamma^{\text{Gauss}} &\geq n - (1-\gamma)n\br{1 + 3e\sqrt{6\log\frac{e}{1-\gamma}}} - \Om{\sqrt{np + n\log\frac{1}{\delta}}}.
\end{align*}
\end{thm}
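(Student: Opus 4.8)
The plan is to reduce the theorem to the fixed-design eigenvalue bound of Lemma~\ref{lem:ev-bound-gaussian-global} and then pay for the uniformity over all of $\cS_\gamma$ through a union bound, treating the SSS (upper) and SSC (lower) directions by separate devices. The essential observation is that a submatrix $X_S$ formed from $\gamma n$ columns of $X$ again has i.i.d.\ standard Gaussian columns, so the single-matrix bound applies verbatim to each $X_S$ after relabeling.

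For the SSS bound I would first fix a set $S \in \cS_\gamma$ and apply Lemma~\ref{lem:ev-bound-gaussian-global} to $X_S \in \R^{p\times\gamma n}$ with $n$ replaced by $\gamma n$ and failure probability $\delta'$, giving $s_{\max}(X_S X_S^\top) \leq \gamma n + (1-2\epsilon)^{-1}\sqrt{c\gamma n p + c'\gamma n\log\frac{2}{\delta'}}$. I would then union bound over all $\abs{\cS_\gamma} = \binom{n}{\gamma n} \leq \br{e/\gamma}^{\gamma n}$ subsets, i.e.\ set $\log\frac{2}{\delta'} = \log\frac{2}{\delta} + \gamma n\log\frac{e}{\gamma}$. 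The crucial point is that the entropy contribution $c'\gamma n\cdot\gamma n\log\frac{e}{\gamma}$ enters the square root already multiplied by a factor $\gamma n$, so $\sqrt{c'(\gamma n)^2\log\frac{e}{\gamma}} = \gamma n\sqrt{c'\log\frac{e}{\gamma}}$; subadditivity of the square root lets me split this off from the remaining $\sqrt{c\gamma np + c'\gamma n\log\frac{2}{\delta}} = \O{\sqrt{np + n\log\frac{1}{\delta}}}$ terms. This yields exactly the claimed form, the constant $3e\sqrt6$ emerging from $(1-2\epsilon)^{-1}\sqrt{c'} = (1-2\epsilon)^{-1}\cdot 2\sqrt6\,e$ at, say, $\epsilon = 1/6$.

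For the SSC bound I cannot union bound the minimum eigenvalue directly: run at level $\gamma$ the deviation would read $\gamma n\sqrt{\log\frac{e}{\gamma}}$, which for $\gamma$ near $1$ (precisely the regime $\gamma = 1-\beta$ of interest) overwhelms the leading $\gamma n$ and makes the lower bound vacuous. Instead I would use the complementary decomposition $X_S X_S^\top = XX^\top - X_{\bar S}X_{\bar S}^\top$ with $\bar S = [n]\setminus S$ of size $(1-\gamma)n$, so that $\lambda_{\min}(X_S X_S^\top) \geq \lambda_{\min}(XX^\top) - \lambda_{\max}(X_{\bar S}X_{\bar S}^\top)$. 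The first term needs only a single application of Lemma~\ref{lem:ev-bound-gaussian-global} to the full matrix, giving $\lambda_{\min}(XX^\top) \geq n - \O{\sqrt{np + n\log\frac{1}{\delta}}}$ with no union bound at all. The second is bounded uniformly over the complementary sets of size $(1-\gamma)n$ by the very SSS argument above run at level $1-\gamma$, producing $\lambda_{\max}(X_{\bar S}X_{\bar S}^\top) \leq (1-\gamma)n\br{1 + 3e\sqrt{6\log\frac{e}{1-\gamma}}} + \O{\sqrt{np + n\log\frac{1}{\delta}}}$. Subtracting the two gives the stated lower bound on $\lambda_\gamma$.

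I expect the main obstacle to be the union-bound bookkeeping in the SSS direction, namely ensuring that the exponentially large cardinality $\binom{n}{\gamma n}$ enters only through the benign factor $\gamma n\sqrt{\log\frac{e}{\gamma}}$ rather than corrupting the leading-order $\gamma n$ term; this hinges on the entropy sitting under the square root against the correct $\gamma n$ prefactor and on its clean separation from the variance and $\delta$-dependent contributions. The second conceptual step is recognizing that the SSC lower bound must be routed through the complement rather than estimated directly. Once both are in place, the residual work is the same constant-chasing already carried out in Lemma~\ref{lem:ev-bound-gaussian-global}.
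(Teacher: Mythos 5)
Your proposal is correct and follows essentially the same route as the paper's proof: a union bound over $\cS_\gamma$ via $\binom{n}{\gamma n} \leq (e/\gamma)^{\gamma n}$ applied to Lemma~\ref{lem:ev-bound-gaussian-global} for the SSS bound (with the entropy term absorbed under the square root against the $\gamma n$ prefactor, yielding the constant $3e\sqrt{6}$ at $\epsilon = 1/6$), and the complement decomposition $\lambda_\gamma \geq s_{\min}(XX^\top) - \Lambda_{1-\gamma}$ for the SSC bound. Your observation that the SSC direction cannot be handled by a direct union bound and must be routed through the complement is precisely the device the paper uses.
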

\begin{proof}
For any fixed $S \in \cS_\gamma$, Lemma~\ref{lem:ev-bound-gaussian-global} guarantees the following bound
\[
s_{\max}(X_SX_S^\top) \leq \gamma n + (1 - 2\epsilon)^{-1}\sqrt{c\gamma np + c'\gamma n\log\frac{2}{\delta}}.
\]
Taking a union bound over $\cS_\gamma$ and noting that $\binom{n}{k} \leq \br{\frac{en}{k}}^k$ for all $1 \leq k \leq n$, gives us
\begin{align*}
\Lambda_\gamma &\leq \gamma n + (1 - 2\epsilon)^{-1}\sqrt{c\gamma np + c'\gamma^2n^2\log\frac{e}{\gamma} + c'\gamma n\log\frac{2}{\delta}}\\
			   &\leq \gamma n\br{1 + (1 - 2\epsilon)^{-1}\sqrt{c'\log\frac{e}{\gamma}}} + (1 - 2\epsilon)^{-1}\sqrt{c\gamma np + c'\gamma n\log\frac{2}{\delta}},
\end{align*}
which finishes the first bound after setting $\epsilon = 1/6$. For the second bound, we use the equality
\[
X_SX_S^\top = XX^\top - X_{\bar S}X_{\bar S}^\top,
\]
which provides the following bound for $\lambda_\gamma$
\[
\lambda_\gamma \geq s_{\min}(XX^\top) - \underset{T \in \cS_{1 - \gamma}}{\sup}X_TX_T^\top = s_{\min}(XX^\top) - \Lambda_{1-\gamma}.
\]
Using Lemma~\ref{lem:ev-bound-gaussian-global} to bound the first quantity and the first part of this theorem to bound the second quantity gives us, with probability at least $1 - \delta$,
\[
\lambda_\gamma \geq n - \gamma'n\br{1 + (1 - 2\epsilon)^{-1}\sqrt{c'\log\frac{e}{\gamma'}}} - (1 - 2\epsilon)^{-1}\br{1+\sqrt{\gamma'}}\sqrt{cnp + c'n\log\frac{2}{\delta}},
\]
where $\gamma' = 1 - \gamma$. This proves the second bound after setting $\epsilon = 1/6$.
\end{proof}

We now extend our analysis to the class of isotropic subGaussian distributions. We note that this analysis is without loss of generality since for non-isotropic sub-Gaussian distributions, we can simply use the fact that Theorem~\ref{thm:fcht} can admit whitened data for calculation of the SSC and SSS constants as we did for the case of non-isotropic Gaussian distributions.

\begin{lem}
\label{lem:ev-bound-subgaussian-global}
Let $X \in \R^{p \times n}$ be a matrix with columns sampled from some sub-Gaussian distribution with sub-Gaussian norm $K$ and covariance $\Sigma$. Then, for any $\delta > 0$, with probability at least $1 - \delta$, each of the following statements holds true:
\begin{align*}
	s_{\max}(XX^\top) &\leq \lambda_{\max}(\Sigma)\cdot n + C_K\cdot\sqrt{pn} + t\sqrt{n}\\
	s_{\min}(XX^\top) &\geq \lambda_{\min}(\Sigma)\cdot n - C_K\cdot\sqrt{pn} - t\sqrt{n},
\end{align*}
where $t = \sqrt{\frac{1}{c_K}\log\frac{2}{\delta}}$, and $c_K, C_K$ are absolute constants that depend only on the sub-Gaussian norm $K$ of the distribution.
\end{lem}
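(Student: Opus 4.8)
The plan is to mirror the proof of Lemma~\ref{lem:ev-bound-gaussian-global}, replacing the $\chi^2$ concentration used there by a Bernstein-type bound for sub-exponential random variables, and centering the empirical second-moment matrix at $n\Sigma$ rather than $nI$ since the columns are no longer isotropic. Concretely, I would first establish the operator-norm concentration
$$\norm{XX^\top - n\Sigma}_2 \leq C_K\sqrt{np} + t\sqrt{n},$$
and then read off the eigenvalue bounds via Weyl's inequality. Since $\Sigma$ is a covariance matrix we have $s_{\max}(n\Sigma) = n\lambda_{\max}(\Sigma)$ and $s_{\min}(n\Sigma) = n\lambda_{\min}(\Sigma)$, so that $s_{\max}(XX^\top) \leq n\lambda_{\max}(\Sigma) + \norm{XX^\top - n\Sigma}_2$ and $s_{\min}(XX^\top) \geq n\lambda_{\min}(\Sigma) - \norm{XX^\top - n\Sigma}_2$, which are exactly the two claimed inequalities once the deviation is controlled.

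For the operator-norm deviation I would use the covering-number device exactly as in Lemma~\ref{lem:ev-bound-gaussian-global}: fix an $\epsilon$-net $\cC \subseteq S^{p-1}$ of size at most $(3/\epsilon)^p$ \cite{Vershynin12}, so that controlling $\sup_{\v\in\cC}\abs{\v^\top(XX^\top - n\Sigma)\v}$ controls $\norm{XX^\top - n\Sigma}_2$ up to the factor $(1-2\epsilon)^{-1}$. For a fixed $\v\in S^{p-1}$ we have $\v^\top XX^\top\v = \norm{X^\top\v}_2^2 = \sum_{i=1}^n\br{\ip{\x_i}{\v}}^2$, where each marginal $\ip{\x_i}{\v}$ is sub-Gaussian with $\normsg{\ip{\x_i}{\v}} \leq K$. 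Hence $\br{\ip{\x_i}{\v}}^2$ is sub-exponential with sub-exponential norm $\O{K^2}$ and expectation $\v^\top\Sigma\v$, and $\sum_i\br{\ip{\x_i}{\v}}^2 - n\v^\top\Sigma\v$ is a centered sum of $n$ i.i.d. sub-exponential variables.

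An application of Bernstein's inequality \cite{Vershynin12} then gives, for $s>0$, the bound $\Pr{\abs{\norm{X^\top\v}_2^2 - n\v^\top\Sigma\v} \geq s} \leq 2\exp\br{-c_K\min\bc{s^2/n, s}}$. Taking $s = C_K\br{\sqrt{np} + \sqrt{n\log(2/\delta)}}$ and a union bound over the $(3/\epsilon)^p$ net points with $\epsilon=1/6$, the exponent $c_K s^2/n \gtrsim p + \log(2/\delta)$ dominates the entropy term $p\log(3/\epsilon)$, so the total failure probability is at most $\delta$. Absorbing the factor $(1-2\epsilon)^{-1}$ and all absolute constants into $C_K$, and writing the $\sqrt{n\log(2/\delta)}$ contribution as $t\sqrt{n}$ with $t = \sqrt{\frac{1}{c_K}\log\frac{2}{\delta}}$, yields the stated deviation bound and hence the lemma.

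The main obstacle is making the two-regime Bernstein tail cooperate with the net: the factor $\exp\br{-c_K\min\bc{s^2/n,s}}$ behaves like a genuine Gaussian tail (the $s^2/n$ branch) only when $s \lesssim n$, and it is precisely this branch that must out-run the cardinality $(3/\epsilon)^p$ of the net. This is why the target deviation $s \asymp \sqrt{np}$ is the correct scale exactly in the regime $n \gtrsim p$ relevant to the paper: there $s = \O{\sqrt{np}} = \O{n}$ keeps us safely in the sub-Gaussian branch, and $s^2/n \asymp p$ balances the log-cardinality of the net. The only other point requiring care is tracking how the sub-Gaussian norm $K$ propagates, via the sub-exponential norm of the squared marginals, into the constants $c_K$ and $C_K$.
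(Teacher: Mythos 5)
Your proposal is correct and follows essentially the same route as the paper: the paper's proof simply cites \cite[Theorem 5.39 and Remark 5.40(1)]{Vershynin12} for the operator-norm concentration $\norm{\frac{1}{n}XX^\top - \Sigma}_2 \leq C_K\sqrt{p/n} + t/\sqrt{n}$ and then reads off the eigenvalue bounds, and the net-plus-sub-exponential-Bernstein argument you give inline is precisely the standard proof of that cited result (and mirrors the paper's own proof of Lemma~\ref{lem:ev-bound-gaussian-global}). Your closing caveat that the $\sqrt{np}$-scale deviation requires staying in the sub-Gaussian branch of Bernstein, i.e.\ $n \gtrsim p$, is also consistent with the paper, whose stated bound likewise only makes sense in that regime (the general form of the cited result has a $\max(\delta,\delta^2)$ dependence that the paper silently linearizes).
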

\begin{proof}
Since the singular values of a matrix are unchanged upon transposition, we shall prove the above statements for $X^\top$. The benefit of this is that we get to work with a matrix with independent rows, so that standard results can be applied. The proof technique used in \cite[Theorem 5.39]{Vershynin12} (see also Remark 5.40 (1) therein) can be used to establish the following result: with probability at least $1-\delta$, with $t$ set as mentioned in the theorem statement, we have
\[
\norm{\frac{1}{n}XX^\top - \Sigma} \leq C_K\sqrt\frac{p}{n} + \frac{t}{\sqrt n}
\]
This implies that for any $\v \in S^{p-1}$, we have
\[
\abs{\frac{1}{n}\norm{X^\top\v}_2^2 - \v^\top\Sigma\v} = \abs{\frac{1}{n}\v^\top XX^\top\v - \v^\top\Sigma\v} \leq \abs{\frac{1}{n}XX^\top\v - \Sigma\v} \leq C_K\sqrt\frac{p}{n} + \frac{t}{\sqrt n}.
\]
The results then follow from elementary manipulations and the fact that the singular values and eigenvalues of real symmetric matrices coincide.
\end{proof}

\begin{thm}
\label{thm:ev-bound-subgaussian-local}
Let $X \in \R^{p \times n}$ be a matrix with columns sampled from some sub-Gaussian distribution with sub-Gaussian norm $K$ and covariance $\Sigma$. Let $c_K, C_K$ and $t$ be fixed to values as required in Lemma~\ref{lem:ev-bound-subgaussian-global}. Note that $c_K$ and $C_K$ are absolute constants depend only on the sub-Gaussian norm $K$ of the distribution. Let $\gamma \in (0,1]$ be some fixed constant. Then, with  we have the following:
\[
\Lambda_\gamma^{\text{subGauss}(K,\Sigma)} \leq \br{\lambda_{\max}(\Sigma)\cdot\gamma + \sqrt{\frac{\gamma}{c_K}\log\frac{e}{\gamma}}}\cdot n + C_K\cdot\sqrt{\gamma pn} + t\sqrt{n}.
\]
Furthermore, fix any $\epsilon \in (0,1)$ and let $\gamma$ be a value in $(0,1)$ satisfying the following
\[
\gamma > 1 - \min\bc{\frac{\epsilon\cdot\lambda_{\min}(\Sigma)}{\lambda_{\max}(\Sigma)},\exp\br{1 + W_{-1}\br{-\frac{c_K\epsilon^2\cdot\lambda^2_{\min}(\Sigma)}{e}}}},
\]
where $W_{-1}(\cdot)$ is the lower branch of the real valued restriction of the Lambert W function. Then we have, with the same confidence,
\[
\lambda_\gamma^{\text{subGauss}(K,\Sigma)} \geq (1 - 2\epsilon)\cdot\lambda_{\min}(\Sigma)\cdot n - C_K\br{1 + \sqrt{1-\gamma}}\sqrt{pn} - 2t\sqrt{n}
\]
\end{thm}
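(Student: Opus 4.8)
The plan is to mirror the proof of Theorem~\ref{thm:ev-bound-gaussian-local} almost verbatim, substituting the sub-Gaussian global bound of Lemma~\ref{lem:ev-bound-subgaussian-global} for its Gaussian counterpart. First I would establish the upper (SSS) bound $\Lambda_\gamma$. Fix a single $S \in \cS_\gamma$; since $X_S$ has $\gamma n$ columns drawn i.i.d.\ from the sub-Gaussian distribution, Lemma~\ref{lem:ev-bound-subgaussian-global} applied with $\gamma n$ in place of $n$ gives, with probability at least $1 - \delta_S$,
\[
s_{\max}(X_SX_S^\top) \leq \lambda_{\max}(\Sigma)\cdot\gamma n + C_K\sqrt{p\gamma n} + \sqrt{\tfrac{1}{c_K}\log\tfrac{2}{\delta_S}}\cdot\sqrt{\gamma n}.
\]
I would then take a union bound over all $\abs{\cS_\gamma} \leq \binom{n}{\gamma n} \leq \br{e/\gamma}^{\gamma n}$ subsets, setting $\delta_S = \delta/\abs{\cS_\gamma}$ so that $\log\tfrac{2}{\delta_S} \leq \log\tfrac{2}{\delta} + \gamma n\log\tfrac{e}{\gamma}$. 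Using sub-additivity of the square root to split the resulting deviation term, and the trivial inequality $\gamma^2 \leq \gamma$ to loosen $\gamma n\sqrt{\tfrac{1}{c_K}\log\tfrac{e}{\gamma}}$ into $n\sqrt{\tfrac{\gamma}{c_K}\log\tfrac{e}{\gamma}}$, the stated form of $\Lambda_\gamma$ follows, with the residual $\sqrt{\tfrac{\gamma n}{c_K}\log\tfrac{2}{\delta}}$ absorbed into $t\sqrt{n}$ since $\gamma \leq 1$.

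For the lower (SSC) bound I would use the identity $X_SX_S^\top = XX^\top - X_{\bar S}X_{\bar S}^\top$, which yields $\lambda_\gamma \geq s_{\min}(XX^\top) - \Lambda_{1-\gamma}$. Bounding $s_{\min}(XX^\top)$ from below by Lemma~\ref{lem:ev-bound-subgaussian-global} and $\Lambda_{1-\gamma}$ from above by the SSS bound just derived (at level $1-\gamma$) gives
\[
\lambda_\gamma \geq \br{\lambda_{\min}(\Sigma) - \lambda_{\max}(\Sigma)(1-\gamma) - \sqrt{\tfrac{1-\gamma}{c_K}\log\tfrac{e}{1-\gamma}}}n - C_K\br{1 + \sqrt{1-\gamma}}\sqrt{pn} - 2t\sqrt{n}.
\]
To reach the claimed bound it then suffices to show that the constraint on $\gamma$ forces the leakage term $\lambda_{\max}(\Sigma)(1-\gamma) + \sqrt{\tfrac{1-\gamma}{c_K}\log\tfrac{e}{1-\gamma}}$ to be at most $2\epsilon\cdot\lambda_{\min}(\Sigma)$.

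The main obstacle is precisely this last step, which is where the Lambert $W$ function enters. I would bound each of the two summands by $\epsilon\cdot\lambda_{\min}(\Sigma)$ separately. The linear term gives the trivial condition $1 - \gamma \leq \tfrac{\epsilon\cdot\lambda_{\min}(\Sigma)}{\lambda_{\max}(\Sigma)}$, i.e.\ the first branch of the minimum. The square-root term requires inverting the transcendental inequality $(1-\gamma)\log\tfrac{e}{1-\gamma} \leq c_K\epsilon^2\lambda_{\min}^2(\Sigma)$. Writing $u = 1-\gamma$ and $f(u) = u\log\tfrac{e}{u} = u(1-\log u)$, I would note $f'(u) = -\log u > 0$ on $(0,1)$, so $f$ is strictly increasing and the inequality is equivalent to $u$ lying below the unique root $u^\ast$ of $f(u^\ast) = \kappa$, where $\kappa := c_K\epsilon^2\lambda_{\min}^2(\Sigma)$. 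Substituting $u^\ast = \exp\br{1 + W_{-1}(-\kappa/e)}$, so that $1 - \log u^\ast = -W_{-1}(-\kappa/e)$, the defining identity $W_{-1}(z)e^{W_{-1}(z)} = z$ gives $f(u^\ast) = -e\,W_{-1}(-\kappa/e)\,e^{W_{-1}(-\kappa/e)} = \kappa$, verifying that $u^\ast$ is exactly the threshold and hence yielding the second branch of the minimum. Combining the two branches establishes the leakage bound and completes the proof; the same argument transfers to the non-isotropic case by the whitening reduction already noted before the theorem.
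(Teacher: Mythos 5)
Your proposal is correct and follows essentially the same route as the paper: the same union bound over $\cS_\gamma$ with $\log\abs{\cS_\gamma} \leq \gamma n\log(e/\gamma)$ for the SSS bound, the same identity $X_SX_S^\top = XX^\top - X_{\bar S}X_{\bar S}^\top$ reducing the SSC bound to $s_{\min}(XX^\top) - \Lambda_{1-\gamma}$, and the same splitting of the leakage term into the two conditions that produce the two branches of the minimum. Your Lambert-W step (monotonicity of $f(u) = u(1-\log u)$ plus direct verification that $u^\ast = \exp(1 + W_{-1}(-\kappa/e))$ is the root) is an equivalent, slightly more explicit rendering of the paper's change-of-variables argument; the only detail the paper adds that you omit is the corner case $\kappa = c_K\epsilon^2\lambda_{\min}^2(\Sigma) \geq 1$, where the inequality $(1-\gamma)\log\frac{e}{1-\gamma} \leq \kappa$ holds for every $\gamma \in (0,1]$ and the $W_{-1}$ branch (which is only real-valued for $\kappa \leq 1$) is not needed.
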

\begin{proof}
The first result follows from an application of Lemma~\ref{lem:ev-bound-subgaussian-global}, a union bound over sets in $\cS_\gamma$, as well as the bound $\binom{n}{k} \leq \br{\frac{en}{k}}^k$ for all $1 \leq k \leq n$ which puts a bound on the number of sparse sets as $\log\abs{\cS_\gamma} \leq \gamma\cdot n\log\frac{e}{\gamma}$.

For the second result, we observe that $X_SX_S^\top = XX^\top - X_{\bar S}X_{\bar S}^\top$, so that $s_{\min}(X_SX_S^\top) \geq s_{\min}(XX^\top) - s_{\max}(X_{\bar S}X_{\bar S}^\top)$. This gives us
\[
\underset{S \in \cS_\gamma}{\inf}s_{\min}(X_SX_S^\top) \geq s_{\min}(XX^\top) - \underset{S \in \cS_{1-\gamma}}{\sup}s_{\max}(X_SX_S^\top).
\]
Using Lemma~\ref{lem:ev-bound-subgaussian-global} and the first part of this result gives us
\begin{align*}
\underset{S \in \cS_\gamma}{\inf}s_{\min}(X_SX_S^\top) \geq {} & \lambda_{\min}(\Sigma)\cdot n - C_K\cdot\sqrt{pn} - t\sqrt{n}\\
							& - {\br{\lambda_{\max}(\Sigma)(1-\gamma) + \sqrt{\frac{1-\gamma}{c_K}\log\frac{e}{1-\gamma}}} n - C_K\sqrt{(1-\gamma) pn} - t\sqrt{n}}\\
							= {} & \br{\lambda_{\min}(\Sigma) - \lambda_{\max}(\Sigma)(1-\gamma) - \sqrt{\frac{1-\gamma}{c_K}\log\frac{e}{1-\gamma}}}n\\
							& - C_K\br{1 + \sqrt{1-\gamma}}\sqrt{pn} - 2t\sqrt{n}\\
							\geq {} & (1-2\epsilon)\cdot\lambda_{\min}(\Sigma)\cdot n - C_K\br{1 + \sqrt{1-\gamma}}\sqrt{pn} - 2t\sqrt{n},
\end{align*}
where the last step follows from the assumptions on $\gamma$ and by noticing that it suffices to show the following two inequalities to establish the last step
\begin{enumerate}
	\item $\lambda_{\max}(\Sigma)(1-\gamma) \leq \epsilon\cdot\lambda_{\min}(\Sigma)$
	\item $(1-\gamma)\log\frac{e}{1-\gamma} \leq c_K\epsilon^2\cdot\lambda^2_{\min}(\Sigma)$
\end{enumerate}
The first part gives us the condition $\gamma > 1 - \frac{\epsilon\cdot\lambda_{\min}(\Sigma)}{\lambda_{\max}(\Sigma)}$ in a straightforward manner. For the second part, denote $v = c_K\epsilon^2\cdot\lambda^2_{\min}(\Sigma)$. Note that for $v \geq 1$, \emph{all} values of $\gamma \in (0,1]$ satisfy the inequality.

Otherwise we require the use of the Lambert W function (also known as the product logarithm function). This function ensures that its value $W(z)$ for any $z > -1/e$ satisfies $z = W(z)e^{W(z)}$. In our case, making a change of variable $(1-\gamma) = e^\eta$ gives us the inequality $(\eta - 1)e^{\eta - 1} \geq -v/e$. Note that since $v \leq 1$ in this case, $-v/e \in (-1/e,0)$ i.e. a valid value for the Lambert W function. However, $(-1/e,0)$ is also the region in which the Lambert W function is multi-valued. Taking the worse bound for $\gamma$ by choosing the lower branch $W_{-1}(\cdot)$ gives us the second condition $\gamma \geq 1 - \exp\br{1 + W_{-1}\br{-\frac{c_K\epsilon^2\cdot\lambda^2_{\min}(\Sigma)}{e}}}$.
\end{proof}
It is important to note that for any $-1/e \leq z < 0$, we have $\exp\br{1 + W_{-1}(z)} > 0$ which means that the bounds imposed on $\gamma$ by Theorem~\ref{thm:ev-bound-subgaussian-local} always allow a non-zero fraction of the data points to be corrupted in an adversarial manner. However, the exact value of that fraction depends, in a complicated manner, on the sub-Gaussian norm of the underlying distribution, as well as the condition number and the smallest eigenvalue of the second moment of the underlying distribution.

We also note that due to the generic nature of the previous analysis, which can handle the entire class of sub-Gaussian distributions, the bounds are not as explicitly stated in terms of universal constants as they are for the standard Gaussian design setting (Theorem~\ref{thm:ev-bound-gaussian-local}).

We now establish that for a wide family of random designs, the SRSC and SRSS properties are satisfied with high probability as well. For sake of simplicity, we will present our analysis for the standard Gaussian design. However, the results would readily extend to general Gaussian and sub-Gaussian designs using techniques similar to Theorem~\ref{thm:ev-bound-subgaussian-local}.

\begin{thm}
\label{thm:ev-bound-high-d-gaussian-local}
Let $X \in \R^{p \times n}$ be a matrix whose columns are sampled i.i.d from a standard Gaussian distribution i.e. $\x_i \sim \cN(\vz,I)$. Then for any $\gamma > 0$ and $s \leq p$, with probability at least $1 - \delta$, the matrix $X$ satisfies the SRSC and SRSS properties with constants
\begin{align*}
L_{(\gamma,s)}^{\text{Gauss}} &\leq \gamma n\br{1 + 3e\sqrt{6\log\frac{e}{\gamma}}} + \softO{\sqrt{ns+n\log\frac{1}{\delta}}}\\
\alpha_{(\gamma,s)}^{\text{Gauss}} &\geq n - (1-\gamma)n\br{1 + 3e\sqrt{6\log\frac{e}{1 - \gamma}}} - \softOm{\sqrt{ns+n\log\frac{1}{\delta}}}.
\end{align*}
\end{thm}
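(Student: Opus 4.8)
The plan is to mirror the two-step argument used for the dense SSC/SSS bounds in Theorem~\ref{thm:ev-bound-gaussian-local}, simply replacing the $\epsilon$-net over the full sphere $S^{p-1}$ by a net over the set of $s$-sparse unit vectors $S^{p-1}_s := \bc{\u \in S^{p-1} : \norm{\u}_0 \leq s}$. The first observation is that, by Definitions~\ref{defn:rsc-rss} and~\ref{defn:srsc-srss}, the SRSS constant is exactly $L_{(\gamma,s)} = \sup_{S \in \cS_\gamma} \sup_{\u \in S^{p-1}_s} \norm{X_S^\top\u}_2^2$ and the SRSC constant is the corresponding infimum. Thus it suffices to control the sparse quadratic form $\norm{X_S^\top\u}_2^2$ uniformly over all subsets $S \in \cS_\gamma$ and all $s$-sparse unit $\u$, which is precisely a subset-restricted version of the spectral estimates in Lemma~\ref{lem:ev-bound-gaussian-global}.

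First I would establish the sparse analogue of Lemma~\ref{lem:ev-bound-gaussian-global}: for the full matrix $X$ with $A := XX^\top - nI$, bound $\sup_{\u \in S^{p-1}_s}\abs{\u^\top A\u}$. The only change is the covering number. I build the net support-by-support: for each of the $\binom{p}{s}$ coordinate supports $T$ of size $s$, take an $\epsilon$-net of the sphere in $\R^T$ of size at most $(3/\epsilon)^s$ (Lemma 5.2 of~\cite{Vershynin12}). Because an $\u$ supported on $T$ is approximated by a net point $\v$ on the \emph{same} support, the difference $\u - \v$ remains supported on $T$, so the self-referential passage $\abs{\u^\top A\u - \v^\top A\v} \leq 2\epsilon\abs{\u^\top A \u}$ goes through on the restricted principal submatrix $A_{TT}$ and yields the familiar $(1-2\epsilon)^{-1}$ factor. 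The total net has log-cardinality $\O{s\log(ep/s) + s\log(1/\epsilon)} = \softO{s}$. For each fixed net point $\v$, the variable $\norm{X^\top\v}_2^2$ is distributed as $\chi^2(n)$, so the Bernstein-type tail of Lemma~\ref{lem:chi-2-bernstein} applies exactly as in Lemma~\ref{lem:ev-bound-gaussian-global}; taking $\mu^2 n \asymp s\log p + \log\frac{1}{\delta}$ and a union bound gives, with probability $1-\delta$, the deviation $\sup_{\u \in S^{p-1}_s}\abs{\norm{X^\top\u}_2^2 - n} \leq \softO{\sqrt{ns + n\log\frac1\delta}}$, the $\log p$ being absorbed into the $\softO{\cdot}$.

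Next I would lift this to subsets exactly as in Theorem~\ref{thm:ev-bound-gaussian-local}. For a fixed $S \in \cS_\gamma$, apply the sparse-global bound to $X_S$ (which has $\gamma n$ columns), then union bound over $\cS_\gamma$ using $\binom{n}{\gamma n} \leq \br{e/\gamma}^{\gamma n}$, i.e. $\log\abs{\cS_\gamma} \leq \gamma n\log\frac{e}{\gamma}$. Folding this term under the square root and simplifying as in Theorem~\ref{thm:ev-bound-gaussian-local} produces the leading term $\gamma n\br{1 + 3e\sqrt{6\log\frac{e}{\gamma}}}$ — unchanged from the dense case, since that coefficient originates from the $\cS_\gamma$ union bound and not from the covering of $\u$ — while the sparse net contributes only the residual $\softO{\sqrt{ns + n\log\frac1\delta}}$, giving the claimed $L_{(\gamma,s)}$. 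For the SRSC bound I would use the identity $X_SX_S^\top = XX^\top - X_{\bar S}X_{\bar S}^\top$ restricted to sparse $\u$, so that $\inf_{S \in \cS_\gamma}\inf_{\u \in S^{p-1}_s}\norm{X_S^\top\u}_2^2 \geq \inf_{\u \in S^{p-1}_s}\norm{X^\top\u}_2^2 - \sup_{S \in \cS_{1-\gamma}}\sup_{\u \in S^{p-1}_s}\norm{X_S^\top\u}_2^2 = s_{\min}^{(s)}(XX^\top) - L_{(1-\gamma,s)}$; bounding the first by the sparse-global lower bound and the second by the just-proved SRSS estimate at level $1-\gamma$ yields $\alpha_{(\gamma,s)}$.

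The main obstacle is purely the covering-number bookkeeping rather than any new probabilistic ingredient. I expect the delicate points to be (i) ensuring the support-by-support net keeps the difference vectors $s$-sparse so that the self-bounding $(1-2\epsilon)^{-1}$ step operates on the restricted submatrix and does not leak $2s$-sparse terms, and (ii) verifying that the two nested union bounds compose correctly, so that the sparse covering injects only an $s\log p$ factor (hidden inside $\softO{\cdot}$) while the subset union bound alone supplies the $\sqrt{\log\frac{e}{\gamma}}$ dependence in the leading coefficient. Once these are checked, the $\chi^2$ concentration and the complement decomposition transfer verbatim from the proofs of Lemma~\ref{lem:ev-bound-gaussian-global} and Theorem~\ref{thm:ev-bound-gaussian-local}.
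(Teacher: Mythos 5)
Your proposal is correct and takes essentially the same route as the paper's own proof: a covering net over the $s$-sparse unit vectors of cardinality at most $\binom{p}{s}\br{1+\frac{2}{\epsilon}}^s \leq \br{\frac{3ep}{\epsilon s}}^s$, the $\chi^2$ concentration of Lemma~\ref{lem:chi-2-bernstein} combined with union bounds first over this net and then over $\cS_\gamma$ (so that the sparse net contributes only the $\widetilde{\cal O}\br{\sqrt{ns+n\log\frac{1}{\delta}}}$ term while the subset union bound supplies the leading $\gamma n\br{1 + 3e\sqrt{6\log\frac{e}{\gamma}}}$ coefficient), and the complement decomposition $X_SX_S^\top = XX^\top - X_{\bar S}X_{\bar S}^\top$ for the SRSC lower bound, exactly as in Theorem~\ref{thm:ev-bound-gaussian-local}. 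The two delicate points you flag (same-support net differences and the composition of the two union bounds) are precisely how the paper's argument works, so no gap remains.
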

\begin{proof}
The proof of this theorem proceeds similarly to that of Theorem~\ref{thm:ev-bound-gaussian-local}. Hence, we simply point out the main differences. First, we shall establish, that for any $\epsilon > 0$, with probability at least $1-\delta$, $X$ satisfies the RSC and RSS properties at level $s$ with the following constants
\begin{align*}
L_s &\leq n + (1-2\epsilon)^{-1}\sqrt{bns + b'n\log\frac{2}{\delta}}\\
\alpha_s &\geq n - (1-2\epsilon)^{-1}\sqrt{bns + b'n\log\frac{2}{\delta}},
\end{align*}
where $b = 24e^2\log\frac{3ep}{\epsilon s}$ and $b' = 24e^2$. To do so we notice that the only change needed to be made would be in the application of the covering number argument. Instead of applying the union bound over an $\epsilon$-cover $\cC^{p-1}$ of $S^{p-1}$, we would only have to consider an $\epsilon$-cover $\cC^{p-1}_s$ of the set $S^{p-1}_s$ of all $s$-sparse unit vectors in $p$-dimensions. A straightforward calculation shows us that
\[
\abs{\cC^{p-1}_s} \leq \binom{p}{s}\br{1+\frac{2}{\epsilon}}^s \leq \br{\frac{3ep}{\epsilon s}}^s.
\]
Thus, setting $\mu^2 = b\cdot\frac{s}{n} + b'\cdot\frac{\log\frac{2}{\delta}}{n}$, where $b = 24e^2\log\frac{3ep}{\epsilon s}$ and $b' = 24e^2$, we get
\[
\Pr{\underset{\v \in \cC^{p-1}_s}\sup\abs{\norm{X\v}_2^2 - n} \geq \sqrt{bns + b'n\log\frac{2}{\delta}}} \leq \delta,
\]
which establishes the required RSC and RSS constants for $X$. Now, moving on to the SRSS constant, it follows simply by applying a union bound over all sets in $\cS_\gamma$ much like in Theorem~\ref{thm:ev-bound-gaussian-local}. One can then proceed to bound the SRSC constant in a similar manner.

We note that the nature of the SRSC and SRSS bounds indicate that our \alg-FC algorithm in the high dimensional sparse recovery setting has noise tolerance properties, characterized by the largest corruption index $\alpha$ that can be tolerated, identical to its low dimnensional counterpart - something that Theorem~\ref{thm:fcht-highd} states explicitly.
\end{proof}


\section{Supplementary Results}
\begin{clm}
\label{clm:supp-ord-stat}
Given any vector $\v \in \R^n$, let $\sigma \in S_n$ be defined as the permutation that orders elements of $\v$ in descending order of their magnitudes i.e. $\abs{v_{\sigma(1)}} \geq \abs{v_{\sigma(2)}} \geq \ldots \geq \abs{v_{\sigma(n)}}$. For any $0 < p \leq q \leq 1$, let $S_1 \in \cS_q$ be an arbitrary set of size $q\cdot n$ and $S_2 = \bc{\sigma(i): n-p\cdot n + 1 \leq i \leq n}$. Then we have $\norm{\v_{S_2}}_2^2 \leq \frac{p}{q} \norm{\v_{S_1}}_2^2 \leq \norm{\v_{S_1}}_2^2$.
\end{clm}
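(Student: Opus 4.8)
The plan is to exploit the single structural fact that $S_2$ selects exactly the $pn$ coordinates of $\v$ of \emph{smallest} magnitude, so that both its total energy and its per-coordinate average are as small as possible. The second inequality $\frac{p}{q}\norm{\v_{S_1}}_2^2 \leq \norm{\v_{S_1}}_2^2$ is immediate since $p \leq q$ forces $p/q \leq 1$, so all the content sits in the first inequality.

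The first step I would take is a reduction to a canonical choice of $S_1$. The left-hand side $\norm{\v_{S_2}}_2^2$ is independent of $S_1$, and the right-hand side $\frac{p}{q}\norm{\v_{S_1}}_2^2$ is monotone in $\norm{\v_{S_1}}_2^2$; hence it suffices to prove the bound for the set $S_1^\ast \in \cS_q$ \emph{minimizing} $\norm{\v_{S_1}}_2^2$, as any other admissible $S_1$ only enlarges the right-hand side. This minimizer is precisely the set of $qn$ smallest-magnitude coordinates, $S_1^\ast = \bc{\sigma(i): n - qn + 1 \leq i \leq n}$, and since $p \leq q$ we have the nesting $S_2 \subseteq S_1^\ast$, which is what makes the comparison clean.

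I would then compare the two energies on the nested pair $S_2 \subseteq S_1^\ast$ via a threshold argument. Writing $m_i := v_{\sigma(i)}^2$ for the squared magnitudes in descending order ($m_1 \geq \cdots \geq m_n$) and setting $\tau := m_{n - pn + 1}$ (the largest squared magnitude appearing in $S_2$), every coordinate of $S_2$ contributes at most $\tau$, so $\norm{\v_{S_2}}_2^2 \leq pn\cdot\tau$, while each of the $(q-p)n$ ``extra'' coordinates in $S_1^\ast\setminus S_2$ contributes at least $\tau$, giving $\norm{\v_{S_1^\ast}}_2^2 - \norm{\v_{S_2}}_2^2 \geq (q-p)n\cdot\tau$. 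Combining these two estimates yields $\frac{q-p}{p}\norm{\v_{S_2}}_2^2 \leq (q-p)n\,\tau \leq \norm{\v_{S_1^\ast}}_2^2 - \norm{\v_{S_2}}_2^2$, which rearranges to $\norm{\v_{S_2}}_2^2 \leq \frac{p}{q}\norm{\v_{S_1^\ast}}_2^2$. Equivalently, this is just the statement that the average squared magnitude over the smallest $pn$ coordinates is at most the average over the smallest $qn$ coordinates.

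I do not anticipate a genuine obstacle here; the claim is elementary. The one point deserving a little care is the legitimacy of the first reduction — verifying that minimizing the right-hand side over $S_1 \in \cS_q$ is valid and that the minimizer is exactly the smallest-magnitude set, so that $S_2 \subseteq S_1^\ast$ holds. I would also note in passing that $pn$ and $qn$ are integers (enforced by $S_1 \in \cS_q$ and $\abs{S_2} = pn$), so the count of $(q-p)n$ coordinates in $S_1^\ast\setminus S_2$ is exact.
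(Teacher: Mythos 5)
Your proof is correct and follows essentially the same route as the paper's: you reduce to the canonical smallest-$qn$ set (the paper's $S_3$), decompose it as $S_2$ plus the extra coordinates (the paper's $S_4$), and exploit that every extra coordinate dominates every coordinate of $S_2$ in magnitude. Your threshold $\tau$ is just a rephrasing of the paper's comparison of per-coordinate averages, so the two arguments are the same up to bookkeeping.
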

\begin{proof}
Let $S_3 = \bc{\sigma(i): n-q\cdot n + 1 \leq i \leq n}$ and $S_4 = \bc{\sigma(i): n-q\cdot n + 1 \leq i \leq n - p\cdot n}$. Clearly, we have $\norm{\v_{S_3}}_2^2 \leq \norm{\v_{S_1}}_2^2$ since $S_3$ contains the smallest $q\cdot n$ elements (by magnitude). Now we have $\norm{\v_{S_3}}_2^2 = \norm{\v_{S_2}}_2^2 + \norm{\v_{S_4}}_2^2$. Moreover, since each element of $S_4$ is larger in magnitude than every element of $S_2$, we have
\[
\frac{1}{\abs{S_4}}\norm{\v_{S_4}}_2^2 \geq \frac{1}{\abs{S_2}}\norm{\v_{S_2}}_2^2.
\]
This gives us
\[
\norm{\v_{S_2}}_2^2 = \norm{\v_{S_3}}_2^2 - \norm{\v_{S_4}}_2^2 \leq \norm{\v_{S_3}}_2^2 - \frac{\abs{S_4}}{\abs{S_2}}\norm{\v_{S_2}}_2^2,
\]
which upon simple manipulations, gives us the claimed result.
\end{proof}

\begin{lem}
\label{lem:chi-2-bernstein}
Let $Z$ be distributed according to the chi-squared distribution with $k$ degrees of freedom i.e. $Z \sim \chi^2(k)$. Then for all $t \geq 0$,
\[
\Pr{\abs{Z -k}\geq t} \leq 2\exp\br{-\min\bc{\frac{t^2}{24ke^2},\frac{t}{4\sqrt 3e}}}
\]
\end{lem}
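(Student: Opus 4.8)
The plan is to realize $Z$ as a sum of $k$ independent squared standard Gaussians and then invoke a Bernstein-type tail bound for sums of independent sub-exponential variables; the two-regime shape $\min\bc{t^2/k,\,t}$ in the statement is precisely the signature such a bound produces. Concretely, write $Z = \sum_{i=1}^k g_i^2$ with $g_i \sim \cN(0,1)$ i.i.d., so that $Z - k = \sum_{i=1}^k Y_i$ where $Y_i := g_i^2 - 1$ are i.i.d.\ and mean-zero. In this decomposition the sub-Gaussian regime $\exp\br{-t^2/(24ke^2)}$ will govern the small-deviation range $t \lesssim k$, the sub-exponential regime $\exp\br{-t/(4\sqrt 3 e)}$ will govern the large-deviation range $t \gtrsim k$, and the crossover between them will fall out of the Chernoff optimization rather than needing to be imposed by hand.

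The technical heart is to control the sub-exponential behaviour of each $Y_i$, which I would do through its moments and then convert to a moment generating function bound. Since $g \sim \cN(0,1)$ satisfies $\br{\Ebb\bs{\abs{g}^p}}^{1/p} \leq C\sqrt{p}$ for all $p \geq 1$ with an explicit absolute constant $C$, squaring gives $\br{\Ebb\bs{\abs{g}^{2p}}}^{1/p} \leq C^2\cdot 2p$, so $Y_i$ is sub-exponential with $\norm{Y_i}_{\psi_1}$ an absolute constant $K$ after centering. The crucial step is the passage from moments to the MGF: expanding $\Ebb\bs{e^{\lambda Y_i}} = \sum_{p \geq 0} \lambda^p\,\Ebb\bs{Y_i^p}/p!$, using $\Ebb\bs{Y_i}=0$ to kill the linear term, bounding $\abs{\Ebb\bs{Y_i^p}} \leq (Kp)^p$, and applying $p! \geq (p/e)^p$, each term is controlled by $(\lambda K e)^p$. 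Summing the resulting geometric series for $\abs{\lambda} \leq c/(Ke)$ yields $\Ebb\bs{e^{\lambda Y_i}} \leq \exp\br{C' e^2 K^2 \lambda^2}$. This is exactly the place where the factor $e$ enters the final constants: one power of $e$ from $p!\geq(p/e)^p$ in the linear (large-deviation) regime, and its square $e^2$ in the quadratic (variance-proxy) regime.

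Having a per-variable bound of the form $\Ebb\bs{e^{\lambda Y_i}} \leq \exp\br{v_0 \lambda^2}$ valid for $\abs{\lambda} \leq 1/b_0$, I would tensorize over the $k$ independent summands to get $\Ebb\bs{e^{\lambda(Z-k)}} \leq \exp\br{v_0 k \lambda^2}$ on the same range of $\lambda$, and then apply the Chernoff bound $\Pr{Z - k \geq t} \leq \exp\br{-\lambda t + v_0 k \lambda^2}$. Optimizing over $\lambda$ in two cases, namely the interior optimum $\lambda = t/(2 v_0 k)$ when it lies in $[0,1/b_0]$ and the boundary choice $\lambda = 1/b_0$ otherwise, produces $\exp\br{-\min\bc{t^2/(4 v_0 k),\,t/(2 b_0)}}$; a symmetric argument for the lower tail and a union bound give the factor $2$. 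Substituting the explicit $v_0 \asymp e^2$ and $b_0 \asymp e$ from the previous step yields the stated $24e^2$ and $4\sqrt 3 e$.

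I expect the main obstacle to be bookkeeping of the explicit constants rather than any conceptual difficulty. The $\min\bc{t^2/k,\,t}$ form is automatic, but matching the particular values $24e^2$ and $4\sqrt 3 e$ requires a genuinely non-asymptotic moment bound for $\abs{g}^p$ and careful tracking of how the constant degrades under squaring, centering, the $p!\geq(p/e)^p$ estimate, and the Chernoff optimization, rather than the usual $\O{\cdot}$ estimates. An alternative that bypasses the moment route is to use the exact identity $\Ebb\bs{e^{\lambda Y_i}} = e^{-\lambda}(1-2\lambda)^{-1/2}$ directly, but this tends to give cleaner (Laurent--Massart-type) constants that do not match the $e$-laden form needed downstream in Lemma~\ref{lem:ev-bound-gaussian-global}, so I would favour the sub-exponential moment argument to keep the constants consistent with their later use.
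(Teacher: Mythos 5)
Your proposal is correct and follows essentially the same route as the paper's proof: decompose $Z$ into $k$ i.i.d.\ $\chi^2(1)$ variables, center them, bound the sub-exponential norm via moments, convert moments to an MGF bound using $q! \geq (q/e)^q$ on a restricted range of $\lambda$, tensorize, apply Chernoff with the two-regime choice of $\lambda$, and take a union bound over both tails. The only cosmetic difference is that the paper obtains the per-variable constant directly from the exact chi-squared moments, $\br{\Ebb\abs{X}^p}^{1/p} = \br{(2p-1)!!}^{1/p} \leq \frac{\sqrt 3}{2}p$, giving $\norm{Y}_{\psi_1} \leq \sqrt 3$ explicitly, whereas you route through the sub-Gaussian moment bound for $g$ and squaring, deferring the constant bookkeeping you correctly identify as the remaining work.
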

\begin{proof}
This lemma requires a proof structure that traces several basic results in concentration inequalities for sub-exponential variables \cite[Lemma 5.5, 5.15, Proposition 5.17]{Vershynin12}. The purpose of performing this exercise is to explicate the constants involved so that a crisp bound can be provided on the corruption index that our algorithm can tolerate in the standard Gaussian design case.

We first begin by establishing the sub-exponential norm of a chi-squared random variable with a single degree of freedom. Let $X \sim \chi^2(1)$. Then using standard results on the moments of the standard normal distribution gives us, for all $p \geq 2$,
\[
(\Ebb{\abs{X}^p})^{1/p} = ((2p-1)!!)^{1/p} = \br{\frac{(2p)!}{2^pp!}}^{1/p} \leq \frac{\sqrt 3}{2}p
\]
Thus, the sub-exponential norm of $X$ is upper bounded by $\sqrt 3/2$. By applying the triangle inequality, we obtain, as a corollary, an upper bound on the sub-exponential norm of the centered random variable $Y = X - 1$ as $\norm{Y}_{\psi_1} \leq 2\norm{X}_{\psi_1} \leq \sqrt 3$.

Now we bound the moment generating function of the random variable $Y$. Noting that $\Ebb Y = 0$, we have, for any $\abs{\lambda}\leq \frac{1}{2\sqrt{3}e}$,
\[
\Ebb \exp(\lambda Y) = 1 + \sum_{q=2}^\infty\frac{\Ebb (\lambda Y)^q}{q!} \leq 1 + \sum_{q=2}^\infty\frac{(\sqrt 3|\lambda|q)^q}{q!} \leq 1 + \sum_{q=2}^\infty(\sqrt 3e|\lambda|)^q \leq 1 + 6e^2\lambda^2 \leq \exp(6e^2\lambda^2).
\]
Note that the second step uses the sub-exponentially of $Y$, the third step uses the fact that $q! \geq (q/e)^q$, and the fourth step uses the bound on $|\lambda|$. Now let $X_1,X_2,\ldots X_k$ be $k$ independent random variables distributed as $\chi^2(1)$. Then we have $Z \sim \sum_{i=1}^kX_i$. Using the exponential Markov's inequality, and the independence of the random variables $X_i$ gives us
\[
\Pr{Z - k\geq t} = \Pr{e^{\lambda(Z-k)} \geq e^{\lambda t}} \leq e^{-\lambda t}\Ebb e^{\lambda (Z-k)} = e^{-\lambda t}\prod_{i=1}^k\Ebb\exp(\lambda(X_i-1)).
\]
For any $|\lambda| \leq \frac{1}{2\sqrt 3e}$, the above bounds on the moment generating function give us
\[
\Pr{Z - k \geq t} \leq e^{-\lambda t}\prod_{i=1}^k\exp(6e^2\lambda^2) = \exp(-\lambda t + 6ke^2\lambda^2).
\]
Choosing $\lambda = \min\bc{\frac{1}{2\sqrt 3e},\frac{t}{12ke^2}}$, we get
\[
\Pr{Z - k \geq t} \leq \exp\br{-\min\bc{\frac{t^2}{24ke^2},\frac{t}{4\sqrt 3e}}}.
\]
Repeating this argument gives us the same bound for $\Pr{k - Z \geq t}$. This completes the proof.
\end{proof}
\clearpage
\section{Supplementary Experimental Results}

\begin{figure}[h!]
	\centering
	\hspace*{-1ex}
	\subfigure[\hspace*{-3ex}]{
		\includegraphics[scale=0.175]{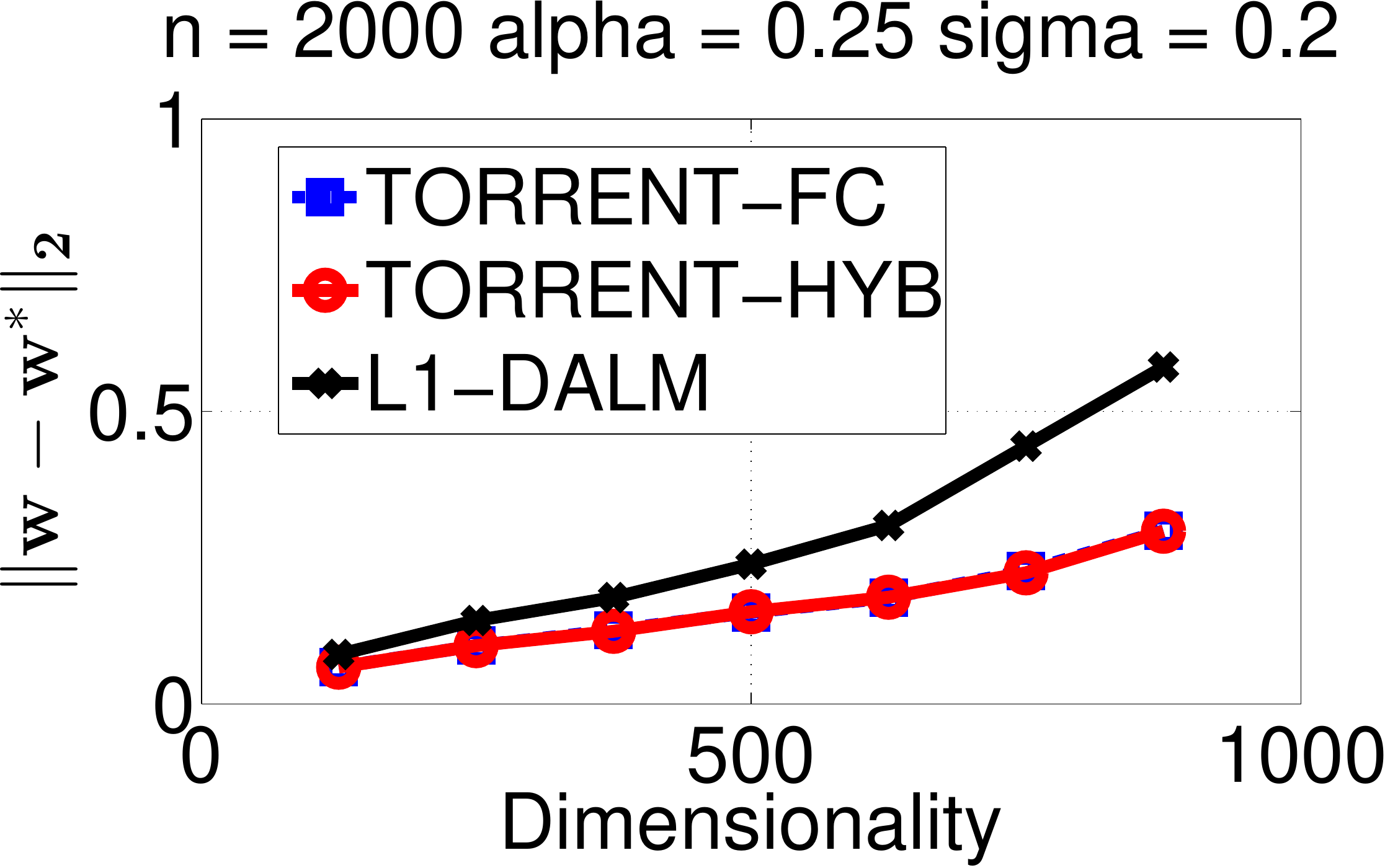}
	}\hspace*{-1ex}
	\subfigure[\hspace*{-3ex}]{
		\includegraphics[scale=0.175]{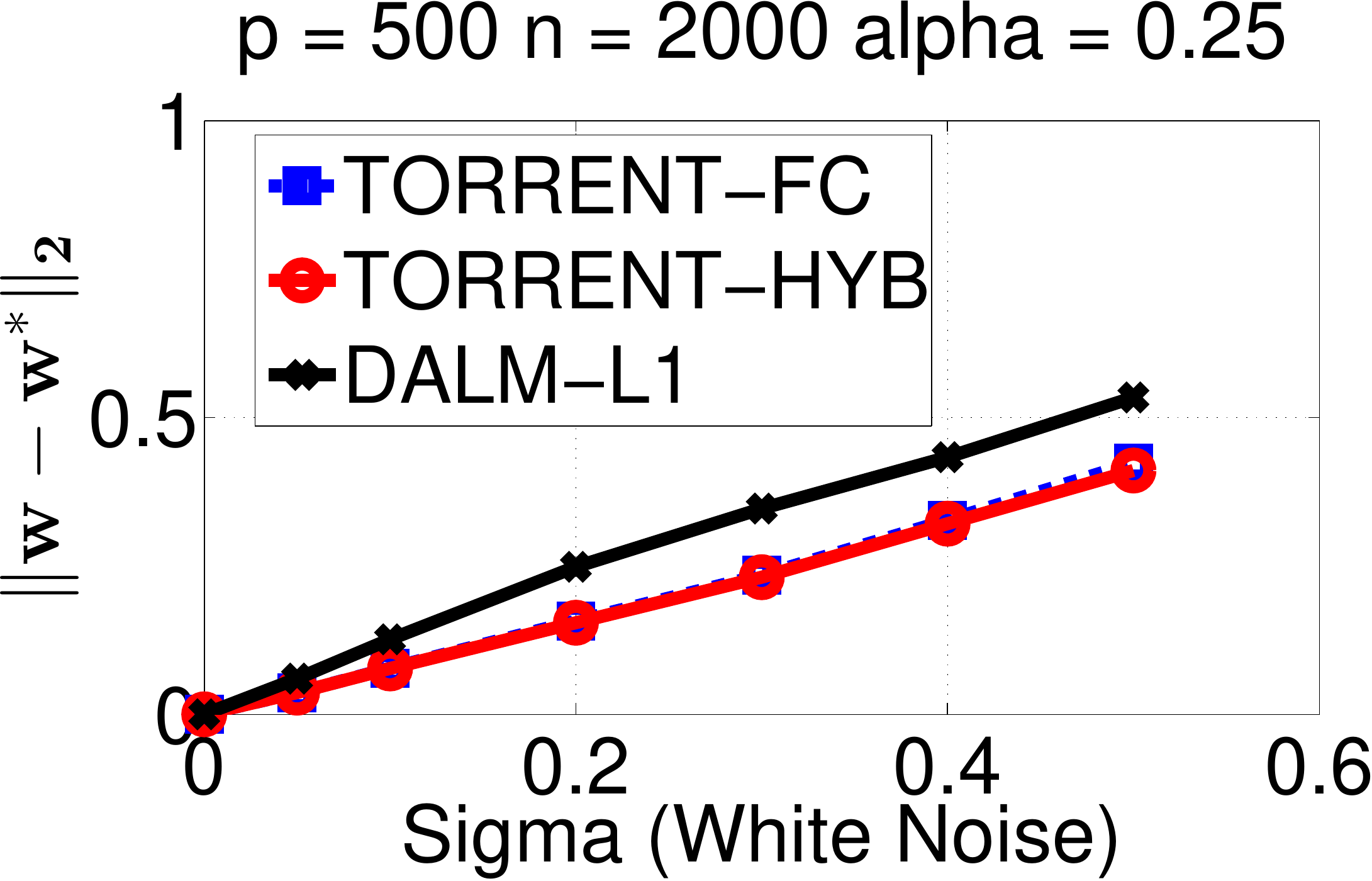}
	}\hspace*{-1ex}
	\subfigure[\hspace*{-3ex}]{
		\includegraphics[scale=0.175]{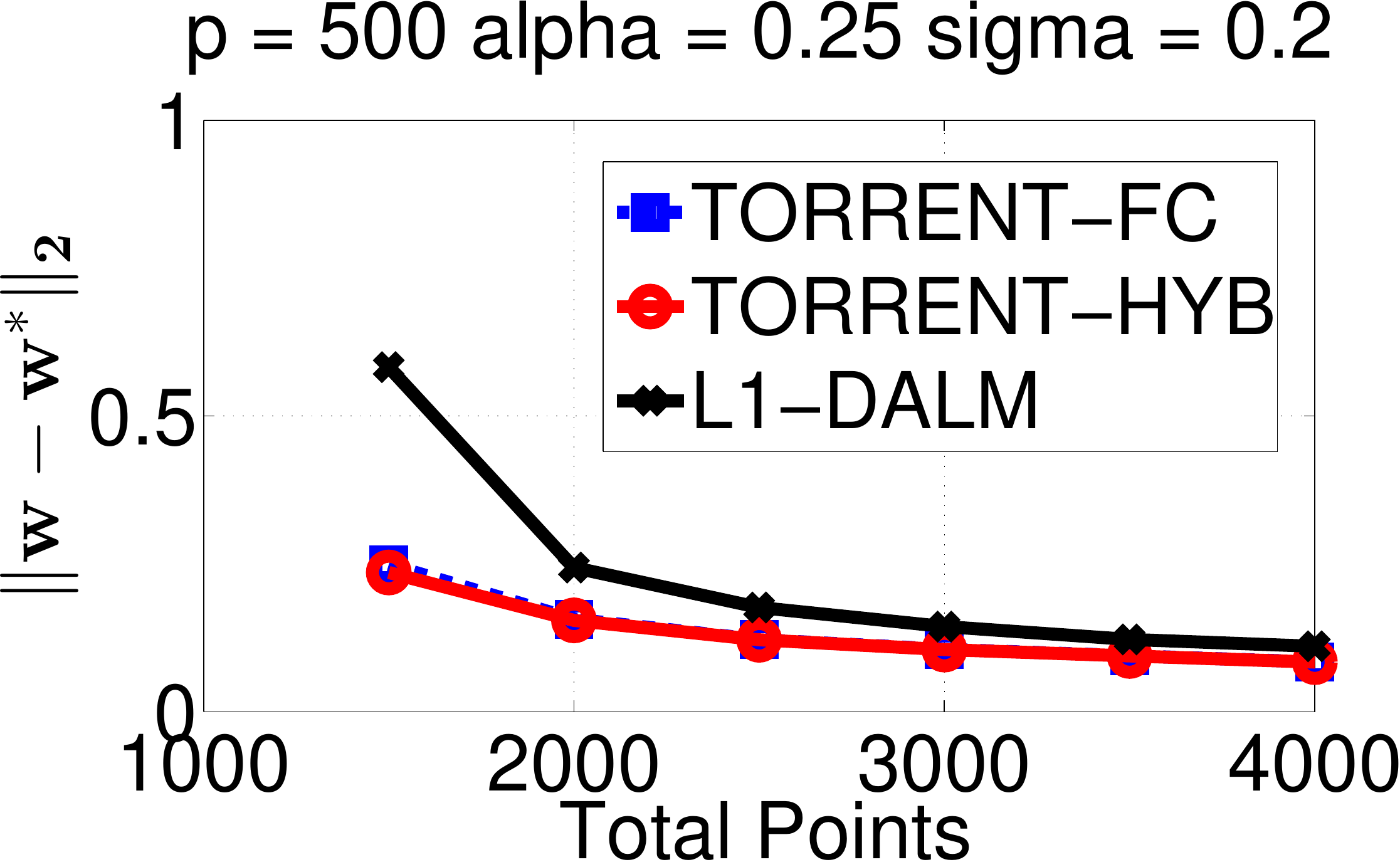}
	}\hspace*{-1ex}
	\subfigure[\hspace*{-3ex}]{
		\includegraphics[scale=0.175]{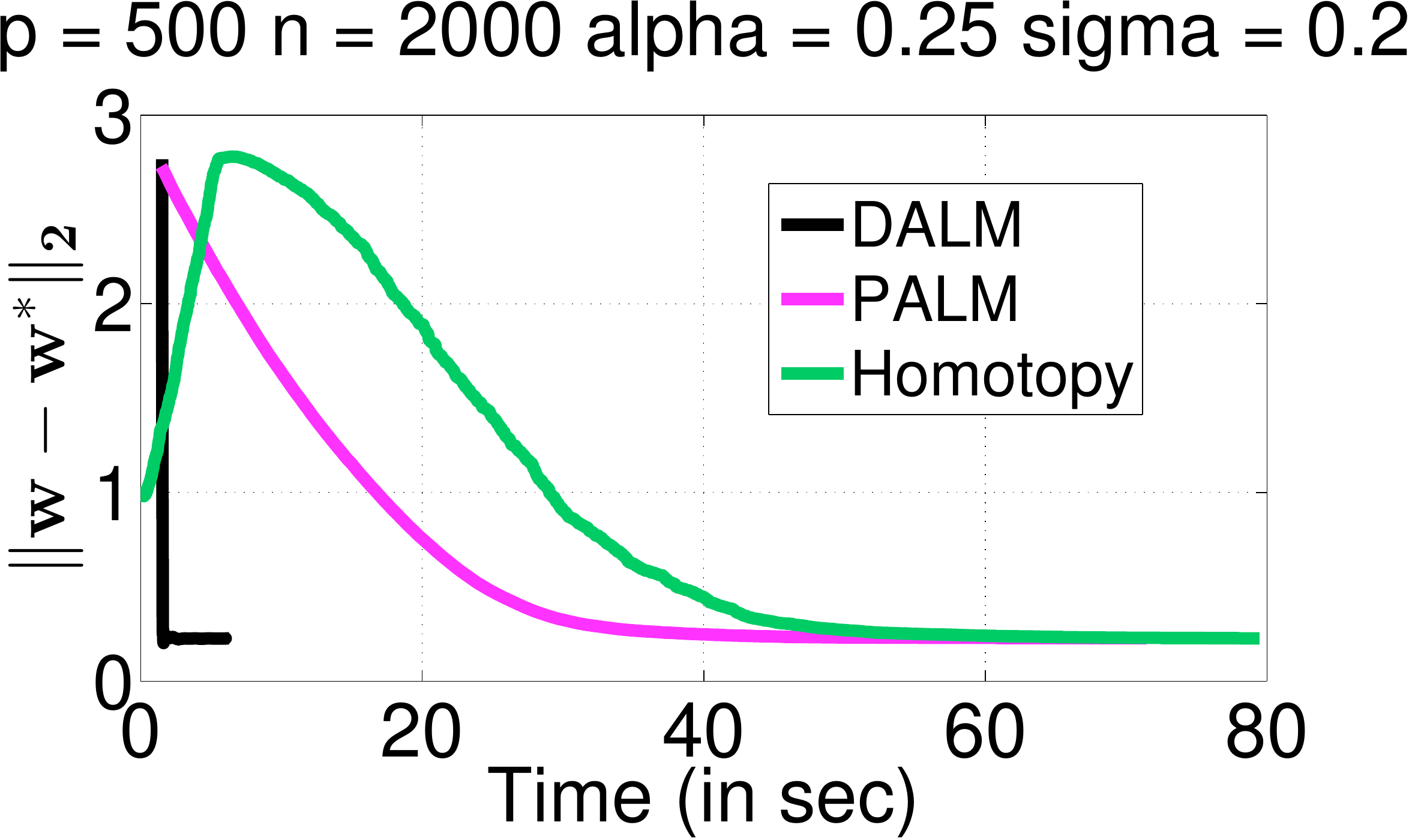}
	}
\hspace{2ex}
\caption{\small (a), (b), (c) Variation of recovery error with varying $p, \sigma$ and $n$. \alg was found to outperform DALM-$L_1$ in all these settings. (d) Recovery error as a function of runtime for various state-of-the-art $L_1$ solvers as indicated in \cite{YangGZSM12}.}
\label{fig:app}
\end{figure}
\end{document}